\documentclass{article}

\usepackage{arxiv}

\usepackage[utf8]{inputenc} 
\usepackage[T1]{fontenc}    
\usepackage{hyperref}       
\usepackage{url}            
\usepackage{booktabs}       
\usepackage{amsfonts}       
\usepackage{nicefrac}       
\usepackage{microtype}      
\usepackage{lipsum}		
\usepackage{graphicx}
\usepackage{doi}
\usepackage{microtype}
\usepackage{subfigure}
\usepackage{booktabs} 
\usepackage{breqn}

\usepackage[ruled,vlined]{algorithm2e}
\usepackage[round]{natbib}

\DeclareMathOperator{\Tr}{Tr}
\usepackage{float}

\usepackage{amsmath}
\usepackage{amsthm}
\usepackage{amssymb}
\usepackage{multirow}
\usepackage{makecell}
\setlength{\tabcolsep}{5pt}

\newtheorem{theorem}{Theorem}[section]

\newtheorem{proposition}{Proposition}[section]
\newtheorem{remark}{Remark}[section]
\newtheorem{definition}{Definition}[section]
\RequirePackage{algorithmic}

\newcommand{\RNum}[1]{\uppercase\expandafter{\romannumeral #1\relax}}

\newcommand\inv[1]{#1\raisebox{1.15ex}{$\scriptscriptstyle-\!1$}}

\usepackage[LGR,T1]{fontenc}
\usepackage[utf8]{inputenc}   


\usepackage{xr}
\makeatletter
\newcommand*{\addFileDependency}[1]{
  \typeout{(#1)}
  \@addtofilelist{#1}
  \IfFileExists{#1}{}{\typeout{No file #1.}}
}
\makeatother

\newcommand*{\myexternaldocument}[1]{%
    \externaldocument{#1}%
    \addFileDependency{#1.tex}%
    \addFileDependency{#1.aux}%
}
\myexternaldocument{supplement}

\title{Hierarchical Gaussian Processes with Wasserstein-2 Kernels}


\author{
 Sebastian G. Popescu \\
  Imperial College London\\
  \texttt{s.popescu16@imperial.ac.uk} \\
   \And
  David J. Sharp \\
  Imperial College London \\
  \texttt{david.sharp@imperial.ac.uk} \\
  \And
  James H. Cole \\
  University College London \\
  \texttt{james.cole@ucl.ac.uk} \\
  \AND
  Ben Glocker \\
  Imperial College London \\
  \texttt{b.glocker@imperial.ac.uk} \\
}

\date{}


\hypersetup{
pdftitle={A template for the arxiv style},
pdfsubject={q-bio.NC, q-bio.QM},
pdfauthor={David S.~Hippocampus, Elias D.~Striatum},
pdfkeywords={First keyword, Second keyword, More},
}

\DeclareUnicodeCharacter{2212}{-}

\begin{document}
\maketitle

\begin{abstract}
    Stacking Gaussian Processes severely diminishes the model's ability to detect outliers, which when combined with non-zero mean functions, further extrapolates low non-parametric variance to low training data density regions. We propose a hybrid kernel inspired from Varifold theory, operating in both Euclidean and Wasserstein space. We posit that directly taking into account the variance in the computation of Wasserstein-2 distances is of key importance towards maintaining outlier status throughout the hierarchy. We show improved performance on medium and large scale datasets and enhanced out-of-distribution detection on both toy and real data.
\end{abstract}

\keywords{Gaussian Processes \and Out-of-distribution Detection \and Uncertainty Quantification}

\section{Introduction}

Gaussian Processes (GP) are data-efficient Bayesian non-parametric models that offer calibrated uncertainty quantification and are robust to overfitting, recently finding applicability in data-scarce domains \citep{timonen2019lgpr, wang2020statistical} or where uncertainty is of utmost importance \citep{chen2014gaussian}. Deep Gaussian Processes (DGP) \citep{damianou2013deep} are multi-layered generalizations of GPs that share data-efficiency learning capabilities and show enhanced representational capacity. However, the reliability of uncertainties propagated throughout the hierarchy is up for debate. Recent work \citep{ustyuzhaninov2019compositional} has questioned the uncertainties present in the hidden layers of DGP, showing that approximate inference schemes using factorized variational Gaussian distributions result in all but the last GP collapsing to deterministic transformations in the case of noise-free data. Such pathological behaviour should be avoided as it undermines the utility of multi-layered GP. We propose desideratas for useful properties that a kernel must have in a DGP and based on this we introduce an alternative kernel operating in both Euclidean and Wasserstein-2 space, showcasing that the explicit distance between multivariate Gaussians as present in hidden layers is of vital importance towards correctly propagating non-parametric uncertainty forward. Prior work \citep{bachoc2017gaussian, bachoc2018gaussian, thi2019distribution} has only focused on shallow GPs operating on inputs which do not have an explicit distribution, thus using Wasserstein-2 based kernels to take advantage of the distributional nature of the input. We did not find any other work that expanded the concept of Wasserstein-2 based kernels to DGPs. In the remainder of this section we introduce two main motivations behind this paper.

\paragraph{Distributional Uncertainty.}
\cite{kendall2017uncertainties} propose a dichotomy of classifying sources of uncertainty into aleatoric or epistemic, pending on their reducibility status. More specifically, aleatoric uncertainty is caused by inherent randomness in the data, hence it cannot be reduced. The classical example of coin flipping is the most obvious one to give, even though one can also argue that coin flipping is deterministic given a complex system taking into consideration strength of coin tosser, initial side, wind conditions and physics of coins tossing. Conversely, epistemic uncertainty refers to the epistemic state of the agent/model, or more colloquially the ignorance with respect to the current state. This ignorance can be reduced by feeding the agent/model with more information/data. Besides the dichotomy consisting of aleatoric and epistemic uncertainty, reliably highlighting certain inputs which have undergone a domain shift \citep{lakshminarayanan2017simple} or out-of-distribution (OOD) samples \citep{hendrycks2016baseline} has garnered a lot of interest in the past years. Succinctly, the aim is to measure the degree to which a model knows when it does not know, or more precisely if a network trained on a specific dataset is evaluated at testing time on a completely different dataset (potentially from a different modality or another application domain), then the expectation is that the network should output high predictive uncertainty on this set of data points that are very far from the training data manifold. We remind ourselves that epistemic uncertainty can be reduced by adding more data. By this logic, epistemic uncertainty cannot be reduced outside the data manifold of our dataset since we don't add data points which do not stem from the same data generative pipeline (this is not true in the case of OOD detection models which explicitly use OOD samples during training/testing \citep{liang2017enhancing, hafner2020noise}). Hence, epistemic uncertainty can only be reduced inside the data manifold and should be zero outside the data manifold (under the assumption that the model is \emph{distance-aware}, a concept subsequently introduced). Conversely, our choosen measure for OOD detection should grow outside the data manifold and be close to or 0 inside the data manifold. \cite{malinin2018predictive} introduced for the  first time the separation of total uncertainty into three components: epistemic, aleatoric and distributional uncertainty. To make the distinction clearer, the authors argue that aleatoric uncertainty is a "known-unknown", whereby the model confidently states that an input data point is hard to classify (class overlap). Contrary, distributional uncertainty is an "unknown-unknown" due to the fact that the model is unfamiliar with the input space region that the test data comes from, thereby not being able to make confident predictions.

\paragraph{Do Hierarchical GPs reliably propagate distributional uncertainty?} Since their inception, research on deep GPs has predominantly focused on enhancing their approximate inference framework. Little attention has been paid so far in how deep GPs fare on OOD detection. \cite{domingues2018deep} have proposed an autoencoder architecture with both encoder and decoder governed by deep GPs to detect outliers. However, this line of inquiry stems from reconstruction-based methods for outlier detection \citep{baur2021autoencoders}, whereby at training time an autoenconder is trained on normative data, subsequently at testing time the model is presented with OOD samples. The differences between reconstruction and OOD sample represent the regions in input space which are anomalous. Conversely, in our line of research we are interested in accurate predictive models which also have OOD detection capabilities. Deep GPs have thus so far not been tested if they are reliable OOD detection models. This will represent a major focus of this paper.

\paragraph{Contributions}
To summarize, this paper makes the following advancements:
\begin{itemize}
    \item We provide empirical evidence that as the number of inducing point increases, the non-parametric variance in hidden layers collapses in the case of zero-mean function DGP.
    \item We show that using Principal Component Analysis (PCA) mean functions solves this problem, but this procedure induces low variance perpendicular to the eigenvectors in the case of encoding down high-dimensional data.
    \item We derive theoretical requirements for DGPs to propagate outliers with high total uncertainty, respectively high non-parametric variance throughout the hierarchy.
    \item We propose a variant of DGPs that operates on both Euclidean and Wasserstein-2 space. We demonstrate that this formulation guarantees higher non-parametric variance for OOD points compared to in-distribution points, alongside increased chances of not suffering from \emph{feature-collapse}. 
    \item We empirically show enhanced out-of-distribution detection capabilities, alongside improved test time results on a wide range of datasets for our proposed model.
\end{itemize}

\section{Background}

In this section we introduce the necessary theoretical
background needed for this paper.

\subsection{Primer on Gaussian Processes} \label{sec:intro_gp}

A stochastic process assigns a random variable $f(x)$ to every element of an index set $\mathbb{X}$, more precisely $f = \{ f(x) \}_{x \in \mathbb{X}}$. Stochastic processes can be defined via their finite dimensional marginal distributions, more concretely the stochastic process can be defined as a random function $f:\mathbb{X} \to \mathbb{Y}$, with any finite sequence $x_{1:n} = \{x_{1}, \cdots, x_{n} \}$ having a joint marginal distribution over their values $y_{1:n} = \{y_{1}, \cdots, y_{n} \} = \{f(x_{1}), \cdots, f(x_{n}) \}$. Stochastic processes must respect two key conditions from the Kolmogorov Extension Theorem, which we now outline. Consider finite sequence $x_{1:n} \in \mathbb{X}$ with corresponding probability measure $p_{x_{1:n}}$ on $\left(\mathbb{R}^{D}\right)^{n}$.
\paragraph{Exchangeability.} For all permutations $\pi$ of $x_{1:n} = \{x_{1}, \cdots, x_{n} \}$ we impose $p_{x_{1:n}}(y_{1:n}) = p_{\pi\left(x_{1:n}\right)}(\pi\left(y_{1:n}\right))$ which translates into invariance of the joint distribution with respect to permutations of elements in any given finite sequence.

\paragraph{Consistency.} $p_{x_{1:m}}(y_{1:m}) = \int p_{x_{1:n}}(y_{1:n}) dy_{m+1:n}$ which ensures that if we marginalize a subset of elements, the remainder will remain unchanged.

Random functions are intrinsically complex mathematical objects with a Gaussian process representing the simplest random function due to its complete characterization by its mean and covariance functions \citep{pugachev2013theory}. A Gaussian Process can be seen as a generalization of multivariate Gaussian random variables to infinite sets. We define this statement in more detail now. We consider $f(x)$ to be a stochastic field, with $x \in \mathbb{R}^{D}$ and we define $m(x) = \mathbb{E}\left[ f(x)\right]$ and $C(x_{i}, x_{j}) = Cov\left[f(x_{i}), f(x_{j}) \right]$. We denote a Gaussian Process (GP) $f(x)$ as:
\begin{equation}
    f(x) \sim \mathcal{GP}\left(m(x), C\left(x_{i}, x_{j}\right) \right)
\end{equation}
Throughout this paper, we will generally consider zero-mean processes, hence our GPs are strictly characterized by their covariance functions. The latter have the condition to generate non-negative-definite covariance matrices, more specifically they have to satisfy:
\begin{equation}
    \sum_{i,j}a_{i}a_{j}C\left( x_{i}, x_{j} \right) \geq 0
\end{equation}
for any finite set $\{x_{1}, \cdots, x_{n} \}$ and any real valued coefficients $\{a_{1}, \cdots, a_{n} \}$. Throughout this paper we will only consider second-order stationary processes which have constant means and $Cov\left[f(x_{i}), f(x_{j}) \right] = C\left( \| x_{i} - x_{j} \| \right)$. We can see that such covariance functions are invariant to translations. 

The squared exponential/radial basis function kernel is a commonly used stationary kernel, which has the following form:
\begin{equation}
    k^{SE}(x_{i},x_{j}) = \sigma^{2} \exp{\left[\sum_{d=1}^{D}-\frac{\left(x_{i,d} - x_{j,d} \right)^{2}}{l^{2}} \right]}
\end{equation}
, where we have written its definition in the anisotropic case. Intuitively, the lengthscale values $\{l_{1}^{2}, \cdots, l_{D}^{2} \}$ represent the strength along a particular dimension of input space by which successive values are strongly correlated with correlation invariably decreasing as the distance between points increases. Such a covariance function has the property of Automatic Relevance Determination (ARD) \citep{neal2012bayesian}. However, the usage of ARD to determine the relative importance of a given feature in comparison to others has been recently challenged \citep{paananen2019variable}. Lastly, the kernel variance $\sigma^{2}$ controls the variance of the process, more specifically the amplitude of function samples. 

A GP has the following joint distribution over finite subsets $\mathbb{X}_{1} \in \mathbb{X}$ with function values $f(X_{1})$. Analogously for $\mathbb{X}_{2}$, with their union being denoted as $x = \{x_{1}, \cdots, x_{n} \}$.
\begin{equation}
    p\begin{pmatrix} 
        f(x_{1})     \\
        f(x_{2})
        \end{pmatrix} = \mathcal{N}\left[
        \begin{pmatrix} 
            m(x_{1})     \\
            m(x_{2})
        \end{pmatrix}
        ,
        \begin{pmatrix} 
            k(x_{1},x_{1}),  k(x_{1},x_{2})   \\
            k(x_{2},x_{1}),  k(x_{2},x_{2})
        \end{pmatrix}        
        \right]
\end{equation}

The following observation model is used:
\begin{equation}
    p(y|f,x) = \prod_{i=1}^{N}p(y_{i} \mid f(x_{i}))
\end{equation}
, where given a supervised learning scenario, the dataset $D = \{x_{i},y_{i} \}_{i=1,\cdots,n}$ can be shorthand denoted as $D = \{x, y\}$. In the case of probabilistic regression, we make the assumption that the noise is additive, independent and Gaussian, such that the latent function $f(x)$ and the observed noisy outputs $y$ are defined by the following equation:
\begin{equation}
    y_{i} = f(x_{i}) +\epsilon_{i} \text{, where }~ \epsilon_{i} \sim \mathcal{N}\left(0, \sigma^{2}_{noise} \right)
\end{equation}

To train a GP for regression tasks, one performs Marginal Likelihood Maximization of Type 2 over the following equation:
\begin{equation}
    p(y) = \mathcal{N}\left(y \mid m, K+\sigma^{2}_{noise}\mathbb{I}_{n}\right) 
\end{equation}
by treating the kernel hyperparameters as point-mass. In the case of priors being placed on the hyperparameters, Markov Chain Monte Carlo (MCMC) methods can be used \citep{lalchand2020approximate}.

We are interested in finding the posterior $p\left(f(x^{*}) \mid y \right)$ since the goal is to predict for unseen data points $x^{*}$ which are different than the training set. We know that the joint prior over training and testing set latent functions is given by:
\begin{equation}
    p\begin{pmatrix} 
        f(x)     \\
        f(x^{*})
        \end{pmatrix} = \mathcal{N}\left[
        \begin{pmatrix} 
            m(x)     \\
            m(x^{*})
        \end{pmatrix}
        ,
        \begin{pmatrix} 
            k(x,x) &  k(x,x^{*})   \\
            k(x^{*},x) &  k(x^{*},x^{*})
        \end{pmatrix}        
        \right]
\end{equation}
and using marginalization properties of multivariate normals we can then write the following : $f(x^{*}) \sim \mathcal{N}\left(m(x^{*}), k(x^{*}, x^{*}) \right)$. With this in mind, and also reiterating that the marginal likelihood is also a multivariate normal given by $y \sim \mathcal{N}\left(m(x), k(x,x,) + \sigma^{2}_{noise}\mathbb{I}_{n} \right)$, we can then write their joint prior since both of them are Gaussian:
\begin{equation}
    p\begin{pmatrix} 
        y     \\
        f(x^{*})
        \end{pmatrix} = \mathcal{N}\left[
        \begin{pmatrix} 
            m(x)     \\
            m(x^{*})
        \end{pmatrix}
        ,
        \begin{pmatrix} 
            k(x,x) + \sigma^{2}_{noise}\mathbb{I}_{n} &  k(x, x^{*})   \\
            k(x^{*},x) &  k(x^{*},x^{*})
        \end{pmatrix}        
        \right]
\end{equation}
Now we can simply apply the conditional rule for multivariate Gaussians to obtain:
\begin{align}
    p\left(f(x^{*}) \mid y \right) &= \mathcal{N}( f(x^{*}) \mid m(x^{*}) +K_{f^{*}f}\left[K_{ff}+\sigma^{2}_{noise}\mathbb{I}_{n}\right]^{-1}\left[ y - m(x)\right], \\ & \nonumber \hspace{1cm} K_{f^{*}f^{*}} - K_{f^{*}f}\left[K_{ff}+\sigma^{2}_{noise}\mathbb{I}_{n}\right]^{-1}K_{ff^{*}})
\end{align}

The usage of GP in real-life datasets is hindered by the $\mathbb{O}(n^{3})$ time, $\mathbb{O}(n^{2})$ memory for training, where $n$ is the number of data points in the training set. In the next subsection we will see how to avert having to incur these expensive computational budgets.

\subsection{Variational Free Energy approximation} \label{sec:svgp_background}

\cite{titsias2009variational} introduced the first variational lower bound comprising a probabilistic regression model over inducing points. More specifically, the authors applied variational inference (VI) in an augmented probability space that comprises training set latent function values $F$ alongside inducing point latent function values $U$:
\begin{equation}
    p(y,F,U) = p(y \mid F)p(F \mid U)p(U)
\end{equation}
, where the conditional of the training set latent functions based on inducing points is taken to be exact $p(F \mid U) = \mathcal{N}\left(F \mid K_{fu}K_{uu}^{-1}U, K_{ff} - K_{fu}K_{uu}^{-1}K_{uf} \right)$. 

In terms of doing exact inference in this new model, respectively computing the posterior $p(f|y)$ and the marginal likelihood $p(y)$, it remains unchanged even with the augmentation of the probability space by $U$ as we can marginalize $p(F) = \int p(F,U)~dU$ due to the marginalization properties of Gaussian processes. Succinctly, $p(F)$ is not changed by modifying the values of $U$, even though $p(F \mid U)$ and $p(U)$  do indeed change. This translates into the fundamental difference between variational parameters $U$ and hyperparameters of the model $\{\sigma^{2}_{noise}, \sigma^{2},l_{1}^{2}, \cdots, l_{D}^{2} \}$, whereby the introduction of more variational parameters does not change the fundamental definition of the model before probability space augmentation.

In \cite{titsias2009variational}, the authors have derived a collapsed variational lower bound which enties using the entire training set to derive the lower bound to the log marginal likelihood, which is used for training the hyperparameters $\{\sigma^{2}_{noise}, \sigma^{2}, l_{1}^{2}, \cdots, l_{D}^{2}\}$ of the GP, respectively for deriving the optimal posterior for $U$. Doing exact inference in the VFE formulation of sparse GP (SGP) is of sound theoretical interest, however it becomes infeasible for even medium sized datasets. In this subsection we will see how we can train VFE formulated SGP using minibatches by keeping an explicit representation over $q(U)$.

Stochastic Variational Inference (SVI) \citep{hoffman2013stochastic} enables the application of VI for extremely large datasets, by virtue of performing inference over a set of global variables, which induce a factorisation in the observations and latent variables, such as in the Bayesian formulation of Neural Networks with distributions (implicit or explicit) over matrix weights. GP do no exhibit these properties, but by virtue of the approximate prior over testing and training latent functions for SGP approximations with inducing points $U$, which we remind here for the VFE scenario:
\begin{equation}
    p(f,f^{*}) \approx q(f,f^{*}) = \int p(f\mid U) p(f^{*}\mid U) p(U)~dU
\end{equation}
This translates into a fully factorized model with respect to observations at training and testing time, conditioned on the global variables $U$. One can easily see that by integrating out $U$ in the Titsias bound, this key property for SVI dissipates. In the remainder of this subsection, we will derive a lower bound to the Titsias bound that satisfies the SVI requirements.

Our goal is to approximate the true posterior distribution $p(F,U \mid y) = p(F \mid U,Y)p(U \mid Y)$ by introducing a variational distribution $q(F,U)$ and minimizing the Kullback-Leibler divergence:
\begin{equation}
    KL\left[ q(F,U) \| p(F,U\mid y)\right] = \int q(F,U) \log \frac{q(F,U)}{p(F,U\mid y)}~dF~dU
\end{equation}
where the approximate posterior is factorized as $q(F,U) = p(F\mid U)q(U)$ and $q(U)$ is an unconstrained variational distribution over $U$. Following the standard VI framework we need to maximize the following variational lower bound on the log marginal likelihood:
\begin{align}
    \log p(y) &\geq \int p(F\mid U)q(U) \log\frac{p(y \mid F)p(F\mid U)p(U)}{p(F\mid U)p(U)}~dF~dU \\
    &\geq \int q(U) \left[\int \log p(Y\mid F)p(F\mid U)~dF + \log\frac{p(U)}{q(U)}\right]~dU
\end{align}
We can now solve for the integral over $F$:
\begin{align}
    \int \log p(y|F) p(F|U)~dF &=  \mathbb{E}_{p(F|U)} \left[-\frac{n}{2}\log (2\pi \sigma^{2}_{noise}) - \frac{1}{2\sigma^{2}_{noise}}Tr\left[yy^{\top} - 2yF^{\top} + FF^{\top}\right] \right] \\
    &= -\frac{n}{2}\log (2\pi \sigma^{2}_{noise}) - \frac{1}{2\sigma^{2}_{noise}}Tr[yy^{\top}  - 2y\left(K_{fu}K_{uu}^{-1}U\right)^{\top} + \\ & \nonumber \left(K_{fu}K_{uu}^{-1}U\right)\left(K_{fu}K_{uu}^{-1}U\right)^{\top} + K_{ff} - Q_{ff} ] \\
    &= \log \mathcal{N}\left(y|K_{fu}K_{uu}^{-1}U, \sigma^{2}_{noise}\mathbb{I}_{n} \right) - \frac{1}{2\sigma^{2}_{noise}}Tr\left[ K_{ff} - Q_{ff} \right]
\end{align}
We can now rewrite our variational lower bound as follows:
\begin{equation}
    \log p(y) \geq \int q(U) \log \frac{\mathcal{N}\left(y\mid K_{fu}K_{uu}^{-1}U, \sigma^{2}_{noise}\mathbb{I}_{n} \right) p(U)}{q(U)}~dU - \frac{1}{2\sigma^{2}_{noise}}Tr\left[ K_{ff} - Q_{ff} \right]
\end{equation}
The variational posterior is explicit in this case, respectively $q(F,U) = p(F \mid U;X,Z)q(U)$, where $q(U)= \mathcal{N}(U\mid m_{U},S_{U})$. Here, $m_{U}$ and $S_{U}$ are free variational parameters. Due to the Gaussian nature of both terms we can marginalize $U$ to arrive at $q(F) = \int p(F\mid U)q(U) = \mathcal{N}(F\mid \tilde{U}(x), \tilde{\Sigma}(x))$, where:
\begin{align}
    \tilde{U}(x) &= K_{fu}K_{uu}^{-1}m_{U} \label{eqn:posterior_mean_svgp} \\
    \tilde{\Sigma}(x) &= K_{ff} - K_{fu}K_{uu}^{-1} \left[ K_{uu}-S_{U} \right] K_{uu}^{-1}K_{uf} \label{eqn:posterior_variance_svgp}
\end{align}
The lower bound can be re-expressed as follows:
\begin{align}
    \log p(y) &\geq \int q(U) \log \mathcal{N}\left(y|K_{fu}K_{uu}^{-1}U, \sigma^{2}\mathbb{I} \right) ~dU -KL\left[q(U) \| p(U)\right] -  \frac{1}{2\sigma^{2}_{noise}}Tr\left[ K_{ff} - Q_{ff} \right] \\
    &\geq \mathcal{N}\left(y|K_{fu}K_{uu}^{-1}m_{U}, \sigma^{2}_{noise}\mathbb{I}_{n} \right) -  \frac{1}{2\sigma^{2}_{noise}}Tr\left[ K_{fu}K_{uu}^{-1}S_{U}K_{uu}^{-1}K_{uf} \right]  \\ & \nonumber \hspace{1cm} -  \frac{1}{2\sigma^{2}_{noise}}Tr\left[ K_{ff} - Q_{ff} \right] - KL\left[q(U) \| p(U)\right] = \mathcal{L}_{SVGP}
\end{align}
, where we can easily see that the last equation factorized with respect to individual observations. This lower variational bound will be denoted as the Sparse Variational GP (SVGP) or uncollapsed SVGP bound and will constitute a stepping-stone in training our proposed model.

\subsection{Approximation of Gaussian Processes with noisy inputs}

In subsection \ref{sec:intro_gp} we have seen that the predictive distribution of the output at a unseen noise-free input is Gaussian. In \cite{girard2004approximate}, the authors propose several approximation techniques devising approximations of the non-Gaussian predictive distribution when the input is noisy (or a random variable in a more general scenario as we shall later see). In this subsection we will attempt to make a summary of aforementioned work, keeping in mind that this approximation techniques will play a key role in analyzing properties of deep GP and how they propagate uncertainty through the hierarchy.

We now consider the case where the input is corrupted by noise $\epsilon_{x^{*}} \sim \mathcal{N}\left( 0, \Sigma_{x^{*}}\right)$ such that $x^{*} = \mu_{x^{*}} + \epsilon_{x^{*}}$. Hence, our aim is to make predictions at $x^{*} \sim \mathcal{N}\left( \mu_{x^{*}}, \Sigma_{x^{*}} \right)$, which involves the following integral:
\begin{equation}
    p\left(y^{*} \mid D, \mu_{x^{*}}, \Sigma_{x^{*}} \right) = \int p\left(y^{*} \mid D, x^{*} \right) p\left(x^{*} \mid \mu_{x^{*}}, \Sigma_{x^{*}} \right)~dx^{*}
\end{equation}
where $p\left(y^{*} \mid D, x^{*} \right) = \frac{1}{\sqrt{2\pi \sigma^{2}(x^{*})}} \exp{\left[ -\frac{1}{2}\frac{\left(y^{*} - \mu(x^{*}) \right)^{2}}{\sigma^{2}(x^{*})} \right]}$, respectively $\mu(x^{*}) = K_{f^{*}f}K_{ff}^{-1}y$ and $\sigma^{2}(x^{*}) = K_{f^{*}f^{*}} - Q_{f^{*}f^{*}}$. In this subsection, for notation purposes, we will consider $K_{x_{i},x_{j}} = K^{SE}(x_{i}, x_{j}) + \delta_{i,j}\sigma^{2}_{noise}$.
One can notice that $p\left(y^{*} \mid D, x^{*} \right)$ is a non-linear function of $x^{*}$, so we cannot integrate out $x^{*}$. We will now consider approximations to the solution of this integral.

\paragraph{\RNum{1} Monte Carlo Integration} uses random sampling of a function to numerically compute an estimate of its integral.
\begin{equation}
    p^{MC}\left( y^{*} \mid D, \mu_{x^{*}}, \Sigma_{x^{*}} \right) \approx \frac{1}{T}\sum\limits_{t=1}^{T} p\left( y^{*} \mid D, x_{t}^{*} \right) = \frac{1}{T}\sum\limits_{t=1}^{T}\mathcal{N}\left(y^{*} \mid \mu(x^{*}), \sigma^{2}(x^{*}) \right)
\end{equation}
It will result in a mixture of T Gaussians with equal mixing weights and as $T \to \infty$ we have $p^{MC}\left( y^{*} \mid D, \mu_{x^{*}}, \Sigma_{x^{*}} \right) \approx p\left( y^{*} \mid D, \mu_{x^{*}}, \Sigma_{x^{*}} \right)$. We will later see that the case $T=1$ will be default method to computing this type of integrals within the framework of Deep Gaussian Processes.

\paragraph{\RNum{2} Gaussian Approximation} involves approximating the non-Gaussian predictive equation with a Gaussian:
\begin{equation}
    p\left( y^{*} \mid D, \mu_{x^{*}}, \Sigma_{x^{*}} \right) = \mathcal{N}\left(  y^{*} \mid m(u_{x^{*}}, \Sigma_{x^{*}}), v(u_{x^{*}}, \Sigma_{x^{*}}) \right) \label{eqn:gaussian_approximation}
\end{equation}
We can now explicitly write the first two moments:
\begin{align}
    m(u_{x^{*}}, \Sigma_{x^{*}}) &= \int y^{*} \left[ \int p\left(y^{*} \mid D, x^{*} \right) p\left( x^{*} \mid u_{x^{*}}, \Sigma_{x^{*}}\right) dx^{*}\right] dy^{*} \\
    &= \int \left[ \int y^{*}  p\left(y^{*} \mid D, x^{*} \right) dy^{*} \right] p\left( x^{*} \mid u_{x^{*}}, \Sigma_{x^{*}}\right) dx^{*} \\
    &= \int \mu(x^{*}) p\left( x^{*} \mid u_{x^{*}}, \Sigma_{x^{*}}\right) dx^{*} \\
    &= \mathbb{E}_{x^{*}}\left[ \mu(x^{*})\right] \label{eqn:gaussian_approximation_mean}
\end{align}
\begin{align}
    v(u_{x^{*}}, \Sigma_{x^{*}}) &= \int \left(y^{*}\right)^{2} \left[ \int p\left(y^{*} \mid D, x^{*} \right) p\left( x^{*} \mid u_{x^{*}}, \Sigma_{x^{*}}\right) dx^{*}\right] dy^{*} -  m(u_{x^{*}}, \Sigma_{x^{*}})^{2} \\
    &= \int \left[ \int \left(y^{*}\right)^{2}  p\left(y^{*} \mid D, x^{*} \right) dy^{*} \right] p\left( x^{*} \mid u_{x^{*}}, \Sigma_{x^{*}}\right) dx^{*}  -  m(u_{x^{*}}, \Sigma_{x^{*}})^{2}  \\
    &= \int \left[\sigma^{2}(x^{*}) + \mu(x^{*})^{2} \right] p\left( x^{*} \mid u_{x^{*}}, \Sigma_{x^{*}}\right) dx^{*}  -  m(u_{x^{*}}, \Sigma_{x^{*}})^{2} \\
    &=  \mathbb{E}_{x^{*}}\left[\sigma^{2}(x^{*})\right] +  \mathbb{E}_{x^{*}}\left[\mu(x^{*})^{2} \right]  -  m(u_{x^{*}}, \Sigma_{x^{*}})^{2} \\
    &=  \mathbb{E}_{x^{*}}\left[\sigma^{2}(x^{*})\right] +  V_{x^{*}}\left[\mu(x^{*}) \right] \label{eqn:gaussian_approximation_variance} 
\end{align}
, where we now consider just predictive distributions over a single data point $x^{*}$, hence we update predictive equations to correspond to the univariate prediction case:
\begin{align}
    \mu(x^{*}) &= K_{x^{*}f}K_{ff}^{-1}y \\
    \sigma^{2}(x^{*}) &= K_{x^{*}x^{*}} - \sum\limits_{i,j=1}^{N} K_{ij}^{-1}K(x^{*},x_{i})K(x^{*},x_{j})
\end{align}

To recap, we are interested in finding the first two moments of the Gaussian approximation in equation \eqref{eqn:gaussian_approximation}, which we now write explicitly:
\begin{align}
    m(u_{x^{*}}, \Sigma_{x^{*}}) &= \mathbb{E}_{x^{*}} \left[ K_{x^{*}f}K_{ff}^{-1}y  \right]\\
    v(u_{x^{*}}, \Sigma_{x^{*}}) &= \mathbb{E}_{x^{*}} \left[ K_{x^{*}x^{*}} - \sum\limits_{i,j=1}^{N} K_{ij}^{-1}K(x^{*},x_{i})K(x^{*},x_{j})  \right] \\ & \nonumber 
    + \mathbb{E}_{x^{*}} \left[ \left[K_{ff}^{-1}y\right]_{i,:}\left[K_{ff}^{-1}y\right]_{j,:} K_{x^{*},x_{i}}K_{x^{*},x_{j}}  \right] \\ & \nonumber
    - m(u_{x^{*}}, \Sigma_{x^{*}})^{2}
\end{align}
, where $\left[K_{ff}^{-1}y\right]_{i,:}$ takes the i-th row. Therefore, we need to compute the following integrals:
\begin{align}
    \Psi = \mathbb{E}_{x^{*}}\left[K_{x^{*}, x^{*}} \right] &= \int K(x^{*},x^{*}) p(x^{*})~dx^{*} \\
    \Psi_{i} = \mathbb{E}_{x^{*}}\left[K_{x^{*}, x_{i}} \right] &= \int K(x^{*},x_{i}) p(x^{*})~dx^{*} \\
    \Psi_{i,j} = \mathbb{E}_{x^{*}}\left[K_{x^{*}, x_{i}} K_{x^{*}, x_{j}}\right] &= \int K(x^{*},x_{i})K(x^{*},x_{j}) p(x^{*})~dx^{*} 
\end{align}
These integrals will have an analytic solution in the case of specific kernels. We will first focus on the general case, whereby we devise approximations for any arbitrary kernel, subsequently calculating the exact moments for squared exponential kernels.

\subsubsection{Approximate Moments} \label{sec:approximate_moments_girard}

Moment approximations involves approximating the integrand by a Taylor expansion. 
\paragraph{General case.} To illustrate this method, we take a simple scenario where $x$ is a random variable such that $\mathbb{E}_{x}\left[ x \right] = \mu_{x}$ and $V_{x}\left[ x \right] = \Sigma_{x}$. For a well-behaved function $f$ with $y = f(x)$ and sufficiently small $\sigma_{x} = \sqrt{\Sigma_{x}}$ we can consider the Taylor expansion of $f(x)$ at $\mu_{x}$ up to second order to be an accurate representation of $f(x)$:
\begin{equation}
    y = f(x) \approx f(\mu_{x}) + \left( x - \mu_{x} \right) f^{'}(\mu_{x}) + \frac{1}{2} \left( x - \mu_{x} \right)^{2}f^{''}(\mu_{x})
\end{equation}
Using the Taylor expansion, we can calculate the first two moments as follows:
\begin{align}
    \mathbb{E}_{x}\left[ y \right] &\approx \mathbb{E}_{x} \left[f(\mu_{x})\right] + \mathbb{E}_{x} \left[ \left( x - \mu_{x} \right) f^{'}(\mu_{x})\right] + \mathbb{E}_{x} \left[ \frac{1}{2} \left( x - \mu_{x} \right)^{2}f^{''}(\mu_{x}) \right] \\ 
    &\approx \mathbb{E}_{x} \left[f(\mu_{x})\right] + \frac{1}{2} \Sigma_{x} f^{''}(\mu_{x}) \\
    V_{x}\left[ y \right] &\approx \mathbb{E}_{x}\left[ y^{2} \right] - \mathbb{E}_{x}\left[ y \right]^{2} \\
    &\approx f(\mu_{x})^{2} + \Sigma_{x}f^{'}(\mu_{x})^{2} + f(\mu_{x}) f^{''}(\mu_{x})\Sigma_{x} -  f(\mu_{x})^{2} - f(\mu_{x}) f^{''}(\mu_{x})\Sigma_{x} \\
    &\approx \Sigma_{x}f^{'}(\mu_{x})^{2}
\end{align}

\paragraph{GP with noisy inputs.} We now apply the above results to equations \eqref{eqn:gaussian_approximation_mean} and \eqref{eqn:gaussian_approximation_variance}, obtaining:
\begin{align}
    m(u_{x^{*}}, \Sigma_{x^{*}}) &= \mathbb{E}_{x^{*}}\left[ \mu(x^{*})\right] \\ 
    &= \mu(\mathbb{E}_{x^{*}}(x^{*})) +\frac{1}{2}V_{x^{*}}(x^{*}) \mu^{''}(\mathbb{E}_{x^{*}}\left[ x^{*} \right]) \label{eqn:girard_app_mean}\\
    v(u_{x^{*}}, \Sigma_{x^{*}}) &=  \mathbb{E}_{x^{*}}\left[\sigma^{2}(x^{*})\right] +  \mathbb{E}_{x^{*}}\left[\mu(x^{*})^{2} \right]  -  m(u_{x^{*}}, \Sigma_{x^{*}})^{2} \\
     &= \sigma^{2}(\mathbb{E}_{x^{*}}\left[ x^{*} \right] ) + V_{x^{*}}\left[\frac{1}{2}\sigma^{2}(\mathbb{E}_{x^{*}}\left[ x^{*}\right])^{''} + \mu^{'}(\mathbb{E}_{x^{*}}\left[ x^{*}\right])^{2} \right] \label{eqn:girard_app_var}
\end{align}

\subsubsection{Exact Moments} \label{sec:exact_moments_girard}

We can calculate the exact moments in the case of squared exponential kernels of the following form:
\begin{equation}
    K\left(x_{i}, x_{j} \right) = \sigma^{2}\exp{\left[-\frac{1}{2}\left[x_{i} - x_{j} \right]^{T} W^{-1} \left[x_{i} - x_{j} \right] \right]}
\end{equation}
, where in most cases $W = diag\left[l_{1}^{2}, \cdots, l_{D}^{2}\right]$.

We remind ourselves that we need to compute the following equations:
\begin{align}
    m\left( \mu_{x^{*}}, \Sigma_{x^{*}} \right) &= \sum\limits_{i=1}^{N} \left[y K_{ff}^{-1} \right]_{i,:} \Psi_{i} \\
    v\left( \mu_{x^{*}}, \Sigma_{x^{*}} \right) &= \Psi - \sum\limits_{i,j=1}^{N} \left[ K_{ij}^{-1} - \left[y K_{ff}^{-1} \right]_{i,:}\left[y K_{ff}^{-1} \right]_{j:} \right] \Psi_{ij} -  m\left( \mu_{x^{*}}, \Sigma_{x^{*}} \right)^{2}
\end{align}
We now need to derive the analytic formulas for $\Psi$, $\Psi_{i}$ and $\Psi_{ij}$. We will make use of the following identity.
\begin{definition}[Product of Multivariate Gaussians]
    For two multivariate Gaussian distributions, $\mathcal{N}\left(x \mid a, A \right)$ and $\mathcal{N}\left(x \mid b, B \right)$, their product can be expressed as follows:
    \begin{equation}
        zN\left(x \mid c, C \right) = \mathcal{N}\left(x \mid a, A \right)\mathcal{N}\left(x \mid b, B \right)
    \end{equation}
    where $c = C\left[A^{-1}a + B^{-1}b \right]$ and $C = \left[ A^{-1} + B^{-1} \right]^{-1}$
\end{definition}

Armed with this toolkit we can proceed:
\begin{align}
    \Psi &= \mathbb{E}_{x^{*}}\left[ K\left(x^{*}, x^{*} \right)\right] = \sigma^{2} \\
    \Psi_{i} &= \mathbb{E}_{x^{*}}\left[ K\left(x^{*}, x_{i} \right)\right] \\
    &= \int K\left(x^{*}, x_{i} \right) \mathcal{N}\left( x^{*} \mid \mu_{x^{*}}, \Sigma_{x^{*}}  \right) dx^{*} \\
    &= c \int \mathcal{N}\left( x^{*} \mid x^{i}, W \right) \mathcal{N}\left(x^{*} \mid \mu_{x^{*}}, \Sigma_{x^{*}}  \right) dx^{*} \\ 
    &= c \mathcal{N}\left(\mu_{x^{*}} \mid x_{i}, W + \Sigma_{x^{*}} \right) \\
    \Psi_{ij} &= \mathbb{E}_{x^{*}}\left[ K\left(x^{*}, x_{i} \right) K\left(x^{*}, x_{j} \right)\right] \\
    &= c^{2} \int \mathcal{N}\left(x^{*} \mid x^{i}, W \right) \mathcal{N}\left(x^{*} \mid x^{j}, W \right)   \mathcal{N}\left(x^{*} \mid \mu_{x^{*}}, \Sigma_{x^{*}}  \right) dx^{*} \\
    &= c^{2} \mathcal{N}\left(x_{i} \mid x_{j}, 2W \right) \int \mathcal{N}\left(x^{*} \mid \frac{x_{i} + x_{j}}{2}, \frac{W}{2}\right) \mathcal{N}\left(x^{*} \mid \mu_{x^{*}}, \Sigma_{x^{*}}  \right) dx^{*} \\
    &= c^{2} \mathcal{N}\left(x_{i} \mid x_{j}, 2W \right)  \mathcal{N}\left(\mu_{x^{*}} \mid \frac{x_{i} + x_{j}}{2}, \Sigma_{x^{*}} + \frac{W}{2} \right)
\end{align}
where $c = \left( 2\pi \right)^{\frac{D}{2}} \mid W \mid^{\frac{1}{2}} \sigma^{2}$. We can now write the full explicit form for the two exact moments:
\begin{align}
    m\left(\mu_{x^{*}}, \Sigma_{x^{*}} \right) &= \sum\limits_{i=1}^{N} \left[y K_{ff}^{-1} \right]_{i,:} c \mathcal{N}\left(\mu(x^{*}) \mid x_{i}, W + \Sigma_{x^{*}} \right) \\
    v\left(\mu_{x^{*}}, \Sigma_{x^{*}} \right) &= \sigma^{2} - c^{2} \sum\limits_{i,j=1}^{N} \left[y K_{ff}^{-1} \right]_{i,:}\left[y K_{ff}^{-1} \right]_{j,:}  \mathcal{N} \left(x_{i} \mid x_{j}, 2W \right) \\
    & \nonumber \hspace{1cm} \mathcal{N}\left(\mu_{x^{*}} \mid \frac{x_{i} + x_{j}}{2}, \Sigma_{x^{*}} + \frac{W}{2} \right) - m\left(\mu_{x^{*}}, \Sigma_{x^{*}} \right)^{2}
\end{align}
We follow \cite{quinonero2003prediction} and re-write the explicit first two moments as:
\begin{align}
    m\left(\mu_{x^{*}}, \Sigma_{x^{*}} \right) &= q^{T}K_{ff}^{-1}y \label{eqn:girard_exact_mean} \\
    v\left(\mu_{x^{*}}, \Sigma_{x^{*}} \right) &= \sigma^{2} + Tr\left[ \left(K_{ff}^{-1}y y^{T} K_{ff}^{-T} - K_{ff}^{-1}\right) Q\right] -  Tr\left[y^{T}K_{ff}^{-1}y \right] \label{eqn:girard_exact_var}
\end{align}
, where we have: 
\begin{align}
    q_{i} &= \mid W^{-1}\Sigma_{x^{*}} + \mathbb{I} \mid^{-\frac{1}{2}} \exp{\left[-\frac{1}{2} \left( \mu_{x^{*}} - x_{i} \right)^{T} \left(\Sigma_{x^{*}} + W \right)^{-1} \left( \mu_{x^{*}} - x_{i} \right) \right]} \\
    Q_{ij} &= \mid W^{-1}\Sigma_{x^{*}} + \mathbb{I} \mid^{-\frac{1}{2}} \exp\left[-\frac{1}{2} \left( \frac{x_{i} + x_{j}}{2}-\mu_{x^{*}}  \right)^{T} \left(\Sigma_{x^{*}} + \frac{1}{2}W \right)^{-1} \left( \frac{x_{i} + x_{j}}{2}-\mu_{x^{*}} \right) \right] \\ & \nonumber \hspace{1cm} \exp\left[-\frac{1}{2} \left( x_{i} - x_{j}  \right)^{T} \left( 2W \right)^{-1} \left( x_{i} - x_{j} \right) \right]
\end{align}

\subsection{Deep Gaussian Processes} \label{sec:dgp}

We can view DGPs as a composition of functions  
$f_{L}(x) = f_{L} \circ ... \circ f_{1}(x)$ with $f_{l} = \mathcal{GP}\left(m_{l}, k_{l}\left(\cdot, \cdot \right) \right)$.
Assuming a general likelihood function, we can write the joint prior as:
\begin{equation}
 p\left(y, \{f_{l} \}_{l=1}^{L}; X \right)=\underbrace{p(y|f_{L})}_{\text{likelihood}}\underbrace{\prod_{l=1}^{L} p(f_{l}|f_{l-1})}_{\text{prior}}
\end{equation}
with $p\left(f_{l} \mid f_{l-1}\right) \sim \mathcal{GP}\left(m_{l}(f_{l-1}), k_{l}\left(f_{l-1}, f_{l-1} \right) \right)$, where we use squared exponential kernels 
$k^{SE}_{l}(f_{l,i},f_{l,j}) = \sigma^{2}_{l} \exp{\left[\sum_{d=1}^{D_{l}}-\frac{\left(f_{l,i,d} - f_{l,j,d} \right)^{2}}{l^{2}_{l,d}} \right]}$, where $D_{l}$ represents the number of dimensions of $F_{l}$ and we introduce layer specific kernel hyperparameters $\theta_{l} = \{\sigma^{2}_{l}, l^{2}_{l,1}, \cdots, l^{2}_{l,D_{l}}\}$.

Analytically integrating this Bayesian hierarchical model is intractable as it requires integrating Gaussians present in a non-linear way. To enable faster inference over our model we can augment each layer $l$ with $M_{l}$ inducing locations $Z_{l-1}$, respectively inducing values $U_{l}$ resulting in the following augmented joint prior $p\left(y, \{f_{l} \}_{l=1}^{L}, \{U_{l} \}_{l=1}^{L}; X, \{Z_{l} \}_{l=0}^{L-1} \right)$:
\begin{equation}
 \underbrace{p(y|f_{L})}_{\text{likelihood}}\underbrace{\prod_{l=1}^{L} p(f_{l}|f_{l-1},U_{l};Z_{l-1})p(U_{l})}_{\text{prior}}
\end{equation}
 To perform stochastic variational inference \citep{hoffman2013stochastic} we introduce an approximate posterior: $q\left( \{U_{l} \}_{l=1}^{L}\right) = \prod\limits_{l=1}^{L} \mathcal{N}\left(U_{l} \mid m_{U_{l}}, S_{U_{l}} \right)$. Using a similar derivation as for sparse variational GPs (SVGP), we can arrive at the evidence lower bound:
\begin{align}
    \log{p(y)} &\geq \mathbb{E}_{q(\prod\limits_{l=1}^{L} q\left(f_{l} \mid f_{l-1} \right))}\left[\log{p\left( y \mid f_{L}\right)} \right] - \\ & \nonumber \hspace{1cm} \sum\limits_{l=1}^{L}KL\left[q(U_{l}) \| p(U_{l}) \right] = \mathcal{L}_{DGP}
\end{align}
, where $q\left(f_{l} \mid f_{l-1} \right) = \mathcal{N}\left(f_{l} \mid \tilde{U_{l}}(f_{l-1}), \tilde{\Sigma_{l}}(f_{l-1}) \right)$, respectively:
\begin{align}
    \tilde{U_{l}}(f_{l-1}) &= m_{l}(f_{l-1}) + K_{fu}K_{uu}^{-1}\left[m_{U_{l}} - m_{l}(Z_{l-1})\right] \label{eqn:svgp_posterior_mean} \\
    \tilde{\Sigma_{l}}(f_{l-1}) &= K_{ff} - K_{fu}K_{uu}^{-1}\left[K_{uu} - S_{U_{l}} \right]K_{uu}^{-1}K_{uf} \label{eqn:svgp_posterior_variance}
\end{align}
This composition of functions is approximated via Monte Carlo integration as in \cite{salimbeni2017doubly}.

\subsection{Wasserstein-2 distance kernels on Gaussian measures}

The Wasserstein space on $\mathbb{R}$ can be defined as the set $W_{2}(\mathbb{R})$ of probability measures on $\mathbb{R}$ with a finite moment of order two. We denote by $\Pi(\mu,\nu)$  the  set  of  all  probability measures $\Pi$ over the product set $\mathbb{R}\times\mathbb{R}$ with marginals $\mu$ and $\nu$, which are probability  distributions  in $W_{2}(\mathbb{R})$. The transportation cost between two measures $\mu$ and $\nu$ is defined as $T_{2}(\mu,\nu) = \inf_{\pi\in\Pi(\mu,\nu)}\int[x-y]^{2}d\pi(x,y)$. This transportation cost allows us to endow the set $W_{2}(\mathbb{R})$ with a metric by defining the quadratic Wasserstein distance between
$\mu$ and $\nu$ as $W_{2}(\mu,\nu) =T_{2}(\mu,\nu)^{1/2}$. More details on Wasserstein spaces and their links with optimal transport problems can be found in \cite{villani2008optimal}.

\begin{theorem}[Theorem \RNum{4}.1. in \cite{bachoc2017gaussian}] \label{thm:positive_definite}
Let $k_{W} : W_{2}(\mathbb{R}) \times W_{2}(\mathbb{R}) \rightarrow \mathbb{R}$ be the Wasserstein-2 RBF kernel defined as following:
\begin{equation}
k^{W_{2}}(\mu,\nu) = \sigma^{2} \exp \frac{-W_{2}^{2}(\mu,\nu)}{l^{2}}
\end{equation}
then $k^{W_{2}}(\mu,\nu)$ is a positive definite kernel for any $\mu,\nu \in  W_{2}(\mathbb{R})$, respectively $\sigma^{2}$ is the kernel variance, $l^{2}$ being the lengthscale. 
\end{theorem}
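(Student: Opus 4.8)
The plan is to reduce the claim to the classical fact that a Gaussian kernel on a Hilbert space is positive definite, by exhibiting an isometric embedding of $(W_2(\mathbb{R}), W_2)$ into a Hilbert space. The pivotal observation, special to the real line, is that one-dimensional optimal transport is solved by the monotone (quantile) coupling, so that for $\mu,\nu \in W_2(\mathbb{R})$ with cumulative distribution functions $F_\mu, F_\nu$ and generalized quantile functions $F_\mu^{-1}, F_\nu^{-1}$ on $[0,1]$ one has
\[
W_2^2(\mu,\nu) = \int_0^1 \left( F_\mu^{-1}(t) - F_\nu^{-1}(t) \right)^2 dt = \| F_\mu^{-1} - F_\nu^{-1} \|_{L^2([0,1])}^2 .
\]
I would establish this identity first, either by invoking standard one-dimensional optimal transport theory (the comonotone plan attains the infimum defining $T_2(\mu,\nu)$) or by directly verifying that the monotone rearrangement is optimal. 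The map $\phi : \mu \mapsto F_\mu^{-1}$ is then an isometric embedding of $W_2(\mathbb{R})$ into the Hilbert space $H = L^2([0,1])$.

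Second, I would show that the Gaussian kernel $(x,y) \mapsto \exp(-\|x-y\|_H^2/l^2)$ is positive definite on any Hilbert space $H$. Writing $\|x-y\|_H^2 = \|x\|_H^2 - 2\langle x,y\rangle_H + \|y\|_H^2$, the kernel factors as $g(x)\,g(y)\,\exp(2\langle x,y\rangle_H/l^2)$ with $g(x) = \exp(-\|x\|_H^2/l^2)$. Multiplication by $g(x)g(y)$ preserves positive definiteness, and $\exp(2\langle x,y\rangle_H/l^2) = \sum_{k \ge 0} \frac{(2/l^2)^k}{k!} \langle x,y\rangle_H^k$ is positive definite because the inner-product kernel $\langle x,y\rangle_H$ is a Gram kernel, its integer powers are positive definite by the Schur product theorem, and nonnegative combinations and pointwise limits of positive definite kernels stay positive definite. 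Equivalently, this is Schoenberg's theorem: $\|x-y\|_H^2$ is a kernel of negative type, so its Gaussian transform is positive definite for every bandwidth.

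Combining the two steps, $k^{W_2}(\mu,\nu) = \sigma^2 \exp(-\|\phi(\mu) - \phi(\nu)\|_H^2/l^2)$ is the composition of the isometric embedding $\phi$ with a positive definite Gaussian kernel on $H$, scaled by the constant $\sigma^2 > 0$; both operations preserve positive definiteness, which yields the claim. I expect the main obstacle to be Step one, and in particular being precise about why it is indispensable: it is exactly the quantile representation that forces $W_2$ on $\mathbb{R}$ to coincide with an $L^2$ distance and hence to be of negative type. In dimension two or higher this embedding fails and the Gaussian--Wasserstein kernel is in general not positive definite, so the argument must visibly exploit the one-dimensional structure rather than any generic property of $W_2$.
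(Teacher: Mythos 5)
Your proposal is correct and follows essentially the same route as the paper's proof (which, following \cite{thi2019distribution}, rests on the isometric embedding $\mu \mapsto F_\mu^{-1}$ of $W_2(\mathbb{R})$ into the Hilbert space $L^2([0,1])$ and then on Schoenberg-type positive definiteness of the Gaussian kernel on a Hilbert space). Your added emphasis that the quantile representation is the step that fails in dimension two or higher is a correct and worthwhile observation, but it does not change the substance of the argument.
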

For completeness a proof following the layout in \cite{thi2019distribution} is provided in Appendix \ref{apd:pos_def_kernels}.

Multiplication of positive definite kernels results again in a positive definite kernel, hence we arrive at the automatic relevance determination kernel based on Wasserstein-2 distances: 
\begin{equation}
    k^{W_{2}}([\mu_{d}]_{d=1}^D,[\nu_{d}]_{d=1}^D) = \sigma^{2} \exp \sum_{d=1}^D \frac{-W_{2}^{2}(\mu_{d},\nu_{d})}{l_{d}^{2}}\label{eqn:wasserstein_kernel}    
\end{equation}

\paragraph{Wasserstein-2 Distance between Gaussian distributions} 

Gaussian measures fulfill the condition of finite second order moment, thereby being a clear example of probability measures for which we can compute Wasserstein metrics. The Wasserstein-2 distance between two multivariate Gaussian distributions $\mathcal{N}( m_{1},\Sigma_{1})$ and $\mathcal{N}( m_{2},\Sigma_{2})$, which have associated Gaussian measures and implicitly the Wasserstein metric is well defined for them, has been shown to have the following form
$\parallel m_{1} - m_{2} \parallel_{2}^{2}+
\Tr\Big[\Sigma_{1} + \Sigma_{2} - 2\Big(\Sigma_{1}^{1/2} \Sigma_{2}\Sigma_{1}^{1/2}\Big)^{1/2}\Big]$ \citep{Dowson1982TheFD}, which in the case of univariate Gaussians simplifies to $| m_{1} - m_{2} |^{2}+|\sqrt{\Sigma_{1}} - \sqrt{\Sigma_{2}} |^{2}$. This last formulation will be used throughout this paper.

\subsection{Varifold theory or kernels between generalized measure spaces} \label{apd:varifold_theory}

Varifolds were first introduced in Geometric Measure Theory  (GMT) \citep{almgren1966plateau}, in which shapes like submanifolds are represented in certain spaces of generalized measures such as currents, varifolds or normal cycles. This general framework was adapted to suit requirements for computational anatomy matching in \cite{charon2013varifold} and without loss of generality, throughout the remainder of this subsection we will focus solely on surfaces embedded in $\mathbb{R}^{3}$ and how to match them.

We commence by introducing elementary definitions from GMT. Subsequently introduced methods are capable of working with smooth surfaces alongside piecewise smooth or discrete objects. The umbrella term of rectifiable subsets in GMT encompasses the aforementioned scenarios. We denote $\mathbb{M} \in \mathbb{R}^{3}$ a rectifiable surface if for almost every $x \in \mathbb{M}$, there exists a tangent space $\mathbb{T}_{x}\mathbb{M} \in \mathbb{R}^{2}$.

To make the abstract mathematical framework more intuitive, we consider the problem of partial matching between shapes \citep{antonsanti2021partial}. Hence, the aim is to find the optimal deformation to register a source $S$ to a target shape $T$, where $S,T \in \mathbb{R}^{m}$, hence they are a finite union of $m$-dimensional submanifolds of the ambient space $E = \mathbb{R}^{d}$. We consider them to be m-rectifiable, which in the case of hyperspheres implies that any point $x \in S$ ($x \in T$) has a normal vector $\tau_{x}S \in S^{d-1}$ ($\tau_{x}T \in \mathbb{S}^{d-1}$).

\begin{definition}
    A varifold is a distribution on the product $\mathbb{R}^{k} \times \mathbb{S}^{k-1}$, namely a continuous linear form on a given space $W$ of smooth functions $\mathbb{R}^{k} \times \mathbb{S}^{k−1} \to \mathbb{R}$.
If $M$ is a surface, we associate the varifold $\{M\} \in W^{'}$, for any  function
$w \in W$, by the surface integral:
$\{M\}(w) = \int_{M} w\left( x, \tau_{x}T\right) dS(x)$
\end{definition}

\begin{proposition}[Proposition 4.1. in \cite{charon2013varifold}]
    Given positive-definite real kernel $k_{e}$ on $\mathbb{R}^{d}$ s.t. $k_{e}$ is continuous and for all $x\in \mathbb{R}^{d}$, $k_{e}(x,\cdot)$ vanishes at infinity, respectively a positive-definite real kernel $k_{t}$ defined on the manifold $\mathbb{S}^{d-1}$ that is also continuous. Then the Reproducing Kernel Hilbert Space (RKHS) $W$ associated to the positive-definite kernel $k_{e} \otimes k_{t}$ is continuously embedded into the space $C_{0}\left(\mathbb{R}^{d} \times \mathbb{S}^{d-1} \right)$, the space of continuous functions on $\mathbb{R}^{d} \times \mathbb{S}^{d-1}$ that decay to 0 at infinity.
\end{proposition}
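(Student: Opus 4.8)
The plan is to identify $W$ with the RKHS of the product kernel $K := k_e \otimes k_t$ on $\mathbb{R}^d \times \mathbb{S}^{d-1}$, where $K((x,u),(y,v)) = k_e(x,y)\,k_t(u,v)$. This $K$ is positive definite (a product of positive definite kernels) and continuous (a product of continuous functions), so the reproducing kernel Hilbert space $W$ with reproducing kernel $K$ is well defined. To establish the continuous embedding $W \hookrightarrow C_0(\mathbb{R}^d \times \mathbb{S}^{d-1})$ I must verify three properties for every $f \in W$: continuity, a uniform bound $\|f\|_\infty \le C\|f\|_W$, and decay at infinity. My strategy is to obtain the first two directly from the reproducing property, then prove decay on the dense subspace $W_0 := \mathrm{span}\{K(z,\cdot) : z \in \mathbb{R}^d \times \mathbb{S}^{d-1}\}$, and finally transport decay to all of $W$ using the uniform bound together with completeness of $C_0$.

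First I would invoke the reproducing identity $f(z) = \langle f, K(z,\cdot)\rangle_W$ with Cauchy--Schwarz. For two points $z_1,z_2$ this gives $|f(z_1) - f(z_2)| \le \|f\|_W\, \|K(z_1,\cdot) - K(z_2,\cdot)\|_W$, where $\|K(z_1,\cdot) - K(z_2,\cdot)\|_W^2 = K(z_1,z_1) - 2K(z_1,z_2) + K(z_2,z_2) \to 0$ as $z_2 \to z_1$ by continuity of $K$; hence every $f \in W$ is continuous. The same identity yields $|f(z)| \le \|f\|_W \sqrt{K(z,z)}$, so the embedding into $C_b$ is continuous with constant $C = \big(\sup_x k_e(x,x)\big)^{1/2}\big(\sup_u k_t(u,u)\big)^{1/2}$. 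Compactness of $\mathbb{S}^{d-1}$ with continuity of $k_t$ makes the second supremum finite, and boundedness of the diagonal $k_e(x,x)$ makes the first finite.

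Next I would treat decay at infinity on $W_0$. Since $\mathbb{S}^{d-1}$ is compact, a neighbourhood of infinity in $\mathbb{R}^d \times \mathbb{S}^{d-1}$ has the form $\{(x,u) : |x| > R\}$. For $f = \sum_i a_i K(z_i, \cdot) \in W_0$ with $z_i = (x_i, u_i)$, one has $f(x,u) = \sum_i a_i\, k_e(x_i, x)\, k_t(u_i, u)$; as $|x| \to \infty$ each $k_e(x_i, x) \to 0$ by the vanishing-at-infinity hypothesis, while $|k_t(u_i,u)|$ stays uniformly bounded, so $\sup_u |f(x,u)| \to 0$ and thus $W_0 \subseteq C_0$. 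I would then close by density: $W_0$ is dense in $W$, so for $f \in W$ I pick $f_n \in W_0$ with $\|f_n - f\|_W \to 0$; the uniform bound forces $\|f_n - f\|_\infty \to 0$, and since $C_0$ is a closed subspace of $(C_b, \|\cdot\|_\infty)$ the uniform limit $f$ again lies in $C_0$. The inequality $\|f\|_\infty \le C\|f\|_W$ passes to the limit, giving exactly the asserted continuous embedding.

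The step I expect to be the main obstacle is the passage of decay-at-infinity from $W_0$ to all of $W$: membership in $C_0$ is not closed in the Hilbert norm, so it is essential that the embedding be \emph{continuous} (bounded) in order that $\|\cdot\|_W$-convergence imply uniform convergence, which itself rests on uniform boundedness of the diagonal $k_e(x,x)$. Pinning down this boundedness as a regularity hypothesis (note that $k_e(x,\cdot)\in C_0$ only gives a pointwise, $x$-dependent bound), and justifying that the finite sum decays uniformly in $u$ as $|x|\to\infty$, are the delicate points; the continuity and pointwise bounds are then routine consequences of the reproducing property.
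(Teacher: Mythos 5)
Your proof is correct, but note that the paper itself never proves this proposition: it is imported verbatim as background from the varifold literature, and the only related proof the paper defers to its appendix concerns the positive-definiteness of the Wasserstein-2 RBF kernel, not this embedding. So there is no in-paper argument to diverge from; what you wrote is the standard proof and coincides with the argument in the cited source: the reproducing property plus Cauchy--Schwarz gives continuity and the bound $\|f\|_{\infty} \le C \|f\|_{W}$, the span $W_{0}$ of kernel sections lies in $C_{0}$ because the sphere factor is compact while each $k_{e}(x_{i},\cdot)$ vanishes at infinity, and density of $W_{0}$ together with closedness of $C_{0}$ under uniform limits finishes. Your closing caveat is moreover a genuine catch rather than a technicality: as restated in this paper the hypotheses are insufficient, since pointwise vanishing of the sections $k_{e}(x,\cdot)$ does not bound the diagonal $k_{e}(x,x)$. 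For instance $k_{e}(x,y) = (1+|x|)(1+|y|)\,e^{-|x-y|^{2}}$ is continuous, positive definite, and has every section in $C_{0}(\mathbb{R}^{d})$, yet $k_{e}(x,x)\to\infty$; its RKHS is $\left\{(1+|\cdot|)\,\phi : \phi \in H_{\mathrm{Gauss}}\right\}$ and contains unbounded functions, so the claimed embedding actually fails. The original formulation assumes in addition that $k_{e}$ is bounded, which is precisely the regularity hypothesis you flagged as missing; once it is added, your argument is complete.
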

Its dual space $W^{'}$ is the space of varifolds. The reproducing kernel of $W$ is given by $k_{e} \otimes k_{t}$, with $k_{e}(x,y) = \exp{\left(-\mid x - y \mid^{2}\right)} / \sigma_{w}^{2}$ and $k_{t}(u,v) = \exp{\langle u,v\rangle_{R}^{d}}$ and we associate the following canonical function $w_{S} \in W$:
\begin{equation}
    w_{S}(y, \tau) = \int_{S} k_{e}(y,x)k_{t}(\tau, \tau_{x}S)dx
\end{equation}
We can finally define the scalar product between canonical functions $w_{S}$ and $w_{T}$ as follows:
\begin{equation}
    \langle w_{S}, w_{T} \rangle = \int_{S}\int_{T} k_{e}(x,y) k_{t}(\tau_{x}S, \tau_{y}T) dx~dy \label{varifold_kernel}
\end{equation}

\begin{remark}
 Intuitively, the inner product associated to RKHS $W$ keeps track of both Euclidean differences between 3D points on the surface mesh, but also takes into account geometric information surrounding those points, as given by the points' surface normal vectors (Figure \ref{fig:varifold_hybrid_kernels}). 
\end{remark}

\begin{figure}[!htb]
    \centering
    \includegraphics[width=0.95\linewidth]{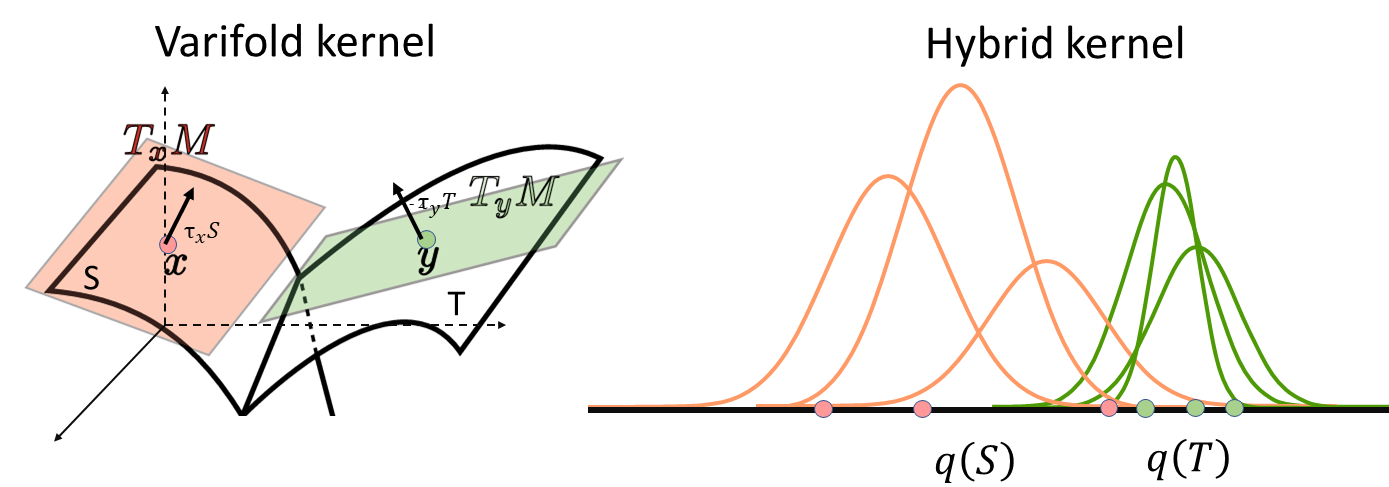}
    \caption[Similarities between Varifold kernel and Hybrid kernel.]{\textbf{Similarities between Varifold kernel and Hybrid kernel.} \textbf{Left:} Comparing two manifolds involves taking 3D points in Euclidean space (denoted by circles) and their associated surface normal vectors (denoted by directed arrows), subsequently computing their distance in their respective domains. We only show the comparison between two points for aesthetical reasons, however it should be noted that in practice a multitude of points and associated surface normal vectors are used; \textbf{Right:} Considering two data points $S$ and $T$, their predictive distribution at a certain hidden layer of a DDGP would consist of mixture of Gaussians (p.d.f. of mixture components are depicted either in pink or green). The hybrid kernel takes samples from mixture components and their underlying Gaussian distributions.}
    \label{fig:varifold_hybrid_kernels}  
\end{figure}

\section{Propagation of variance in DGP} \label{sec:theoretical_analysis_zero_mean_deepgp}

\cite{ustyuzhaninov2019compositional} have proved for noiseless data that the various stochastic layers in the hierarchy of a DGP collapse to deterministic transformations. In this section, we analyze what are the necessary theoretical requirements for a zero-mean DGP to maintain comparatively higher total uncertainty for an OOD data point throughout the hierarchy and draw some insights into what type of uncertainty (e.g., parametric/epistemic or non-parametric/distributional) is propagate forward in each case. We analyze two different case studies that might occur in practice. The first one involves inducing points $\{Z_{0}, \cdots, Z_{L} \}$ that are evenly spread around 0 with a maximum range of $\left[-3\sigma_{l}, 3\sigma_{l}\right]$ at each layer such that samples stemming from the previous layers are within the neighbourhood of said inducing points. Secondly, we will consider the scenario where we have two separate groups of inducing points $Z_{l,1} \subsetneq \left[-\infty, -c \right]$ and $Z_{l,2} \subsetneq \left[c, \infty \right]$ for  $c \in \mathbb{R^{+}}$. This situation can occur in practice for a binary classification task, where at the respective layer, the inducing points can reliably separate the two classes. We now detail what are the necessary conditions in the first scenario to predict higher total uncertainty for OOD data points in comparison to in-distribution points.

\begin{figure}[!htb]
  \centering
    \includegraphics[width=\linewidth]{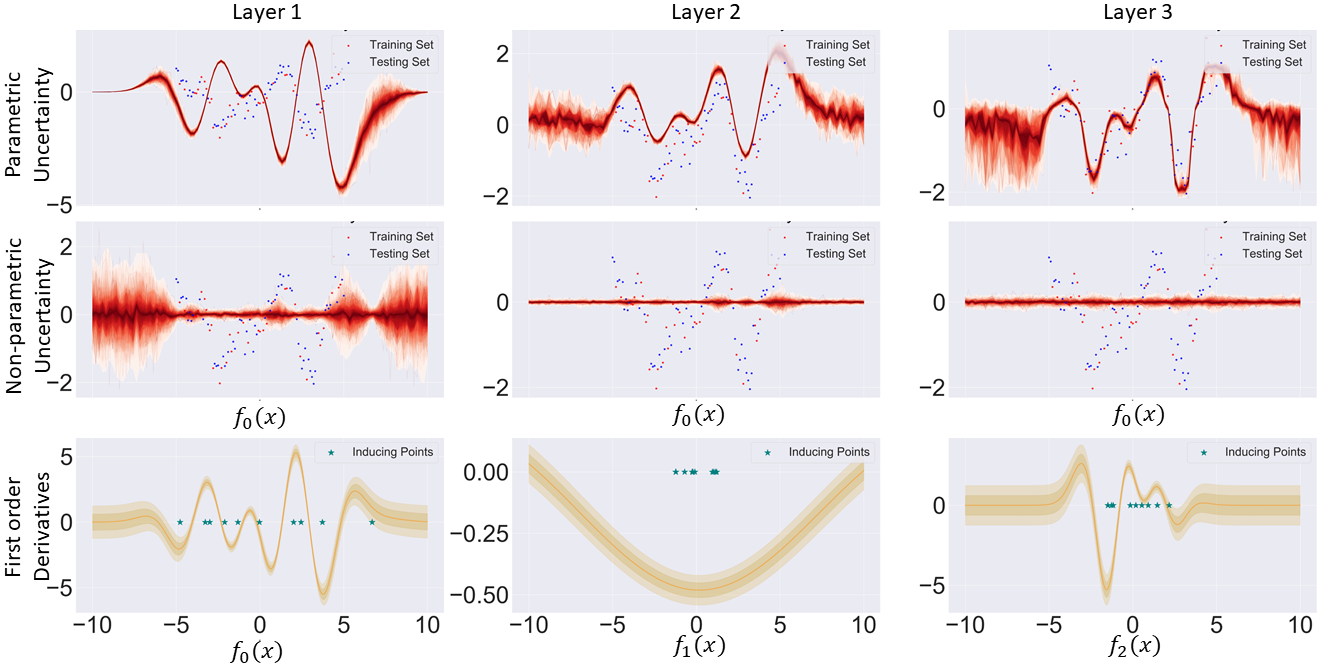}
    \caption[Collapse of non-parametric variance in DGP]{\textbf{Collapse of non-parametric variance in DGP.} Layer-wise decomposition of uncertainty into parametric and non-parametric for a zero-mean DGP, alongside first order derivatives.}
    \label{fig:toy_regression_derivatives}  
\end{figure}

\begin{proposition} \label{thm:outlier_req_dgp}
    We consider the approximate posterior DGP as a composition of functions $q_{L} = q_{L} \circ ... \circ q_{1}$ with $q_{l}$ being given by $
    q(F_{l}|F_{l-1}) = \int p(F_{l}|U_{l},F_{l-1})q(U_{l}) = \mathcal{N}(F_{l} \mid \tilde{U_{l}}(F_{l-1}), \tilde{\Sigma_{l}}(F_{l-1}))
    $. We consider an OOD data point $x_{out}$ in input space such that $\tilde{\Sigma_{1}}(x_{out}) = \sigma^{2}_{1}$ and $\tilde{U_{1}}(x_{out}) = 0$, respectively $x_{in-d}$ to be in-distribution, with $\tilde{\Sigma}_{1}(x_{in-d}) = V_{in-d} \leq \sigma^{2}_{1}$ and $\tilde{U}_{1}(x_{in-d}) = M_{in-d}$. We assume that $Z_{2}$ are equidistantly placed between $[-3\sigma_{1},3\sigma_{1}]$. The variance in the second layer of $x_{out}$ will be higher than $x_{in-d}$ if the following holds:
    \begin{equation}
     \frac{ \left(\frac{\partial \tilde{U}_{2}(F_{1})}{\partial F_{1}}\right)^{2}\Bigr|_{\substack{F_{1}=M_{in-d}}}}{\left(\frac{\partial \tilde{U}_{2}(F_{1})}{\partial F_{1}}\right)^{2}\Bigr|_{\substack{F_{1}=0}}} \leq \frac{\sigma^{2}_{1}}{V_{in-d}}
\end{equation}
\end{proposition}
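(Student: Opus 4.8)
The plan is to treat the first-layer output as a noisy (random) input fed into the second GP layer, and then to invoke the moment-matching machinery of Section~\ref{sec:approximate_moments_girard}. For a fixed input $x$ the first layer produces $F_{1} \sim \mathcal{N}(\tilde{U}_{1}(x), \tilde{\Sigma}_{1}(x))$, so the second layer is exactly a GP evaluated at a Gaussian-distributed input, which is the situation analyzed by \cite{girard2004approximate}. I would therefore apply the approximate variance formula \eqref{eqn:girard_app_var}, identifying the second layer's predictive mean and variance with $\tilde{U}_{2}(\cdot)$ and $\tilde{\Sigma}_{2}(\cdot)$ from \eqref{eqn:svgp_posterior_mean}--\eqref{eqn:svgp_posterior_variance}. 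Discarding the higher-order $\tfrac{1}{2}\Sigma\,(\tilde{\Sigma}_{2})''$ curvature correction, the propagated second-layer variance becomes
\begin{equation}
v(x) \approx \tilde{\Sigma}_{2}\big(\tilde{U}_{1}(x)\big) + \tilde{\Sigma}_{1}(x)\left(\frac{\partial \tilde{U}_{2}(F_{1})}{\partial F_{1}}\Big|_{F_{1}=\tilde{U}_{1}(x)}\right)^{2},
\end{equation}
i.e. a non-parametric term evaluated at the layer-one mean plus a gradient-weighted propagation of the layer-one variance.

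Next I would substitute the two prescribed input statistics. For $x_{out}$ the input mean is $0$ and the input variance is $\sigma_{1}^{2}$; for $x_{in-d}$ they are $M_{in-d}$ and $V_{in-d}$. This produces two explicit expressions $v(x_{out})$ and $v(x_{in-d})$, each of the form above. The crucial simplification comes from the inducing-point assumption: since $Z_{2}$ are placed equidistantly across $[-3\sigma_{1},3\sigma_{1}]$, both candidate means $0$ and $M_{in-d}$ lie in the interior of the densely covered region, where the non-parametric variance $\tilde{\Sigma}_{2}(\cdot)$ is essentially flat — precisely the collapse of hidden-layer variance documented empirically in the contributions. I would therefore argue $\tilde{\Sigma}_{2}(0)\approx \tilde{\Sigma}_{2}(M_{in-d})$, so these terms cancel when the two variances are compared.

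Imposing $v(x_{out}) \geq v(x_{in-d})$ then collapses to $\sigma_{1}^{2}\,(\partial_{F_{1}}\tilde{U}_{2}|_{0})^{2} \geq V_{in-d}\,(\partial_{F_{1}}\tilde{U}_{2}|_{M_{in-d}})^{2}$, and dividing through by $(\partial_{F_{1}}\tilde{U}_{2}|_{0})^{2}$ and by $V_{in-d}$ recovers exactly the stated inequality. The whole argument thus rests on reducing the comparison of propagated variances to a comparison of squared Jacobians of the second-layer mean map, weighted by the respective layer-one variances.

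The main obstacle is controlling the two approximations simultaneously. The Taylor expansion behind \eqref{eqn:girard_app_var} is only exact for small input variance, whereas here the OOD input variance $\sigma_{1}^{2}$ need not be small, so I would either appeal to the exact squared-exponential moments of Section~\ref{sec:exact_moments_girard} or argue that the flatness of $\tilde{\Sigma}_{2}$ over the covered interval bounds the discarded curvature terms uniformly for both points. Equally delicate is the cancellation $\tilde{\Sigma}_{2}(0)\approx\tilde{\Sigma}_{2}(M_{in-d})$: it requires $M_{in-d}$ to sit safely inside $[-3\sigma_{1},3\sigma_{1}]$ and the inducing points dense enough that residual variation in $\tilde{\Sigma}_{2}$ is dominated by the gradient terms, which is the step I expect to need the most care to state rigorously.
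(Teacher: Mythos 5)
Your proposal follows essentially the same route as the paper's own proof: treating $F_{1}$ as a Gaussian-distributed input to the second layer, applying the approximate variance formula \eqref{eqn:girard_app_var} while discarding the curvature term (which the paper justifies by citing the empirical result of \cite{ustyuzhaninov2019compositional} that $\partial^{2}\tilde{\Sigma}_{2}/\partial F_{1}^{2} \to 0$ as the number of inducing points grows), cancelling $\tilde{\Sigma}_{2}(0) \approx \tilde{\Sigma}_{2}(M_{in-d})$ via the equidistant-inducing-point assumption (the paper additionally assumes equal posterior variances $S_{U_{2}}$ for this step), and reducing the comparison to the ratio of squared Jacobians. Your closing caveats about the Taylor-expansion validity and the cancellation step are fair, but the paper handles them no more rigorously than you do.
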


\begin{proof}

We consider the approximate posterior DGP as a composition of functions $q_{L} = q_{L} \circ ... \circ q_{1}$ with a certain $q_{l}$ being given by:
\begin{equation}
q(F_{l}|F_{l-1}) = \int p(F_{l}|U_{l},F_{l-1})q(U_{l}) = \mathcal{N}(F|\tilde{U_{l}}(f_{l-1}), \tilde{\Sigma_{l}}(f_{l-1}))
\end{equation}

Following the layout introduced in \cite{girard2002gaussian} for obtaining Gaussian approximations of GP with uncertain inputs, we have the following adapted case for a two layer DGP, with the forthcoming analysis being easily extended to more layers: U
\begin{equation}
    q(F_{2})(x)  = \int p(F_{2}|F_{1}) q(F_{1}(x))~dF_{1}
\end{equation}
Using the framework described in subsection \ref{sec:approximate_moments_girard} (equations \eqref{eqn:girard_app_mean} and \eqref{eqn:girard_app_var}) and adapting to our case at hand, we have the following approximate moments for $q(F_{2})(x)$:
\begin{align}
    m(F_{2}(x)) &= \tilde{U}_{2}(\tilde{U}_{1}(x)) \\
    v(F_{2}(x)) &=  \tilde{\Sigma}_{2}(\tilde{U}_{1}(x)) + \tilde{\Sigma}_{1}(x) \Bigg[ \frac{1}{2} \frac{\partial^{2} \tilde{\Sigma}_{2}(F_{1})}{\partial^{2} F_{1}}\Bigr|_{\substack{F_{1}=\tilde{U}_{1}(x)}}  + \left(\frac{\partial \tilde{U}_{2}(F_{1})}{\partial F_{1}}\right)^{2}\Bigr|_{\substack{F_{1}=\tilde{U}_{1}(x)}} \Bigg] \label{eqn:approximation_variance}
\end{align}
, where for the purposes of our derivation here we only considered a first order Taylor expansion for the approximate mean. In \cite{ustyuzhaninov2019compositional}, the authors have empirically shown that $\lim_{M\to\infty} \frac{\partial^{2} \tilde{\Sigma_{2}}(F_{1})}{\partial^{2} F_{1}} = 0$, hence that term is dropped. Intuitively, it follows a sinusoidal path with the amplitude converging to 0 as the number of inducing points is increased.

We remind ourselves that in the proposition statement we made the following set of assumptions: inducing point locations for the second layer $Z_{1}$ are equidistantly placed in the interval $[-3\sigma_{1},3\sigma_{1}]$; the variance of the inducing point values $S_{U_{2}}$ are taken to be equal to simplify the problem.

For $x_{out}$ and $x_{in-d}$ using equation \ref{eqn:approximation_variance} we obtain:
\begin{align}
    v(F_{2}(x_{out})) &= \tilde{\Sigma}_{2}(0) + \sigma^{2}_{1}  \left(\frac{\partial \tilde{U}_{2}(F_{1})}{\partial F_{1}}\right)^{2}\Bigr|_{\substack{F_{1}=0}}
    \\
    v(F_{2}(x_{in-d})) &= \tilde{\Sigma}_{2}(M_{in-d}) + V_{in-d}  \left(\frac{\partial \tilde{U}_{2}(F_{1})}{\partial F_{1}}\right)^{2}\Bigr|_{\substack{F_{1}=M_{in-d}}}
\end{align}
The desired behaviour for the second layer is: 
\begin{equation}
    \tilde{\Sigma}_{2}(M_{in-d}) + V_{in-d} \left(\frac{\partial \tilde{U}_{2}(x)}{\partial x}\right)^{2}\Bigr|_{\substack{F_{1}=M_{in-d}}} \leq \tilde{\Sigma}_{2}(0) + \sigma^{2}_{1} \left(\frac{\partial \tilde{U}_{2}(x)}{\partial  x}\right)^{2}\Bigr|_{\substack{F_{1}=0}}    
\end{equation}

Since we have assumed that the inducing point locations are equidistantly placed in a grid with centre zero, which is a sufficiently similar situation to what one might observe in practice, then $\tilde{\Sigma}_{2}(0) \approx \tilde{\Sigma}_{2}(M_{in-d})$ as we have also assumed that the posterior variances of the inducing point values are equal. Hence, the inequality $ v(F_{2}(x_{in-d})) \leq v(F_{2}(x_{out}))$ holds if and only if:
\begin{equation}
     \frac{ \left(\frac{\partial \tilde{U}_{2}(F_{1})}{\partial F_{1}}\right)^{2}\Bigr|_{\substack{F_{1}=M_{in-d}}}}{\left(\frac{\partial \tilde{U}_{2}(F_{1})}{\partial F_{1}}\right)^{2}\Bigr|_{\substack{F_{1}=0}}} \leq \frac{\sigma^{2}_{1}}{V_{in-d}}
\end{equation}
\end{proof}

\begin{remark}
    Intuitively, this inequality can be easily satisfied if $\left(\frac{\partial \tilde{U}_{2}(F_{1})}{\partial F_{1}}\right)^{2}\Bigr|_{\substack{F_{1}=0}}$ reaches its absolute peak at $F_{1}=0$ since $1 \leq \frac{\sigma^{2}_{1}}{V_{in-d}}$ holds in general. However, this requirement is not guaranteed to be satisfied in practice. On a toy regression task we can observe that the overall uncertainty is correctly predicted as higher for OOD points due to high absolute values of the derivative at 0 (i.e., where OOD points get mapped in hidden layers on average) in hidden layers' space (Figure \ref{fig:toy_regression_derivatives}).
\end{remark}

In the first investigated case, we have thus established the necessary condition for the reliable propagation of total variance so as to guarantee that OOD data points have higher total uncertainty compared to in-distribution data points at each layer in the hierarchy. However, a reasonable question is whether DGPs in this case are \emph{distance-aware} (see Definition \ref{def:distance_awareness}) at each layer? From Figure \ref{fig:toy_regression_derivatives} we can see that the non-parametric uncertainty (which stands as a proxy for distributional uncertainty; the two terms will be used interchangeably throughout the remainder of the paper) collapses to zero across the whole spectrum of data points in the hidden layers. Therefore, DGPs in this scenario are not \emph{distance-aware}, effectively rendering the non-parametric part of layer-specific SVGP obsolete, thus resulting in a fully parametric DGP from the first hidden layer up to the output layer (for in-depth insight into why this pathology occurs we refer the reader to Figure \ref{fig:visual_explanation_collapse_of_variance} and accompanying text). The second scenario for DGPs will investigate the case where the non-parametric uncertainty for an OOD data point does get correctly propagated throughout the layers, thus making use of the non-parametric part of SVGP in each layer of the hierarchy. We lay out the necessary conditions in the following proposition. Before we proceed, we remind ourselves that a SVGP can be decomposed as $F(\cdot) \sim \underbrace{K_{fu}K_{uu}^{-1}U}_{G(\cdot)} + \underbrace{\mathcal{N}\left(0, K_{ff \cdot u} \right)}_{H(\cdot)}$.

\begin{proposition} \label{thm:safe_outlier_propagation_dgp}
    Under the assumption that for any OOD point in input space $x_{out}$ we have $G_{l}(x_{out}) = K_{fu}K_{uu}^{-1}U_{l} \approx 0$ at each layer in a DGP, we can instead focus solely on the non-parametric components of DGP as a composition of $p(H_{L}) = p(H_{L} \mid H_{L-1}) \circ ... \circ p(H_{1} \mid H_{0})$ with $p\left(H_{l} \mid H_{l-1} \right) = \mathcal{N}\left(H_{l} \mid 0, K_{ff \cdot u} \right)$ and $H_{0} = X$.  We consider two inducing points in the first hidden layer $\{Z_{1,1} =-c, Z_{1,2} = c\}$ with $c \geq 0$ and an OOD data point $x_{out}$ in input space such that $p(H_{1}(x_{out})) \sim \mathcal{N}\left(0, \sigma^{2}_{1} \right)$. The approximate variance of the non-parametric component pertaining to the second layer will satisfy $v\left(H_{2}(x_{out})\right) \geq K_{ff \cdot u}$ if the following holds $l^{2}_{2} \geq 2c^{2}$, where the Schur complement is evaluated at $F_{1} = 0$. 
\end{proposition}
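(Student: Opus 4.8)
The plan is to reduce the two-layer non-parametric composition to a single noisy-input GP prediction and then apply the Taylor moment approximation of Section~\ref{sec:approximate_moments_girard}. Under the stated hypothesis $G_l(x_{out})\approx 0$, the object of interest is the composition of the zero-mean Schur-complement components, so the signal fed to the second layer is $H_1(x_{out})\sim\mathcal{N}(0,\sigma_1^2)$ and the second-layer conditional is $H_2\mid H_1\sim\mathcal{N}(0,S(H_1))$, where I write $S(F_1)=K_{ff\cdot u}=K_{ff}-K_{fu}K_{uu}^{-1}K_{uf}$ for the Schur complement regarded as a function of the input location $F_1$. I would then specialise equation~\eqref{eqn:approximation_variance} to this setting, in which the incoming mean is $\tilde{U}_1(x_{out})=0$ and the incoming variance is $\tilde{\Sigma}_1(x_{out})=\sigma_1^2$. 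The key simplification is that the predictive mean of the non-parametric component is identically zero, so $\tilde{U}_2\equiv 0$, the squared-derivative term in~\eqref{eqn:approximation_variance} disappears, and only the curvature of the Schur complement survives:
\begin{equation}
v(H_2(x_{out})) = S(0) + \tfrac{1}{2}\sigma_1^2\, S''(0).
\end{equation}

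With this identity the proposition collapses to a sign condition. Since the reference quantity $K_{ff\cdot u}$ on the right-hand side is exactly $S(0)$, the inequality $v(H_2(x_{out}))\geq K_{ff\cdot u}$ holds if and only if $S''(0)\geq 0$, that is, if and only if $F_1=0$ (the midpoint between the two inducing locations) is a local minimum of the predictive-variance surface. Geometrically this says that the input jitter $\sigma_1^2$ can only inflate the propagated variance above its nominal value $S(0)$ when that surface curves upward at the midpoint.

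The concrete computation is then to evaluate $S''(0)$ for the two inducing points $Z_1=\{-c,c\}$. Writing $k_-(F_1)=\sigma_2^2\exp[-(F_1+c)^2/(2l_2^2)]$ and $k_+(F_1)=\sigma_2^2\exp[-(F_1-c)^2/(2l_2^2)]$ in the $\tfrac{1}{2}$-exponent normalization of Section~\ref{sec:exact_moments_girard} (which is what makes the threshold come out as $2c^2$ rather than $4c^2$), I would form $S(F_1)=\sigma_2^2-K_{fu}K_{uu}^{-1}K_{uf}$, differentiate twice, and evaluate at $F_1=0$, where $k_-(0)=k_+(0)$ and $k_-'(0)=-k_+'(0)$. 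Taking the inducing-point cross-covariance to be negligible (equivalently $K_{uu}\approx\sigma_2^2\mathbb{I}$, the natural reading of two well-separated groups), the quadratic contraction reduces to $(k_-^2+k_+^2)/\sigma_2^2$ and a direct differentiation gives $S''(0)\propto 1-2c^2/l_2^2$ with a strictly positive proportionality factor; hence $S''(0)\geq 0 \iff l_2^2\geq 2c^2$, which is the stated threshold.

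The step I expect to be the genuine obstacle is precisely the handling of $K_{uu}$. If the full $2\times 2$ inducing covariance is retained, its off-diagonal entry $\exp(-2c^2/l_2^2)$ adds an extra positive term to the curvature of the contraction, and a short calculation shows that $S(F_1)$ then has a \emph{strict} local maximum at the midpoint for every $c>0$, forcing $S''(0)<0$ and reversing the inequality. The clean, $\sigma_1^2$-independent threshold $l_2^2\geq 2c^2$ therefore rests entirely on decorrelating the two inducing points, so I would state that assumption explicitly and argue it is consistent with the ``two separate groups'' regime that motivates the proposition; the remaining differentiation is routine.
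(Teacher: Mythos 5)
Your derivation is correct as a route to the stated threshold, but it is genuinely different from the paper's, and your closing caveat about $K_{uu}$ matters more than you suggest. The paper does not use the Taylor-curvature argument of Section \ref{sec:approximate_moments_girard}; it uses the exact-moment framework of Section \ref{sec:exact_moments_girard} (valid for SE kernels): it writes the Gaussian-approximation variance as $v(0,\sigma^{2}_{1}) = \sigma^{2}_{2} - \Tr\left[K_{uu}^{-1}Q\right]$ with the full correlated $2\times 2$ inducing covariance (off-diagonal $\sigma^{2}_{2}e^{-2c^{2}/l^{2}}$) and the $\Psi$-statistics, obtains a closed form for $v$ as a function of $\sigma^{2}_{1}$, differentiates in $\sigma^{2}_{1}$, and claims the derivative is nonnegative at $\sigma^{2}_{1}=0$ iff $l^{2}\geq 2c^{2}$ (and at general $\sigma^{2}_{1}$ iff $\sigma^{2}_{1}+l^{2}\geq 2c^{2}$); monotonicity then yields $v(0,\sigma^{2}_{1}) \geq v(0,0) = K_{ff\cdot u}$. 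What the paper's route buys is a statement valid for the actual prior variance $\sigma^{2}_{1}$, however large, whereas your identity $v = S(0) + \tfrac{1}{2}\sigma^{2}_{1}S''(0)$ is a second-order truncation, so your ``if and only if'' is only as strong as the small-jitter expansion; note that the derivative of the exact-moment variance at $\sigma^{2}_{1}=0$ equals $\tfrac{1}{2}S''(0)$, so the two arguments coincide exactly at the origin. What your route buys is transparency: the condition is exposed as a convexity condition on the predictive-variance surface at the midpoint between the inducing points.

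The step you flag as the genuine obstacle is in fact a gap in the paper's own proof, not only a modelling choice in yours. Differentiating the paper's displayed closed form correctly, keeping the off-diagonal coupling (which contributes the constant cross term $e^{-3c^{2}/l^{2}}$ and the normalisation $1-e^{-4c^{2}/l^{2}}$), gives
\begin{equation}
\left.\frac{\partial v\left(0,\sigma^{2}_{1}\right)}{\partial \sigma^{2}_{1}}\right|_{\sigma^{2}_{1}=0} \;=\; C\left[\left(1 - e^{-2c^{2}/l^{2}}\right) - \frac{2c^{2}}{l^{2}}\right], \qquad C > 0,
\end{equation}
which is strictly negative for every $c>0$ because $1-e^{-t}<t$ for $t>0$; this is precisely the exact-moment counterpart of your observation that $S''(0)<0$ when the full $K_{uu}$ is retained. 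The paper's threshold $l^{2}\geq 2c^{2}$ emerges only if the off-diagonal coupling is dropped — exactly your decorrelation assumption — and the paper's algebra between its variance formula and its derivative formula silently does this (the quotient-rule contribution of the $e^{-3c^{2}/l^{2}}$ term vanishes from their expression). So your instinct to state the assumption explicitly is the defensible version of this proposition. Be aware, though, that it is a real restriction rather than a harmless approximation: in the regime $l^{2}\geq 2c^{2}$ where the threshold is supposed to bind, the neglected correlation satisfies $e^{-2c^{2}/l^{2}}\geq e^{-1}$, so the ``well-separated groups'' heuristic you invoke to justify decorrelation is not actually in force there.
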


\begin{proof}
    We commence by writing our Bayesian hierarchical formulation so that the link with approximation of Gaussian Processes' moments with noisy inputs (see section \ref{sec:approximate_moments_girard}) is made clearer. Under our set of assumptions we have:
    \begin{align}
        p(H_{1} \mid H_{0}) &= \mathcal{N}\left(H_{1} \mid 0, K_{ff} - K_{fu}K_{uu}^{-1}K_{uf}\right) \\
        &= \mathcal{N}\left(H_{1} \mid 0, \sigma^{2}_{1} \right) \\
        p(H_{2} \mid H_{1}) &= \int \mathcal{N}\left(H_{2} \mid 0, K_{ff} - K_{fu}K_{uu}^{-1}K_{uf}\right) \mathcal{N}\left(H_{1} \mid 0, \sigma^{2}_{1} \right)~dH_{1}
    \end{align}
    We can approximate the non-Gaussian predictive equation using the framework of exact moments introduced in subsection \ref{sec:exact_moments_girard} since we are only considering squared exponential kernels. We adapt equation \eqref{eqn:girard_exact_var} to suit our scenario:
    \begin{align}
        v\left(0, \sigma^{2}_{1}\right) = \sigma^{2}_{2} -  Tr\left[K_{uu}^{-1} Q\right]
    \end{align}
    where we have:
    \begin{align}
        K &= \begin{pmatrix} 
        \sigma^{2}_{2} & \sigma^{2}_{2} \exp{\left[ -\frac{2c^{2}}{l^{2}}\right]} \\
        \sigma^{2}_{2} \exp{\left[ -\frac{2c^{2}}{l^{2}}\right]} & \sigma^{2}_{2}
    \end{pmatrix} \\
        Q &= \frac{1}{\sqrt{\frac{2\sigma^{2}_{1}}{l^{2}} +1}}
    \begin{pmatrix}
        \exp{\left[ - \frac{c^{2}}{l^{2} + 2\sigma^{2}_{1}}\right]} & \exp{\left[-\frac{c^{2}}{l_{2}} \right]} \\
        \exp{\left[-\frac{c^{2}}{l_{2}} \right]} & \exp{\left[ - \frac{c^{2}}{l^{2} + 2\sigma^{2}_{1}}\right]}
    \end{pmatrix}
    \end{align}
    
We can now write the exact variance of the Gaussian approximation of $p(H_{2})$ as:
\begin{equation}
    v\left(0, \sigma^{2}_{1}\right) = \sigma^{2}_{2} - \frac{2\left[ \exp{\left[-\frac{c^{2}}{l^{2}+2\sigma^{2}_{1}}\right]} - \exp{\left[-\frac{3c^{2}}{l^{2}} \right]}\right]}{\sigma^{2}_{2}\left[1 - \exp{\left(- \frac{4c^{2}}{l^{2}} \right)} \right] \sqrt{\frac{2\sigma^{2}_{1}}{l_{2}}+1}}
\end{equation}
which has the following derivative with respect to $\sigma^{2}_{1}$:
\begin{equation}
    \frac{\partial v\left(0, \sigma^{2}_{1}\right)}{\partial \sigma^{2}_{1}} =   \sigma^{2}_{2}\left[1 - \exp{\left(- \frac{4c^{2}}{l^{2}} \right)} \right] 2 \exp{\left[ -\frac{c^{2}}{l^{2}+ 2\sigma^{2}_{1}} \right]}  \left[ \frac{2\sigma^{2}_{1} +l^{2}}{l^{2}}\right]^{-\frac{1}{2}} \left[2c^{2} \left[l^{2} + 2\sigma^{2}_{l-1} \right] - \frac{1}{2\sigma^{2}_{1}+l^{2}}  \right]
\end{equation}
We first consider the derivative evaluated at $\sigma^{2}_{1} = 0$:
\begin{equation}
    \frac{\partial v\left(0, \sigma^{2}_{1}\right)}{\partial \sigma^{2}_{1}}\Bigr|_{\substack{\sigma^{2}_{1}=0}} =   \sigma^{2}_{2}\left[1 - \exp{\left(- \frac{4c^{2}}{l^{2}} \right)} \right] \left[ - \frac{c^{2}}{l^{2}} \right] \left[ \frac{2c^{2}}{l^{4}} - \frac{1}{l^{2}}\right]
\end{equation}
The first two terms are always positive in the case that $K_{uu}$ is invertible, leaving us with just the last term. Hence, we have that:
\begin{equation}
    \frac{\partial v\left(0, \sigma^{2}_{1}\right)}{\partial \sigma^{2}_{1}}\Bigr|_{\substack{\sigma^{2}_{1}=0}} \geq 0 \iff l^{2} \geq 2c^{2}
\end{equation}
Similarly, one can investigate whether the derivative is positive at any other value except 0:
\begin{equation}
    \frac{\partial v\left(0, \sigma^{2}_{1}\right)}{\partial \sigma^{2}_{1}}\Bigr|_{\substack{\sigma^{2}_{1}=\tilde{\sigma^{2}_{1}}}} \geq 0 \iff \tilde{\sigma^{2}_{1}} + l^{2} \geq 2c^{2}
\end{equation}
One can easily see that at any value of $\sigma^{2}_{1}$ the derivative will be positive only if $l^{2} \geq 2c^{2}$. Hence, the minimum values will be at $v(0,0) = K_{ff} - Q_{ff}$, where the Schur complement is evaluated at point $F_{1} = 0$.
\end{proof}

\begin{remark}
    Reliable propagation of non-parametric uncertainty for OOD data points in DGP can only occur if the parametric posterior at each layer $\mathcal{N}\left(G_{l} \mid K_{fu}K_{uu}^{-1}m_{U_{l}}, K_{fu}K_{uu}^{-1}S_{U_{l}}K_{uu}^{-1}K_{uf} \right)$ manages to reliably map in-distribution points stemming from the previous layer in the intervals $I_{in-d}^{1} = [-\infty, -3\sigma_{l-1}]$, respectively  $I_{in-d}^{2} = [3\sigma_{l-1}, \infty]$. This scenario can occur for example in simple binary classification tasks, where the parametric posteriors at each layer have learned to reliably separate the two classes in hidden layer features. Considering that OOD data points from the previous layer have a predictive distribution $\mathcal{N}\left(0,\sigma^{2}_{l-1} \right)$, upon sampling $95\%$ of samples from this distribution will be in the interval $I_{OOD} = [-3\sigma_{l-1},3\sigma_{l-1}]$. Under an assumption of well-optimized inducing points' locations, $Z_{l-1}$ will invariably find themselves in either $I_{in-d}^{1}$ or $I_{in-d}^{2}$, with no locations in $I_{OOD}$. Using notation from Proposition \ref{thm:safe_outlier_propagation_dgp}, $c = 3\sigma_{l-1}$ and the approximate variance $v\left(0, \sigma^{2}_{l-1} \right) \geq K_{ff \cdot u}$ where the Schur complement is evaluated at $F_{l-1} = 0$ only if $l_{l}^{2} \geq 2c^{2}$. 
\end{remark}

To summarize our results so far, in this section we have shown the necessary conditions for DGPs to maintain a comparatively higher total uncertainty for OOD points in comparison to in-distribution points. Through empirical examples, we have seen that the non-parametric variance collapses to zero even in toy regression datasets, thus making the DGP rely solely on its parametric components. Next, we have proposed a scenario in which the posterior DGP can reliably propagate non-parametric variance for OOD points. However, this latter scenario is not guaranteed to occur in practice. In the next section, we propose a variant of DGPs which are guaranteed to propagate non-parametric variance for OOD points.

\section{Distributional Deep Gaussian Processes}

In this section we commence with a motivation behind the hybrid kernel (Euclidean $\&$ Wasserstein space), subsequently introducing the generative process and inference framework for our newly introduced model. Moreover, we derive the theoretical requirements of correctly propagating outliers through the hierarchy of this probabilistic formulation and show that it's easier satisfied in comparison to DGPs.

\subsection{Desiderata for kernels in DGP}

\paragraph{\RNum{1} Maintenance of outlier status.} While there are many definitions of what an outlier constitutes \citep{ruff2021unifying}, for the purposes of this work we follow the most basic one, respectively \textit{``An anomaly is an observation that deviates considerably from some concept of normality.''} More concretely, it can be formalised as follows: our data resides in $X \in \mathbb{R}^{D}$, an anomaly/outlier is a data point $x \in X$ that lies in a low probability region under $\mathcal{P}$ such that the set of anomalies/outliers is defined as $A  =\{x \in X \mid p(x) \leq \xi \}, ~ \xi \geq 0$, with $\xi$ is a threshold under which we consider data points to deviate sufficiently from what normality constitutes. GP constitute an easy plug-in method to define what normality constitutes, or any other kernel-based algorithm for that matter, due to the mathematical formalism embedded in the kernel.

\begin{definition}[Definition 1 in \cite{liu2020simple}] \label{def:distance_awareness}
    We consider the predictive distribution for unseen point $p\left( y^{*} \mid x^{*} \right)$ at testing time, for model trained on $\mathbb{X}_{in-d} \in \mathbb{X}$, with the data manifold being equipped with metric $\| \cdot \|_{X}$. Then, we can affirm that $p\left( y^{*} \mid x^{*} \right)$ is \emph{distance-aware} if there exists a summary statistic $u\left( x^{*} \right)$ of $p\left( y^{*} \mid x^{*} \right)$ that embeds the distance between $\mathbb{X}_{in-d}$ and $x^{*}$:
    \begin{equation}
        u\left( x^{*} \right) = v\left[ \mathbb{E}_{x \sim \mathbb{X}_{in-d}} \left[ \| x^{*} - x \|_{X}^{2}\right]\right]
    \end{equation}
    where $v$ is a monotonic function that increases with distance.    
\end{definition}

\paragraph{GP predictive variance as distributional uncertainty.}
GPs are clearly \emph{distance-aware} provided we use a translation-invariant kernel. The summary statistics for an unseen point is given by $u(x^{*}) = K_{f^{*}f^{*}} - K_{f^{*}f}K_{ff}^{-1}K_{ff^{*}}$, which is monotonically increasing as a function of distance. Throughout this paper, we will use the non-parametric variance of sparse GPs as a proxy for distributional uncertainty, which will be used to assess if inputs are in or outside the distribution.

Definition \ref{def:distance_awareness} does not make any assumptions related to the architecture of the model from which the predictive distribution stems. In practice we would have the following composition to arrive at the logits $logit(x^{*}) = f \circ enc\left( x^{*}\right)$, where $enc\left( \cdot \right)$ represents a network that outputs the representation learning layer and $f\left( \cdot \right)$ is the output layer. In \cite{liu2020simple} the authors propose the following two conditions to ensure that the composition is \emph{distance-aware}:
\begin{itemize}
    \item $f(\cdot)$ is \emph{distance-aware}
    \item $\mathbb{E}_{x \sim \mathbb{X}_{in-d}} \left[ \| x^{*} - X \|_{X}^{2}\right] \approx \mathbb{E}_{x \sim \mathbb{X}_{in-d}} \left[ \| enc(x^{*}) - enc(X) \|_{enc(X)}^{2}\right]$
\end{itemize}
The last condition means that distances between data points in input space should be correlated with distances in the representation learning layer, which is equipped with a $\| \cdot \|_{enc(X)}$ metric. Whereas we have seen that for example a Gaussian Process satisfies the \emph{distance-aware} condition for the last layer predictor, we are still left with the question on how to maintain distances in the representation learning layer correlated to distances in the input layer.

As we have previously stated, GPs are \emph{distance-aware}. Thus, they can reliably notice departures from the training set manifold. For SVGP we decompose the model uncertainty into two components:
\begin{align}
    h(\cdot) &=  \mathcal{N}(h|0, K_{ff} - K_{fu}K_{uu}^{-1}K_{uf}) \label{eqn:distributional_uncertainty_svgp} \\  
    g(\cdot) &= \mathcal{N}(g|K_{fu}K_{uu}^{-1}m_{U}, K_{fu}K_{uu}^{-1} S_{U} K_{uu}^{-1}K_{uf}) \label{eqn:epistemic_uncertainty_svgp}
\end{align}
The $h(\cdot)$ variance captures the shift from within to outside the data manifold and will be denoted as \emph{distributional uncertainty} (interchangeably also denoted as non-parametric variance). The variance $g(\cdot)$ is coined as \emph{epistemic uncertainty} (interchangeably also denoted as parametric variance or within-data uncertainty) encapsulating uncertainty present inside the data manifold. To capture the overall uncertainty in $h(\cdot)$, thereby also capturing the spread of samples from it, we can calculate its differential entropy as:
\begin{equation}
    h(h) = \frac{n}{2}\log{2\pi} + \frac{1}{2}\log{\mid K_{ff} - K_{fu}K_{uu}^{-1}K_{uf} \mid } + \frac{1}{2}n  \label{eqn:differential_entropy_formula}
\end{equation}
In practice we only use the diagonal terms of the Schur complement, hence the log determinant term will considerably simplify. Intuitively, if terms on the diagonal of the Schur complement have higher values, so will the distributional differential entropy. This OOD measure in logit space will be used throughout the rest of the paper for measuring departures from training set manifold. Distributional uncertainty as defined in equation \eqref{eqn:distributional_uncertainty_svgp} can be used as proxy for OOD detection due to its \emph{distance-awareness} properties. Therefore we desire DGP with similar OOD detection properties as it shallow counterpart at every layer in its hierarchy.

\paragraph{\RNum{2} Smoothness in hidden layers.}
Imposing smoothness in neural networks' hidden layers has been linked to increased generalization, robustness to input perturbations or reliable uncertainty estimates \citep{rosca2020case}. With the latter property in mind, recent work \citep{van2021feature, liu2020simple} have proposed bi-Lipschitz constraints for extracting features to be fed into a GP. This paradigm is based on the notion that objects which were close in previous layers should also be close going forward in the hierarchy, thus avoiding \emph{feature-collapse} \citep{van2021feature}. Consequently, kernels in DGP should provide a certain guarantee of smoothness, respectively it should guard against \emph{feature-collapse}.

\subsection{Hybrid kernel} \label{sec:hybrid_kernel}

The term \emph{Distributional Gaussian Process} was first introduced in \citep{bachoc2017gaussian} to describe a shallow GP that operates on measures using a Wasserstein-2 based kernel ( equation \eqref{eqn:wasserstein_kernel}) and for purposes of differentiating it from subsequent models it will be coined as \emph{DistGP-Bahoc} throughout the remainder of the paper. We first investigate prior function space properties if the latter formulation is extended to the multi-layered case and then introduce our proposed kernel and generative process.

Considering the case of Euclidean input data, the fist layer is governed by a GP $p(F_{1})= \mathcal{N}(0,K_{F_{0}F_{0}})$. We immediately see that the prior has the capability to capture correlations between data points. Adding a new layer given by \emph{DistGP-Bahoc} operating directly on $p(F_{1})$, where we only take diagonal terms of its covariance matrix to compute $K^{W_{2}}_{F_{1,i},F_{1,j}} =  \sigma^{2}_{2} \exp \frac{-W_{2}^{2}(\mathcal{N}(0, \sigma^{2}_{1}),\mathcal{N}(0, \sigma^{2}_{1}))}{l^{2}_{2}} = \sigma^{2}_{2}$. Therefore, correlations between different data points are ignored as we arrive at the prior on $p\left( F_{2} \right)$:
\begin{equation}
p(F_{2})= \mathcal{N}\left[ \begin{pmatrix} 
        0       \\
        \vdots  \\
        0
        \end{pmatrix}    
    ,
        \begin{pmatrix} 
    \sigma^{2}_{2} & \dots  & \sigma^{2}_{2} \\
    \vdots     & \ddots & \vdots     \\
    \sigma^{2}_{2} & \dots  & \sigma^{2}_{2} 
        \end{pmatrix}\right]
\end{equation}
Consequently, this type of construction will exhibit an overly-correlated prior which will cause samples to collapse to set of finite values, thereby not satisfying Desiderata \RNum{2}. However, we desire a generative process that is without pathologies in the zero-mean function case.

Sampling hidden layers' features in DGP is of utmost importance towards introducing correlations between data points. On the other hand, the Wasserstein-2 kernel (equation \eqref{eqn:wasserstein_kernel}) directly applied on hidden layers' resulting multivariate Gaussians is attractive as it takes into account differences between variance terms. The hindsight is that differences between low and high variance terms in the multivariate normals would enable a finer separation of data points which were OOD compared to in-distribution as detected by the previous layer. With these observations in mind, we argue that both stochastic (sampling hidden layer features in DSVI framework for DGPs) and deterministic versions (Wasserstein-2 kernels directly applied on hidden layer GP) are important from entirely complementary viewpoints. We seek to combine these two formulations in a single one that preservers both key properties, we can now write the generative process of this new probabilistic framework coined Distributional Deep Gaussian Processes (DDGP) for 2 layers:
\begin{align}
    F_{1} &\sim \mathcal{N}\left(0, K_{ff} \right) \\
    F_{1}^{sth} &= m\left(F_{1}\right) + \sqrt{v\left(F_{1}\right)} \epsilon, ~ \epsilon \sim \mathcal{N}\left(0, \mathbb{I}_{n} \right) \\
    F_{1}^{det} &= \mathcal{N}\left(m(F_{1}), diag\left[ v(F_{1}) \right] \right) \\
    F_{2} &\sim \mathcal{N}\left(0, k_{hybrid}\left(\{F_{1}^{sth}, F_{1}^{det} \}, \{F_{1}^{sth},F_{1}^{det}\} \right) \right)
\end{align}
More intuitively, this generative process implies keeping track of a \emph{stochastic}, respectively \emph{deterministic} component of the same SVGP at any given hidden layer. In Figure \ref{fig:schematic_dgp_ddgp} we provide a schematic of the differences in the generative process of DGPs and DDGPs.

\begin{figure}[htb]
  \centering
    \includegraphics[width=\linewidth]{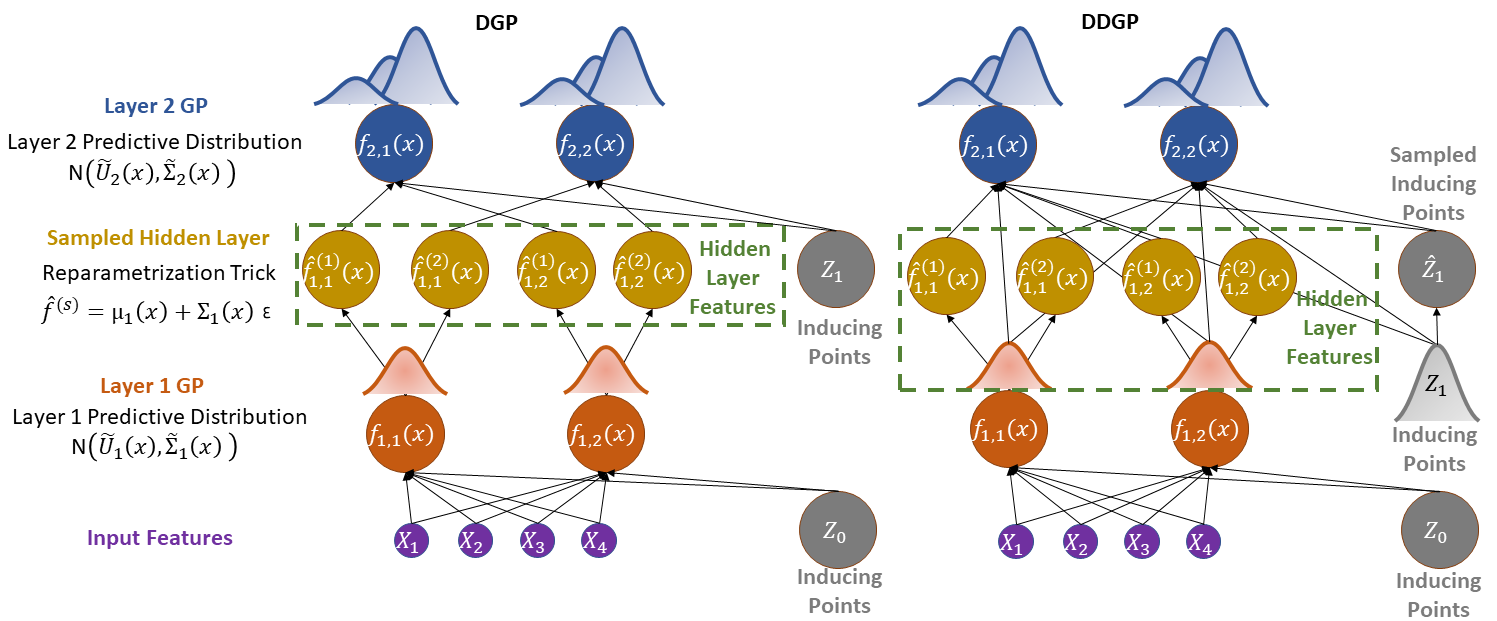}
    \caption[Conceptual differences between DGPs and DDGPs.]{\textbf{Conceptual differences between DGPs and DDGPs.} DGPs operate in Euclidean space of sampled hidden layer features. DDGPs also operate on Wasserstein space of underlying hidden layer GPs, with inducing locations following a Gaussian distribution.}
    \label{fig:schematic_dgp_ddgp}  
\end{figure}

We took inspiration from Varifold theory (see subsection \ref{apd:varifold_theory} for details) and devised a kernel operating on both Euclidean and Wasserstein-2 space, thereby incorporating both stochastic and deterministic pathways. Considering again two data points in the prior space of $F_{1}$, respectively $ p\begin{pmatrix}
    F_{1,i} \\ F_{1,j}
\end{pmatrix}
\sim \mathcal{N}\left[
\begin{pmatrix}
0 \\
0
\end{pmatrix},
\begin{pmatrix}
\sigma_{1}^{2} & K^{E}_{i,j} \\
K^{E}_{j,i} & \sigma_{1}^{2}
\end{pmatrix}
\right]$ and their associated samples $\{f_{1,i}, f_{1,j} \}$. To compute Wasserstein-2 distances using the previous layer's MVN, we take the diagonal of the covariance matrix and we denote $\mu_{i} = \mathcal{N}(0,\sigma_{1}^{2})$; $\mu_{j} = \mathcal{N}(0,\sigma_{1}^{2})$. We can now introduce the hybrid kernel:
\begin{equation}
k^{H}\left(\mu_{i},\mu_{i}\right) = k^{SE}(x_{i},x_{j})\exp \sum_{d=1}^D\frac{-W_{2}^{2}(\mu_{i,d},\mu_{j,d})}{l_{d}^{2}} \label{eqn:hybrid_kernel}
\end{equation}
Prior samples of $F_{2}$ will be correlated due to the Euclidean part of this kernel, whereas the Wasserstein-2 component will always be equal to 1, hence neither introducing or removing extra correlations between data points. To make this clear, we can explicitly calculate it:
\begin{align}
    p\begin{pmatrix}
        F_{2,i} \\ F_{2,j}
    \end{pmatrix}
    &\sim \mathcal{N}\left[
        \begin{pmatrix}
            0 \\
            0
        \end{pmatrix},
        \begin{pmatrix}
            \sigma_{2}^{2} \exp -\frac{-W_{2}^{2}(\mu_{i},\mu_{i})}{l^{2}} & K^{E}_{i,j} \exp -\frac{-W_{2}^{2}(\mu_{i},\mu_{j})}{l^{2}} \\ 
            K^{E}_{j,i} \exp -\frac{-W_{2}^{2}(\mu_{j},\mu_{i})}{l^{2}} & \sigma_{2}^{2} \exp -\frac{-W_{2}^{2}(\mu_{j},\mu_{j})}{l^{2}}
        \end{pmatrix}
        \right] \\
    &\sim \mathcal{N}\left[
        \begin{pmatrix}
            0 \\
            0
        \end{pmatrix},
        \begin{pmatrix}
            \sigma_{2}^{2} & K^{E}_{i,j}  \\ 
            K^{E}_{j,i} & \sigma_{2}^{2} 
        \end{pmatrix}
        \right]
\end{align}
since the Wasserstein-2 distance between two equal distributions ($\mu_{i}$ and $\mu_{j}$) is zero.
\begin{remark}
    In prior space, a hierarchical construction based on hybrid kernels (equation \eqref{eqn:hybrid_kernel}) for hidden layers is equivalent to a DGP. In posterior space, this no longer holds true with several different properties which we will outline subsequently.    
\end{remark}

\paragraph{Connections between Varifold kernel and Hybrid Kernel.} Whereas usually during training we only sample once from $q(F_{l})$ at each layer in order to obtain the samples that get used in the Euclidean part of the kernel, one could sample arbitrary times, thereby obtaining a collection of Gaussian distributions. In this case, we can re-express our hybrid kernel in the multi-sample scenario:
\begin{equation}
     \sum_{i=1}^{S} \sum_{j=1}^{S} k^{SE}(x_{i}, y_{j}) \exp \sum_{d=1}^D\frac{-W_{2}^{2}(\mu_{i,d}, \mu_{j,d})}{l_{d}^{2}} 
\end{equation}
We can observe similarities with the varifold kernel in equation \eqref{varifold_kernel}, where we compute a distance between submanifolds, taking into consideration Euclidean distances between discrete points on the manifolds' surface, combined with information pertaining to the points' geometry. For hybrid kernels as applied to DDGPs, we compare two sets of multivariate Gaussians, taking into account Euclidean differences between points sampled from the underlying distributions, but also considering geometric differences between these distributions as captures by the Wasserstein-2 distance. A visual depiction of these different but similar objectives is provided in figure \ref{fig:varifold_hybrid_kernels}.

\begin{figure}[htb]
    \centering
    \includegraphics[width=\linewidth]{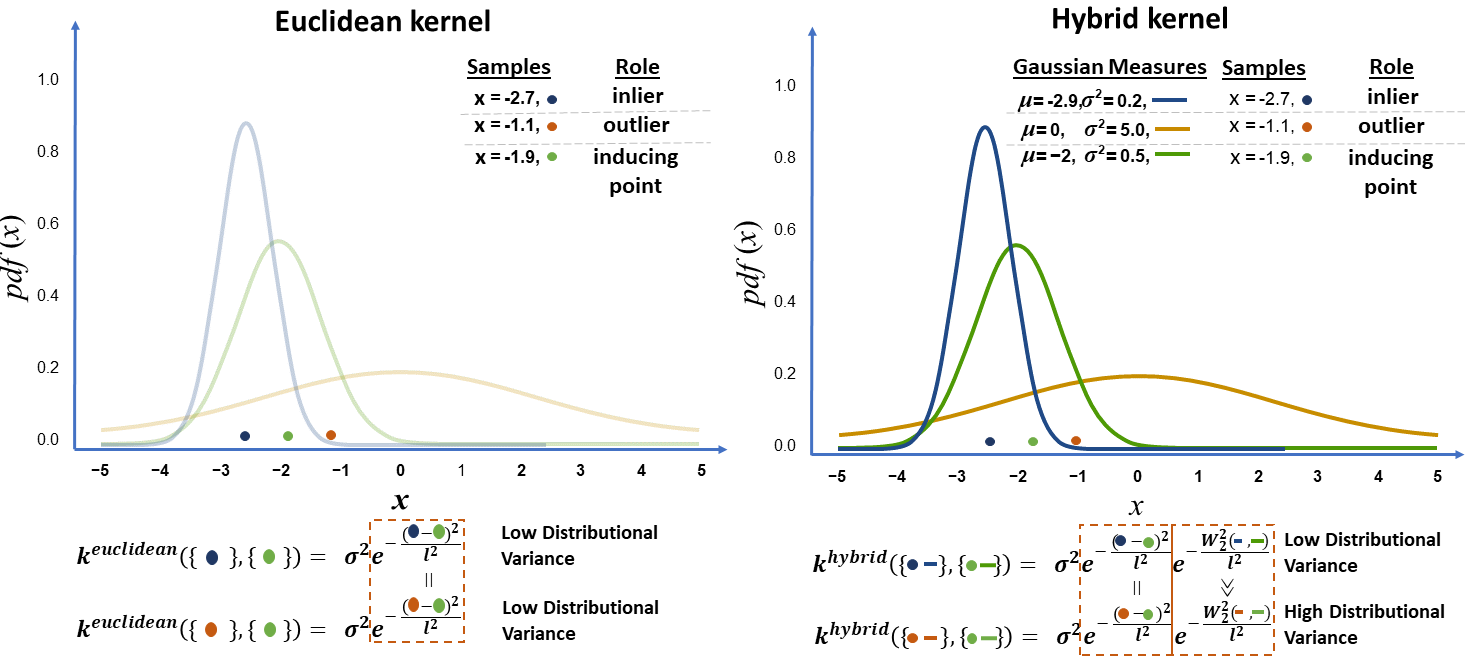}
    \caption[Conceptual difference between euclidean and hybrid kernel.]{\textbf{Conceptual difference between euclidean and hybrid kernel.} Explicit difference between moments in Wasserstein-2 distance ensures \emph{distance-awareness}.}
    \label{fig:schematic_hybrid_kernel}  
\end{figure}

\paragraph{Hybrid kernels and OOD detection.} The key motivation behind the introduction of hybrid kernels can be succinctly described by a simple toy example. Considering two-layered DGPs and DDGPs, we assume an in-distribution point $x_{in-d}$ to have low total variance in hidden layer $F_{1}$, respectively an OOD point $x_{ood}$ to have high total variance. In the DGP case, upon sampling from $q(F_{1}(x_{in-d}))$  and  $q(F_{1}(x_{ood}))$ we can end up with samples which are equally distant with respect to inducing points' location $Z_{1}$. If this occurs, then non-parametric variance (proxy for distributional uncertainty) will be equal for $x_{in-d}$ and $x_{ood}$ in $F_{2}$. Hence, what was initially flagged as OOD in the first hidden layer will be considered as in-distribution by the second hidden layer, thus not satisfying Desiderata \RNum{1}. Conversely, in the DDGP case and under the assumption that the variance of distributional inducing points' locations $Z_{2}$ is almost equal in distribution to the total variance of in-distribution points in $F_{1}$, the Wasserstein-2 component of the hybrid kernel will notice that there is a higher distance between the now distributional inducing point location and $x_{ood}$, as opposed to the distance of the former with $x_{in-d}$. Then, the non-parametric variance of $x_{ood}$ will be higher than that of $x_{in-d}$, thus satisfying Desiderata \RNum{1}. A visual depiction of this case study is illustrated in Figure \ref{fig:schematic_hybrid_kernel}.

\subsection{Evidence lower bound}
In this subsection we outline the generative process of constructing multi-layered architectures using the hybrid kernel, which we will coin as Distributional Deep Gaussian Processes (DDGP).
First, we remind ourselves that a DGP is defined as a stack of shallow GP acting as the prior:
\begin{equation}
 \underbrace{p(Y|F_{L})}_{\text{likelihood}}\underbrace{\prod_{l=1}^{L} p(F_{l}|F_{l-1},U_{l};Z_{l-1})p(U_{l})}_{\text{prior}}
\end{equation}
, where for brevity of notation we denote $F_{0} = X$. The SVGPs between hidden layers are treated as being noiseless. As the prior is analytically intractable to integrate, \cite{salimbeni2017doubly} have suggested to sample from each hidden layer of the DGP in order to obtain unbiased stochastic gradients. The difference between DGP and DDGP resides in the fact that for $l \geq 2$ we not only sample from the marginal of the data points $q(F_{l};F_{l-1})$, but we also explicitly use this marginal in the computation of the next layer's kernel. More concretely, we have the following joint density prior $p(Y,\{F_{l},U_{l}\}_{l=1}^{L})$:
\begin{equation}
    \underbrace{p(Y|F_{L})}_{likelihood}\underbrace{\prod_{l=2}^{L} p(F_{l}|F_{l-1},U_{l};Z_{l-1})p(U_{l})}_{ \text{Wasserstein \& Euclidean space}}  \underbrace{ p(F_{1}|U_{1};X)p(U_{1})}_{\text{Euclidean space}}
\end{equation}
We introduce a factorized posterior between layers and dimensions of the following form :  $q(\{F_{l}\}_{l=1}^{L}, \\ \{U_{l}\}_{l=1}^{L}) =  p(F_{L}|U_{L};Z_{L-1})\prod_{l=1}^{L}q(U_{l})$, where $q(U_{l})$ is taken to be a multivariate Gaussian with mean $m_{U_{l}}$ and variance $S_{U_{l}}$. 
The posterior over $q(F_{1})$ is similar to the DGP case, with the difference residing in $l\geq2$, where we have the following conditional DistGP:
\begin{equation}
 p(F_{l}|U_{l};Z_{l-1},F_{l-1}) = \mathcal{N}(F_{l}|K_{fu}^{H}\inv{K^{H}_{uu}}U_{l}, K_{ff}^{H} - Q_{ff}^{H}) \label{eqn:conditional_hybrid_gp}
\end{equation}
After sampling from $q(F_{l-1})$ and $Z_{l-1}\sim \mathbf{N}(M_{Z_{l-1}},\Sigma_{Z_{l-1}})$, we proceed to integrate out $U_{l}$, arriving at $q(F_{l})$:
\begin{equation}
    \mathcal{N}(F_{l}|K_{fu}^{H}\inv{K^{H}_{uu}}m_{U_{l}}, K_{ff}^{H} - Q_{ff}^{H} + K_{fu}^{H}\inv{K^{H}_{uu}}S_{U_{l}}\inv{K^{H}_{uu}}K_{uf}^{H})    
\end{equation}
where in the computation of the hybrid kernels we utilise $q(F_{l-1})$ for computing the Wasserstein-2 distance, respectively samples from $q(F_{l-1})$ in the Euclidean part. It is worth highlighting the fact that for $l \geq 2$, $Z_{l}$ are MVN. Further derivations will lead us to the DDGP ELBO:
\begin{equation}
    \mathcal{L}_{DDGP} = \mathbb{E}_{\{q(F_{l},U_{l}\}_{l=1}^{L}} p(Y|F_{L}) - \sum_{l=2}^{L} \mathbb{E}_{Z_{l}}\left[KL\left[q(U_{l}) \| p(U_{l})\right]\right] - KL\left[q(U_{1}) \| p(U_{1})\right]
\end{equation}

\subsection{Propagation of variance in DDGP} \label{sec:propagation_outlier_ddgp}

In this subsection, we investigate what are the necessary requirements for reliable propagation of non-parametric variance through the hierarchy for OOD data points.

\begin{proposition} \label{thm:outlier_req_ddgp}
    We consider the approximate posterior DDGP as a composition of functions $q_{L} = q_{L} \circ ... \circ q_{1}$ with $q_{l}$ being given by $
    q(F_{l}|F_{l-1}) = \int p(F_{l}|U_{l},F_{l-1})q(U_{l}) = \mathcal{N}(F_{l} \mid \tilde{U_{l}}(F_{l-1}), \tilde{\Sigma_{l}}(F_{l-1}))
    $. We consider an OOD data point $x_{out}$ in input space such that $\tilde{\Sigma_{1}}(x_{out}) = \sigma^{2}_{1}$ and $\tilde{U_{1}}(x_{out}) = 0$, respectively $x_{in}$ to be in-distribution, with $\tilde{\Sigma}_{1}(x_{in-d}) = V_{in-d} \leq \sigma^{2}_{1}$ and $\tilde{U}_{1}(x_{in-d}) = M_{in-d}$. We assume that $M_{Z_{1}}$ are equidistantly placed between $[-3\sigma_{1},3\sigma_{1}]$ and $\Sigma_{Z_{1}} \leq \sigma^{2}_{1}$. The approximate variance in the second layer of $x_{out}$ will be higher than $x_{in-d}$ if the following hold:
    \begin{equation}
        \sigma^{2}_{1} >> \Sigma_{Z_{2}} ~ \textit{and} ~
        V_{in-d} \approx \Sigma_{Z_{2}} \nonumber
    \end{equation}
\end{proposition}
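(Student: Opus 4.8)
The plan is to reuse the Girard-style Gaussian approximation of the second-layer predictive variance, equation \eqref{eqn:approximation_variance}, exactly as in Proposition \ref{thm:outlier_req_dgp}, but to observe that for the hybrid kernel the Schur-complement term $\tilde{\Sigma}_{2}(\tilde{U}_{1}(x))$ now carries the discriminative burden by itself. First I would write the diagonal non-parametric variance of the second layer as $\tilde{\Sigma}_{2} = K_{ff}^{H} - K_{fu}^{H}(K_{uu}^{H})^{-1}K_{uf}^{H} = \sigma^{2}_{2} - K_{fu}^{H}(K_{uu}^{H})^{-1}K_{uf}^{H}$, using $K_{ff}^{H} = \sigma^{2}_{2}$ since the Wasserstein-2 self-distance vanishes. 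The whole question then reduces to comparing the cross-similarity $K_{fu}^{H}$ between each point's first-layer marginal and the distributional inducing points $Z_{2} \sim \mathcal{N}(M_{Z_{2}}, \Sigma_{Z_{2}})$, because $\tilde{\Sigma}_{2}$ is monotonically decreasing in that similarity.

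The central step exploits the factorization of equation \eqref{eqn:hybrid_kernel} into a Euclidean factor and a Wasserstein factor, together with the additive split of the univariate Wasserstein-2 distance $W_{2}^{2}(\mathcal{N}(m_{1}, \Sigma_{1}), \mathcal{N}(m_{2}, \Sigma_{2})) = |m_{1} - m_{2}|^{2} + |\sqrt{\Sigma_{1}} - \sqrt{\Sigma_{2}}|^{2}$. Evaluating against an inducing distribution gives $M_{Z_{2}}^{2} + (\sqrt{\sigma^{2}_{1}} - \sqrt{\Sigma_{Z_{2}}})^{2}$ for the OOD marginal $\mathcal{N}(0, \sigma^{2}_{1})$ and $(M_{in-d} - M_{Z_{2}})^{2} + (\sqrt{V_{in-d}} - \sqrt{\Sigma_{Z_{2}}})^{2}$ for the in-distribution marginal $\mathcal{N}(M_{in-d}, V_{in-d})$. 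The Wasserstein factor thus contributes an explicit, sample-independent variance-mismatch penalty to which the purely Euclidean DGP kernel of Proposition \ref{thm:outlier_req_dgp} is blind.

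Substituting the two hypotheses closes the comparison. Under $\sigma^{2}_{1} \gg \Sigma_{Z_{2}}$ the OOD variance-mismatch term is approximately $\sigma^{2}_{1}$ and large, so the Wasserstein factor $\exp(-W_{2}^{2}/l_{2}^{2})$ collapses towards zero, forcing $K_{fu}^{H}(x_{out}, \cdot) \to 0$ and hence $\tilde{\Sigma}_{2} \to \sigma^{2}_{2}$. Under $V_{in-d} \approx \Sigma_{Z_{2}}$ the in-distribution variance-mismatch term vanishes, so its Wasserstein factor stays near unity, its similarity $K_{fu}^{H}(x_{in-d}, \cdot)$ stays large, and $\tilde{\Sigma}_{2}$ is driven well below $\sigma^{2}_{2}$. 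This yields $\tilde{\Sigma}_{2}(x_{out}) \geq \tilde{\Sigma}_{2}(x_{in-d})$ directly; combining with equation \eqref{eqn:approximation_variance} and the fact that the remaining first-order derivative terms are bounded then gives $v(F_{2}(x_{out})) \geq v(F_{2}(x_{in-d}))$.

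The main obstacle is the Euclidean factor, which is evaluated at samples drawn from $q(F_{1})$ rather than at the marginal itself. As flagged in the toy example of subsection \ref{sec:hybrid_kernel}, samples from the high-variance OOD marginal and the low-variance in-distribution marginal can by chance land equidistant from the inducing means, so the Euclidean factors alone need not separate the two points. The crux is therefore to argue that the multiplicative Wasserstein factor, which depends only on the fixed predictive variances $\sigma^{2}_{1}$, $V_{in-d}$ and $\Sigma_{Z_{2}}$ and not on the random sample locations, dominates the kernel value and enforces the separation in expectation over the sampling. Formalizing this sample-independence and carrying it through the exact $\Psi$-type moment integrals of subsection \ref{sec:exact_moments_girard}, now with the extra Wasserstein factor multiplying each entry of the $Q$ matrix of equation \eqref{eqn:girard_exact_var}, is the technical heart of the argument, and is exactly where the two stated conditions are needed.
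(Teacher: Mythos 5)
Your proposal is correct and follows essentially the same route as the paper: reduce the comparison to the non-parametric (Schur-complement) variances under the hybrid kernel, split the univariate Wasserstein-2 distance into mean-mismatch and variance-mismatch terms, and observe that $\sigma^{2}_{1} \gg \Sigma_{Z_{2}}$ forces the OOD Wasserstein factor toward zero while $V_{in-d} \approx \Sigma_{Z_{2}}$ keeps the in-distribution factor near unity. The only real difference is how the Euclidean factor is neutralized: what you flag as ``the technical heart'' (sample-dependence of the Euclidean part, requiring $\Psi$-type moment integrals) the paper sidesteps entirely by simplifying to a single inducing point $Z_{m}$ placed halfway between $0$ and $M_{in-d}$, so that $(0-Z_{m})^{2} = (M_{in-d}-Z_{m})^{2}$ and the Euclidean (and mean-difference) contributions cancel exactly after taking logarithms, leaving only the variance-mismatch comparison to which the two hypotheses apply directly.
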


\begin{proof}
From the proof of Proposition \ref{thm:outlier_req_dgp} we remind the desired behaviour for the second layer:
\begin{equation}
    \tilde{\Sigma}_{2}(M_{in-d}) + V_{in-d} * \left(\frac{\partial \tilde{U}_{2}(F_{1})}{\partial F_{1}}\right)^{2}\Bigr|_{\substack{F_{1}=M_{in-d}}}
    \leq   \tilde{\Sigma}_{2}(0) + \sigma^{2} * \left(\frac{\partial \tilde{U}_{2}(F_{1})}{\partial F_{1}}\right)^{2}\Bigr|_{\substack{F_{1}=0}}  
\end{equation}
In contrast to zero mean DGP, where under current assumptions we obtained $\tilde{\Sigma}_{2}(M_{in-d}) \approx \tilde{\Sigma}_{2}(0)$, this is not the case for DDGP. Expanding on the variance terms $\tilde{\Sigma}_{2}(M_{in-d})$ and $\tilde{\Sigma}_{2}(0)$ looking at their non-parametric variance:
\begin{equation}
    \sigma^{2}_{2} - K_{M_{in-d}u}^{H}\inv{K_{uu}^{H}}K_{uM_{in-d}}^{H} \leq \sigma^{2}_{2} - K_{0u}^{H}\inv{K_{uu}^{H}}K_{u0}^{H}
\end{equation}
To simplify the notation we can consider just one inducing point, lying halfway between 0 and $M_{in-d}$ at $Z_{m}$ and with $Z_{var} << \sigma^{2}_{1}$. Using the hybrid kernel formulation and re-arranging the terms in the above equation considering the matrices are scalars we arrive at: 
\begin{align}
    \sigma^{2}_{2}  &\exp{\frac{-W_{2}^{2}(\mathcal{N}(0,\sigma^{2}_{1}),\mathcal{N}(Z_{m},Z_{var}))}{l^{2}_{2}}} \exp{\frac{-(0-Z_{m})^{2}}{l^{2}_{2}}}  \leq \\ & \nonumber \sigma^{2}_{2} \exp \frac{-W_{2}^{2}(\mathcal{N}(M_{in-d},V_{in-d}),\mathcal{N}(Z_{m},Z_{var}))}{l^{2}_{2}}  \exp \frac{-(M_{in-d}-Z_{m})^{2}}{l^{2}_{2}}    
\end{align}
which after eliminating redundant terms and taking the logarithm we obtain:
\begin{equation}
    W_{2}^{2} (\mathcal{N}(0,\sigma^{2}_{1}), \mathcal{N}(Z_{m},Z_{var}) ) +  (0-Z_{m})^{2}  \geq   W_{2}^{2}(\mathcal{N}(M_{in-d},V_{in-d}), \mathcal{N}(Z_{m},Z_{var})) +  (M_{in-d}-Z_{m})^{2}   
\end{equation}
The above inequality holds if and only if $\sigma^{2}_{1}>>Z_{var}$ and $V_{in-d} \approx Z_{var}$, thus proving our proposition.

\end{proof}

\begin{remark}
    In opposition to DGP which require a specific pattern of first order derivatives surrounding 0, DDGP rely less on this constraint as the above inequality is already providing a significant difference in non-parametric variance between outlier and inlier points, thus satisfying Desiderata \RNum{1}. For any layer $l$, $\Sigma_{Z_{l}}$ will converge towards values of the parametric posterior variance of in-distribution points, since the inducing points are encouraged to gravitate towards the in-distribution manifold at layer $l$. In well optimized scenarios, the parametric posterior variance will be reduced compared to the prior variance ($\sigma_{l-1}^{2}$). Considering this observations, the conditions laid out in Proposition \ref{thm:outlier_req_ddgp} will be satisfied in practice.
\end{remark}

\subsection{\emph{Feature-collapse} in (D)DGP} 

\cite{dunlop2018deep} provide an in-depth analysis of function space properties DGP have, which DDGPs are equivalent to in prior space (see subsection \ref{sec:hybrid_kernel} ). We are interested to find what are the necessary requirements, given a few constraints, of the function space values to collapse in the posterior for in-distribution data points.

We now introduce some notation conventions. For $D \in \mathcal{R}^{d}$ and $\{ D_{l} \in \mathcal{R}^{d_{l}} \}_{l=1}^{L}$, where $d_{l}$ is the number of dimensions in the l-th layer of the hierarchy. We consider the functions $f_{l} : D_{l-1} \to \mathcal{R}^{d_{l}}$, with a DGP being defined as composition of functions: $f_{n}\left( \cdots f_{2}\left(f_{1}\left(x\right)\right)\right)$.

 For a DGP at layer $n$ we assume the following posterior conditional formulation for two data points $x,~x^{*} \in D_{in-d}$, respectively:
\begin{equation}
p\begin{pmatrix}
    f_{n}(x) \\
    f_{n}(x^{*})
\end{pmatrix} =
\mathcal{N}
    \left[    \begin{pmatrix} 
        K_{xu}K_{uu}^{-1}U_{n} \\
        K_{x^{*}u}K_{uu}^{-1}U_{n}
    \end{pmatrix}    
    ,
    \begin{pmatrix} 
        K_{xx} - Q_{xx}  & K_{xx^{*}} - Q_{xx^{*}} \\
        K_{x^{*}x} - Q_{x^{*}x}   & K_{x^{*}x^{*}} - Q_{x^{*}x^{*}}
    \end{pmatrix}
    \right]
\end{equation}
For subsequent derivations, we denote $\mu(x) = \mathcal{N}\left(K_{xu}K_{uu}^{-1}U_{n} , K_{xx} - Q_{xx} \right)$.

\begin{proposition} \label{thm:posterior_function_dgp}
    We assume $f_{0}$ to be bounded on bounded sets almost-surely. For a DGP, if at each layer $l$ we have satisfied the following inequality $\frac{\sigma^{4}_{l}}{4l_{l}^{4}} \| K_{uu}^{-1}U_{l}\|_{2}^{2} \left(f_{l}(x) - f_{l}(x^{*}) \right)^{2} < 1 $, alongside $Q_{ff} \approx K_{ff}$, given two in-distributions points $x$ and $x^{*}$ we have the following result: 
    \begin{equation}
        P\left( \| f_{n}(x) - f_{n}(x^{*}) \|_{2} \to 0 \mid \{U_{l}\}_{l=1}^{n} \right) = 1 \nonumber
    \end{equation}    
\end{proposition}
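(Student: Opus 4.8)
The plan is to use the hypothesis $Q_{ff}\approx K_{ff}$ to collapse the problem to a deterministic iteration, and then to show that each layer contracts the separation between the two in-distribution images, so that this separation decays geometrically with depth. First I would note that conditioning on $\{U_l\}_{l=1}^{n}$ fixes the parametric (mean) part of every layer, while all randomness of the DGP posterior enters through the Schur-complement covariance $K_{ff}-Q_{ff}$. Under $Q_{ff}\approx K_{ff}$ this covariance is negligible, so conditionally on $\{U_l\}_{l=1}^{n}$ each layer reduces almost surely to the deterministic map $g_l(z)=K_{zu}K_{uu}^{-1}U_l$ with $f_l(x)=g_l(f_{l-1}(x))$, where $f_l(x)$ denotes the layer-$l$ representation of $x$. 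This is exactly why the statement is phrased as a conditional probability-one claim: once the non-parametric variance is removed, the iterates are deterministic functions of the input and it suffices to establish deterministic convergence.

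The heart of the argument is a per-layer contraction estimate. Writing $f_l(x)-f_l(x^{*})=\big(K_{f_{l-1}(x)u}-K_{f_{l-1}(x^{*})u}\big)K_{uu}^{-1}U_l$ and applying Cauchy--Schwarz against $K_{uu}^{-1}U_l$, it remains to bound each coordinate $K_{f_{l-1}(x)z_i}-K_{f_{l-1}(x^{*})z_i}$ of the squared-exponential kernel difference. Two routes give the required $\sigma_l^{2}/l_l^{2}$ scaling: a mean-value argument applied to $\partial_z k^{SE}_l$, or the $1$-Lipschitzness of $e^{-t}$ applied to the argument $(\cdot)^{2}/l_l^{2}$. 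Either way one arrives at $\lvert f_l(x)-f_l(x^{*})\rvert \le L_l\,\lvert f_{l-1}(x)-f_{l-1}(x^{*})\rvert$ with a contraction factor $L_l$ proportional to $\tfrac{\sigma_l^{2}}{2l_l^{2}}\lVert K_{uu}^{-1}U_l\rVert_2$; the stated inequality $\tfrac{\sigma_l^{4}}{4l_l^{4}}\lVert K_{uu}^{-1}U_l\rVert_2^{2}\big(f_l(x)-f_l(x^{*})\big)^{2}<1$ is then exactly the requirement that $L_l<1$, i.e. that layer $l$ strictly shrinks the separation.

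I would then compose the layerwise bounds into the telescoping estimate $\lVert f_n(x)-f_n(x^{*})\rVert_2 \le \big(\prod_{l=1}^{n}L_l\big)\,\lVert f_0(x)-f_0(x^{*})\rVert_2$. The assumption that $f_0$ is bounded on bounded sets almost surely ensures the initial separation $\lVert f_0(x)-f_0(x^{*})\rVert_2$ is finite, so a product of factors strictly below one sends the right-hand side to $0$ as the depth grows. Because every quantity is deterministic given $\{U_l\}_{l=1}^{n}$, the resulting convergence $\lVert f_n(x)-f_n(x^{*})\rVert_2\to0$ holds with conditional probability one, which is the claim.

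The main obstacle is the contraction step, not the composition. First, the hypothesis is mildly self-referential, since $L_l$ is expressed through $f_l(x)-f_l(x^{*})$, the \emph{output} of the very layer it governs; I would therefore need to arrange the induction so that the inequality certifies contraction consistently along the iteration rather than circularly. Second, pinning the precise constant $\sigma_l^{2}/(2l_l^{2})$ is delicate: the mean-value route requires maximizing $\lvert t\,e^{-t^{2}}\rvert$-type terms, whereas the $e^{-t}$-Lipschitz route produces the clean $1/l_l^{2}$ factor at the price of an extra $\lvert f_{l-1}(x)+f_{l-1}(x^{*})-2z_i\rvert$ term that must be absorbed near the inducing points. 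Finally, some care is needed to guarantee that $\prod_l L_l$ tends to zero rather than to a positive limit, which is where the uniform validity of the layerwise hypothesis across $l$ is used.
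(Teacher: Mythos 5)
Your proposal is correct in substance, and its core estimate coincides with the paper's: both apply Cauchy--Schwarz against $K_{uu}^{-1}U_l$, bound the squared-exponential kernel difference via the elementary inequality $1-e^{-t}<t$ (after using the proximity of $x^{*}$ to the inducing locations to remove the cross term you worry about absorbing), obtain the per-layer factor $\frac{\sigma_l^{4}}{4l_l^{4}}\,\|K_{uu}^{-1}U_l\|_2^{2}\,\left(f_{l-1}(x)-f_{l-1}(x^{*})\right)^{2}$, and compose these across layers into a product multiplying $\left(f_0(x)-f_0(x^{*})\right)^{2}$, which is finite by the boundedness assumption on $f_0$. Where you genuinely diverge is the concluding step: you observe that, conditional on $\{U_l\}_{l=1}^{n}$ and with the Schur complement annihilated by $Q_{ff}\approx K_{ff}$, every layer is a deterministic map, so the claim follows from a deterministic telescoping bound; the paper instead keeps the probabilistic framing, bounding conditional second moments, chaining them by induction with the tower property, and then invoking Markov's inequality and the first Borel--Cantelli lemma to convert the moment bound into conditional almost-sure convergence. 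Your route is more elementary and makes transparent that all randomness has been assumed away; the paper's machinery is the version that would still function if the Schur complement were kept as small-but-nonzero, although as written the paper also sets it to zero when computing the conditional expectation, so its Markov/Borel--Cantelli apparatus is largely redundant. Two further points in your favor: the self-referential indexing you flag is real (the proposition's hypothesis pairs layer-$l$ parameters with $f_l$ differences, while the proof's per-layer factor pairs them with $f_{l-1}$ differences), and the final subtlety you identify is shared, unacknowledged, by the paper: a product of factors each strictly below one need not tend to zero, nor be summable as Borel--Cantelli requires; both arguments implicitly need the per-layer factors bounded away from one uniformly in $l$ to obtain the geometric decay on which the conclusion rests.
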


\begin{proof}

In subsequent derivations, we assume that the set of inducing points $\{ Z_{l} \}_{l=1}^{L}$ are sufficiently large and well-optimized so that for points $x \in D$ we have $K_{ff} \approx K_{fu}K_{uu}^{-1}K_{uf}$.

We commence the derivation:
\begin{align}
    \mathbb{E}\left[  \| f_{n}(x) - f_{n}(x^{*})\|_{2}^{2} \mid  f_{n-1}, U_{n} \right] & = \sum_{d=1}^{D_{l}} \mathbb{E}\left[ \mid f_{n}^{d}(x) - f_{n}^{d}(x^{*}) \mid^{2} \mid f_{n-1}, U_{n} \right]  \\ &
    = \sum_{d=1}^{D_{l}} \mid \left(K_{fu} - K_{f^{*}u} \right)K_{uu}^{-1}U_{n}^{d} \mid^{2} 
\end{align} 
We can apply Cauchy-Schwarz inside the brackets to obtain the following inequality:
\begin{align}
    & \leq \sum_{d=1}^{D_{l}} \| K_{fu} - K_{f^{*}u}\|_{2}^{2}  \| K_{uu}^{-1}U_{n}^{d}\|_{2}^{2}  \\ &
    \leq \sum_{d=1}^{D_{l}}  \| \sigma^{2}_{n}\exp{-\frac{\left( x - Z_{n} \right)^{2}}{2l_{n}^{2}}} - \sigma^{2}_{n}\exp{-\frac{\left( x^{*} - Z_{n} \right)^{2}}{2l_{n}^{2}}} \|_{2}^{2}  \| K_{uu}^{-1}U_{n}^{d}\|_{2}^{2}    
\end{align}
We assume that $x = x^{*} + h $ and $x^{*}$ is a high-density data point with regards to $Z_{n}$, respectively $\mid\mid  x - Z_{n}\mid\mid_{2}^{2} \approx 0$. Now we can re-arrange the last equation as:
\begin{align}
    & \leq \sum_{d=1}^{D_{l}} \| \sigma^{2}_{n}\exp{-\frac{\left( h + x^{*} - Z_{n} \right)^{2}}{2l_{n}^{2}}} - \sigma^{2}_{n} \|_{2}^{2}  \| K_{uu}^{-1}U_{n}^{d}\|_{2}^{2}  \\ &
    \leq \sum_{d=1}^{D_{l}} \sigma^{4}_{n} \mid \exp{-\frac{\left( h \right)^{2}}{2l_{n}^{2}}} - 1 \mid^{2} \| K_{uu}^{-1}U_{n}^{d}\|_{2}^{2}    
\end{align}
We have the following general result $1 - \exp{-x} < x,~ \text{for} ~ x>-1$ which we apply in our case since $\frac{\left( h \right)^{2}}{2l^{2}} > -1$, we obtain:
\begin{equation}
    \mathbb{E}\left[  \| f_{n}(x) - f_{n}(x^{*}) \|_{2}^{2} \mid f_{n-1},U_{n}\right] \leq \sum_{d=1}^{D_{l}}  \frac{\sigma^{4}_{n}}{4l_{n}^{4}} \| K_{uu}^{-1}U_{n}^{d}\|_{2}^{2} \left( h \right)^{2} \left( h \right)^{2}    
\end{equation}
which we can re-express with regards to previous layer features to make subsequent derivations easier to follow:
\begin{equation}
    \mathbb{E}\left[  \| f_{n}(x) - f_{n}(x^{*}) \|_{2}^{2} \mid f_{n-1},U_{n}\right] \leq \sum_{d=1}^{D_{l}}  \frac{\sigma_{n}^{4}}{4l_{n}^{4}} \| K_{uu}^{-1}U_{n}^{d}\|_{2}^{2} \left( h \right)^{2} \left( f_{n-1}(x) - f_{n-1}(x^{*}) \right)^{2}    
\end{equation}
We can now apply induction and the tower property of conditional expectation to arrive at:
\begin{equation}
    \mathbb{E}\left[  \| f_{n}(x) - f_{n}(x^{*}) \|_{2}^{2} \mid \{U_{l}\}_{l=1}^{n}\right]  \leq \prod_{l=1}^{n} \frac{\sigma_{l}^{4}}{4l_{l}^{4}} \| K_{uu}^{-1}U_{l}\|_{2}^{2} \left( h \right)^{2} \left( f_{0}(x) - f_{0}(x^{*}) \right)^{2}    
\end{equation}

By Markov's inequality, for any $\epsilon > 0$ we have that:
\begin{equation}
    P\left( \| f_{l+1}(x) - f_{l+1}(x^{*})  \|_{2} \geq \epsilon \mid \{U_{l}\}_{l=1}^{n} \right) \leq \frac{1}{\epsilon^{2}} \prod_{l=1}^{n} \frac{\sigma_{l}^{4}}{4l_{l}^{4}} \| K_{uu}^{-1}U_{l}\|_{2}^{2} \left( h \right)^{2}     \left( f_{0}(x) - f_{0}(x^{*}) \right)^{2}    
\end{equation}

We can apply the first Borel-Cantelli lemma to obtain:
\begin{equation}
    P\left( \cap_{l=1}^{\infty} \cup_{m=l}^{\infty} \|  f_{m}(x) - f_{m}(x^{*}) \|_{2} \geq \epsilon \mid \{U_{l}\}_{l=1}^{n} \right) = 0
\end{equation}
Lastly, we can express the following:
\begin{align}
    P\left( \|  f_{n}(x) - f_{n}(x^{*}) \|_{2} \to 0 \mid \{U_{l}\}_{l=1}^{n} \right) &= P( \cap_{k=1}^{\infty} \cup_{l=1}^{\infty} \cap_{m=l}^{\infty} \|  f_{m}(x) - f_{m}(x^{*}) \|_{2}  \leq \frac{1}{k} \mid \{U_{l}\}_{l=1}^{n} ) \\
    &= 1 - P( \cup_{k=1}^{\infty} \cap_{l=1}^{\infty} \cup_{m=l}^{\infty} \|  f_{m}(x) - f_{m}(x^{*}) \|_{2}   \geq \frac{1}{k} \mid \{U_{l}\}_{l=1}^{n} ) \\
    & ~\geq 1 - \sum\limits_{k=1}^{\infty} P( \cap_{l=1}^{\infty} \cup_{m=l}^{\infty} \|  f_{m}(x) - f_{m}(x^{*}) \|_{2}   \geq \frac{1}{k} \mid \{U_{l}\}_{l=1}^{n} ) \\
    &= 1
\end{align}
From which we obtain the proof of our proposition, respectively $ P\left( \|  f_{n}(x) - f_{n}(x^{*}) \|_{2} \to 0 \mid \{U_{l}\}_{l=1}^{n} \right) = 1$
\end{proof}

\begin{remark}
    From the constraint $\frac{\sigma_{l}^{4}}{4l_{l}^{4}} \| K_{uu}^{-1}U_{l}\|_{2}^{2} \left(f_{l}(x) - f_{l}(x^{*}) \right)^{2} < 1 $ to ensure \emph{feature-collapse}, we can see that in the converse case, of avoiding \emph{feature-collapse}, this can be assured by either increasing the kernel variance $\sigma^{2}_{l}$ or $\| K_{uu}^{-1}U_{l}\|_{2}^{2}$, which are both responsible for the amplitude of sampled functions. Moreover, another pathway to avoid \emph{feature-collapse} is to increase the lengthscale $l^{2}_{l}$. Intuitively, this will ensure that unseen test points close to an inducing point will reliably have their predicted mean almost equal to that respective inducing point's value. Conversely, for lower lengthscale values, the predicted mean for an unseen points will be a weighted version of inducing points' output values across a larger distance, hence favouring similar values for unseen points, which will eventually lead to \emph{feature-collapse}.
\end{remark}

Now we are interested in deriving similar constraints for DDGPs to suffer from \emph{feature-collapse} and possibly seeing if there are any differences.

\begin{proposition} \label{thm:posterior_function_ddgp}
    We assume $f_{0}$ to be bounded on bounded sets almost-surely. For a DDGP, if at each layer $l$ except the first one we have satisfied the following inequality $ \frac{\sigma^{4}_{l}}{4l^{4}_{l}} \| K_{uu}^{-1}U_{l}\|_{2}^{2} [ f_{l}(x) - f_{l}(x*) + \frac{\| m(f_{l}(x)) - m(f_{l}(x^{*})) \|_{2}^{2} + \| v(f_{l}(x)) - v(f_{l}(x^{*})) \|_{2}^{2}}{f_{l}(x) - f_{l}(x*)} ]^{2}  \leq 1$, given two in-distributions points $x$ and $x^{*}$ s.t. $Q_{ff} \approx K_{ff}$ we have the following result: 
    \begin{equation}
        P\left( \|  f_{n}(x) - f_{n}(x^{*}) \|_{2} \to 0 \mid \{U_{l}\}_{l=1}^{n} \right) = 1 \nonumber
    \end{equation}    
\end{proposition}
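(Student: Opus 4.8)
The plan is to reuse, almost verbatim, the machinery of Proposition~\ref{thm:posterior_function_dgp}, replacing the Euclidean squared-exponential kernel by the hybrid kernel of equation~\eqref{eqn:hybrid_kernel} and then bookkeeping the extra term its Wasserstein-2 factor produces. The first layer operates in pure Euclidean space and is therefore identical to the DGP base case; the modification is confined to $l \geq 2$, which is exactly the range the hypothesis concerns (all layers except the first).

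First I would form the conditional second moment $\mathbb{E}\left[\|f_n(x) - f_n(x^*)\|_2^2 \mid f_{n-1}, U_n\right]$. The hypothesis $Q_{ff}^H \approx K_{ff}^H$ kills the Schur-complement (non-parametric) variance, so, exactly as for the DGP, this expectation reduces to the squared difference of posterior means $\sum_d \left|\left(K_{fu}^H - K_{f^*u}^H\right)K_{uu}^{-1}U_n^d\right|^2$, after which Cauchy-Schwarz separates $\|K_{fu}^H - K_{f^*u}^H\|_2^2$ from $\|K_{uu}^{-1}U_n^d\|_2^2$.

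The heart of the argument is expanding the hybrid-kernel difference. Since $k^H$ factorizes into a Euclidean exponential in the samples times a Wasserstein-2 RBF exponential in the underlying Gaussians, its exponent is a sum of a Euclidean and a Wasserstein term; plugging in the univariate form $W_2^2 = |m_1 - m_2|^2 + |\sqrt{\Sigma_1} - \sqrt{\Sigma_2}|^2$ makes the mean- and variance-difference contributions explicit. Taking $x^*$ to be a high-density point whose sample and whose Gaussian both coincide to leading order with the inducing location and distribution, the difference collapses to $\sigma_n^2\,|e^{-t} - 1|$, with $t$ the combined exponent at $x$; applying $1 - e^{-t} < t$ (valid here since $t > -1$) and squaring recovers the prefactor $\frac{\sigma_n^4}{4l_n^4}$. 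The decisive algebraic step is then to factor a single feature gap $f_l(x) - f_l(x^*)$ out of $t$: this factor becomes the recursion driver, exactly as $h$ did for the DGP, while what remains is precisely the normalized bracketed quantity $f_l(x) - f_l(x^*) + \frac{\|m(f_l(x)) - m(f_l(x^*))\|_2^2 + \|v(f_l(x)) - v(f_l(x^*))\|_2^2}{f_l(x) - f_l(x^*)}$, whose square together with $\frac{\sigma_l^4}{4l_l^4}\|K_{uu}^{-1}U_l\|_2^2$ is the per-layer multiplier constrained to be at most one by the hypothesis.

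With this per-layer bound established, I would close the proof as in the DGP case: induction together with the tower property of conditional expectation telescopes the per-layer multipliers into a product that, each factor being at most one, stays bounded and drives the expectation to zero in $n$; Markov's inequality, the first Borel-Cantelli lemma, and the same countable-union/intersection manipulation then upgrade this to $P\left(\|f_n(x) - f_n(x^*)\|_2 \to 0 \mid \{U_l\}_{l=1}^n\right) = 1$. I expect the main obstacle to be the expansion and re-factoring of the Wasserstein exponent---verifying that $t$ can genuinely be written with one feature gap pulled out and the remainder cast in the normalized bracketed form---and, relatedly, ensuring the induction closes when the bracket carries the differences $m(f_l(x)) - m(f_l(x^*))$ and $v(f_l(x)) - v(f_l(x^*))$; controlling these as the lower-layer features contract requires the continuity (Lipschitz-type smoothness) of the posterior moment maps $m(\cdot)$ and $v(\cdot)$, which is the genuinely new ingredient relative to the purely Euclidean DGP argument.
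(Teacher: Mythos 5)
Your proposal follows essentially the same route as the paper's proof: the same Cauchy--Schwarz reduction to the posterior-mean difference under $Q_{ff} \approx K_{ff}$, the same expansion of the hybrid-kernel exponent via the univariate Wasserstein-2 formula, the inequality $1 - e^{-t} < t$, the refactoring $\left(h^{2} + c\right)^{2} = h^{2}\left[h + \frac{c}{h}\right]^{2}$ that pulls one feature gap out as the recursion driver, and the same induction/tower-property, Markov, Borel--Cantelli closing argument. The obstacle you flag---controlling the moment differences as lower-layer features contract---is handled in the paper purely by assumption (it posits $m(x) = m(x^{*}) + m(h)$ and $v(x) = v(x^{*}) + v(h)$ with $\mu(x^{*}) \stackrel{d}{\approx} \mu(Z_{n})$, letting the hypothesis bound the bracketed per-layer multiplier), so no separate Lipschitz argument for the moment maps is given there.
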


\begin{proof}

 We adapt our previous proof for Proposition \ref{thm:posterior_function_dgp} to suit the hybrid kernel specific to DDGPs. We have the following upper bound: 
\begin{align}
    \mathbb{E}\left[  \| f_{n}(x) - f_{n}(x^{*}) \|_{2}^{2} \mid f_{n-1},U_{n}\right] \leq  & \sum_{d=1}^{D_{n}} \| \sigma^{2}_{n}\exp{-\frac{\left( x - Z_{n} \right)^{2}+W_{2}^{2}(\mu(x), \mu(Z_{n}))}{2l_{n}^{2}}} \\ & \nonumber - \sigma^{2}_{n}\exp{-\frac{\left( x^{*} - Z_{n} \right)^{2}+W_{2}^{2}(\mu(x^{*}), \mu(Z_{n}))}{2l_{n}^{2}}} \|_{2}^{2}  \| K_{uu}^{-1}U_{n}^{d}\|_{2}^{2}   
\end{align}
We assume that $x = x^{*} + h $ and $x^{*}$ is a high-density data point with regards to $Z_{n}$, respectively $\mid\mid  x - Z_{n}\mid\mid_{2}^{2} \approx 0$. For their first two moments we have equivalent assumptions $m(x) = m(x^{*}) + m(h) $ and $v(x) = v(x^{*}) + v(h)$, with $\mu(x^{*}) \stackrel{d}{\approx} \mu(Z_{n})$ We can re-arrange the last equation as:
\begin{equation}
    \leq \sum_{d=1}^{D_{n}}  \| \sigma^{2}_{n}\exp{-\frac{\left( h + x^{*} - Z_{n} \right)^{2} + \| m(h) \|_{2}^{2} + \| v(h) \|_{2}^{2}}{2l_{n}^{2}}} - \sigma^{2} \|_{2}^{2}  \| K_{uu}^{-1}U_{n}^{d}\|_{2}^{2} 
\end{equation}
We have the following general result $1 - \exp{-x} < x,~ \text{for} ~ x>-1$ which we apply in our case since $\frac{\left( h \right)^{2} + \mid m(h) \mid^{2} + \mid v(h) \mid^{2}}{2l^{2}} > -1$:
\begin{equation}
    \mathbb{E}\left[  \| f_{n}(x) - f_{n}(x^{*}) \|_{2}^{2} \mid f_{n-1},U_{n}\right] \leq \sum_{d=1}^{D_{n}}  \frac{\sigma_{n}^{4}}{4l_{n}^{4}} \| K_{uu}^{-1}U_{n}^{d}\|_{2}^{2} \left( h^{2} + \| m(h) \|_{2}^{2} + \| v(h) \|_{2}^{2} \right)^{2} \label{eqn:cite_this_now}    
\end{equation}
We can see that we can re-express the final term of above equation as follows:
\begin{align}
    \left( h^{2} + \underbrace{\| m(h) \|_{2}^{2} + \| v(h) \|_{2}^{2}}_{c} \right)^{2} &= \left( h^{2} + c \right)^{2} \\
    &= h^{4} + 2h^{2}c + c^{2} \\
    &= h^{2}\left[h^{2} + 2c + \frac{c^{2}}{h^{2}} \right] \\
    &= h^{2} \left[ h + \frac{c}{h} \right]^{2}
\end{align}
We can apply this side derivation to equation \eqref{eqn:cite_this_now}:
\begin{equation}
    \mathbb{E}\left[  \| f_{n}(x) - f_{n}(x^{*}) \|_{2}^{2} \mid f_{n-1},U_{n}\right] \leq \sum_{d=1}^{D_{n}}  \frac{\sigma_{n}^{4}}{4l_{n}^{4}} \| K_{uu}^{-1}U_{n}^{d}\|_{2}^{2} h^{2} \left[ h + \frac{\| m(h) \|_{2}^{2} + \| v(h) \|_{2}^{2}}{h} \right]^{2} h^{2}
\end{equation}
We can now apply induction and the tower property of conditional expectation to arrive at:
\begin{equation}
    \mathbb{E}\left[  \| f_{n}(x) - f_{n}(x^{*}) \|_{2}^{2} \mid \{U_{l}\}_{l=2}^{n}\right]  \leq \prod_{l=2}^{L} \frac{\sigma_{l}^{4}}{4l_{l}^{4}} \| K_{uu}^{-1}U_{l}^{d}\|_{2}^{2} \left[ h + \frac{\| m(h) \|_{2}^{2} + \| v(h) \|_{2}^{2}}{h} \right]^{2}  \left( f_{1}(x) - f_{1}(x^{*}) \right)^{2}    
\end{equation}
Using the upper bound on $\left( f_{1}(x) - f_{1}(x^{*}) \right)^{2}$ from the proof of Proposition \ref{thm:posterior_function_dgp}, we obtain:
\begin{align}
    \mathbb{E}\left[  \| f_{n}(x) - f_{n}(x^{*}) \|_{2}^{2} \mid \{U_{l}\}_{l=1}^{n}\right]  & \leq \frac{\sigma^{4}_{1}}{4l_{1}^{4}} \| K_{uu}^{-1}U_{1}^{d}\|_{2}^{2} \prod_{l=2}^{L} \frac{\sigma_{l}^{4}}{4l_{l}^{4}} \| K_{uu}^{-1}U_{l}^{d}\|_{2}^{2} \left[ h + \frac{\| m(h) \|_{2}^{2} + \| v(h) \|_{2}^{2}}{h} \right]^{2} \left( f_{0}(x) - f_{0}(x^{*}) \right)^{2}    
\end{align}
Similar derivations to the proof of Proposition \ref{thm:posterior_function_dgp} can be applied to arrive at the final result.
\end{proof}

\begin{remark}
    The constraint for DDGPs to suffer from \emph{feature-collapse} $ \frac{\sigma^{4}_{l}}{4l_{l}^{4}} \| K_{uu}^{-1}U_{l} \|_{2}^{2} [ f_{l}(x) - f_{l}(x*) + \frac{\| m(f_{l}(x)) - m(f_{l}(x^{*}))  \|_{2}^{2} + \| v(f_{l}(x)) - v(f_{l}(x^{*})) \|_{2}^{2}}{f_{l}(x) - f_{l}(x*)} ]^{2}  \leq 1$, given two in-distributions points $x$ and $x^{*}$ s.t. $Q_{ff} \approx K_{ff}$ is similar to the one for DGPs expect the final term in the brackets. We can see that for any given value of $f_{l}(x) - f_{l}(x^{*})$, since $\| m(f_{l}(x)) - m(f_{l}(x^{*}))  \|_{2}^{2} + \| v(q(f_{l}(x)) - v(f_{l}(x^{*})) \|_{2}^{2}$ is always positive, it will result in $[ f_{l}(x) - f_{l}(x*) + \frac{\| m(f_{l}(x)) - m(f_{l}(x^{*}) ) \|_{2}^{2} + \| v(f_{l}(x)) - v(f_{l}(x^{*})) \|_{2}^{2}}{f_{l}(x) - f_{l}(x*)} ]^{2} \geq \left[f_{l}(x) - f_{l}(x*)\right]^{2}$. This means that DDGPs are more suitable to guard against \emph{feature-collapse} than DGPs ceteris paribus. Moreover, we posit that the dependence on moment differences between two data points will embed DDGPs with \emph{smoother} properties in comparison to DGPs, since the moments are computed using squared exponential kernels, which will produce smoother functions. With these two properties in mind, we can affirm that DDGPs satisfy Desiderata \RNum{2}.
\end{remark}

\begin{figure}[!htb]
    \centering
    \includegraphics[width=\linewidth]{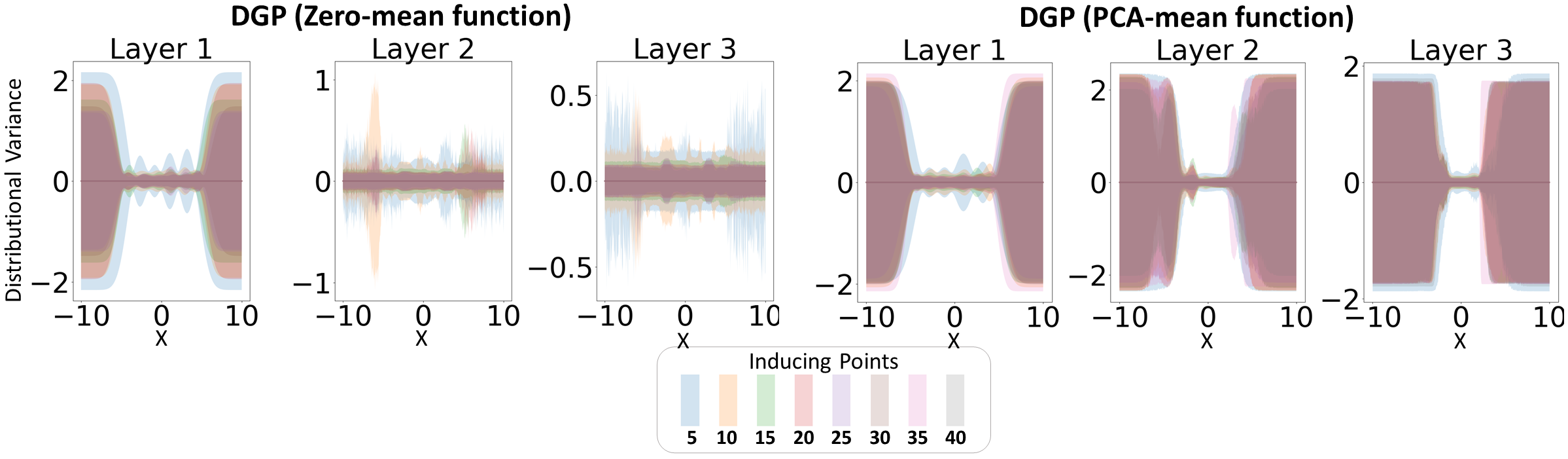}
    \caption[Progressive collapse of distributional uncertainty in DGPs]{\textbf{Progressive collapse of distributional uncertainty in DGPs.} Distributional uncertainty of DGPs trained on a toy dataset with data between -5.0 and 5.0 with a varying number of inducing points. X axis denotes input space.}
    \label{fig:visual_collapse_of_distributional_variance_num_inducing}  
\end{figure}

\section{Experiments}

\paragraph{Collapse of non-parametric variance in DGP.}

\cite{duvenaud2014avoiding} highlighted a certain pathology when stacking GP, respectively the increasingly more non-injective mappings as the network increases in number of layers. In this section we highlight another pathology of DGP with zero-mean functions, namely the progressive collapse of non-parametric uncertainty for larger number of inducing points. This erroneous behaviour is also present in relatively small networks with just two hidden layers (Figure \ref{fig:visual_collapse_of_distributional_variance_num_inducing}). 

To understand what is causing this pathology, we take a simple case study of a DGP with two hidden layers trained on a toy regression dataset (Figure \ref{fig:visual_explanation_collapse_of_variance}). Taking a clear outlier in input space, say the data point situated at -7.5, it gets correctly identified as an outlier in the mapping from input space to hidden layer space as given by its distributional variance. However, its outlier property dissipates in the next layer after sampling, as it gets mapped to regions where the next GP assigns inducing point locations. This is due to points inside the data manifold getting confidently mapped between -2.5 and 2.5 in hidden layer space. Consequently, what was initially correctly identified as an outlier will now have its final distributional uncertainty close to zero. Adding further layers, will only compound this pathology.

\begin{figure}[!htb]
  \centering
    \includegraphics[width=\linewidth]{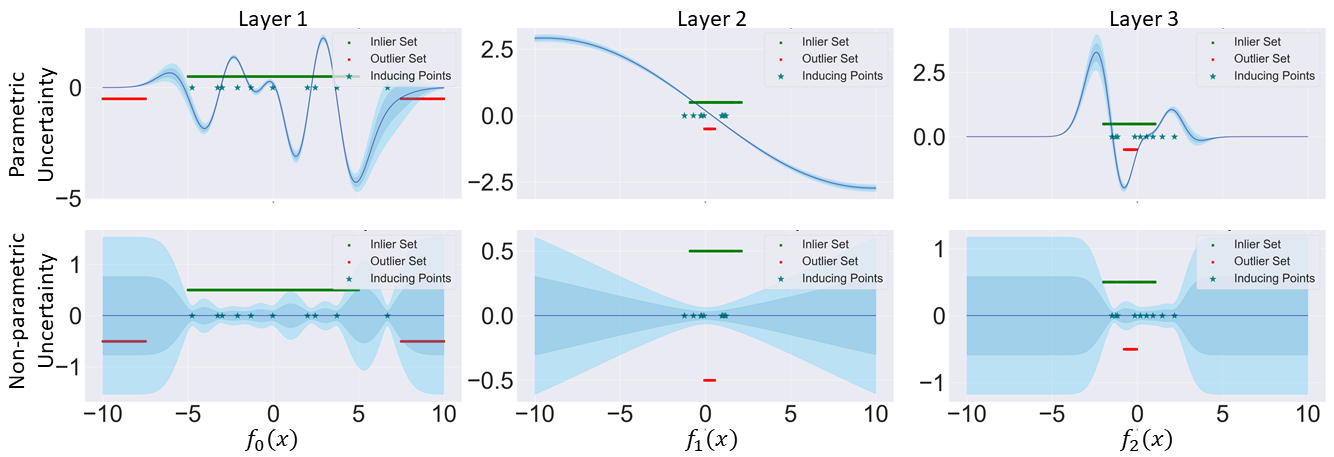}
    \caption[Rationale behind collapse of non-parametric uncertainty.]{\textbf{Rationale behind collapse of non-parametric uncertainty.} Layer-wise decomposition of uncertainty into parametric and non-parametric for a zero-mean DGP. OOD points in input space (red dots) get sampled in hidden layers close to in-distribution points (green dots), hence close to inducing point locations.}
    \label{fig:visual_explanation_collapse_of_variance}  
\end{figure}

\paragraph{Interplay between PCA mean functions and non-parametric uncertainty.}

In this subsection we investigate the effect that PCA mean function has on distributional uncertainty. In the case that the hidden layers' size is equal or larger than input size, then DGPs will retain their OOD capabilities. Conversely, if the hidden layers' size is considerably lower than input size (such as image classification) we posit that it will negatively affect OOD capabilities (Figure \ref{fig:visual_collapse_of_distributional_variance_num_inducing}). We use the ``banana'' dataset and train our proposed models with 2 hidden layers and 10 inducing points. The DGP distributional uncertainty is guided by the direction of the PCA projection of the data, thereby widely extrapolating low uncertainty regions to spatial locations where there is no data. For DDGP the PCA projection only slightly influences distributional variance in the last layer (Figure \ref{fig:banana_dataset}). 

\begin{figure}[!htb]
    \centering
    \subfigure[DGP (PCA mean function)]{\includegraphics[width=0.48\textwidth]{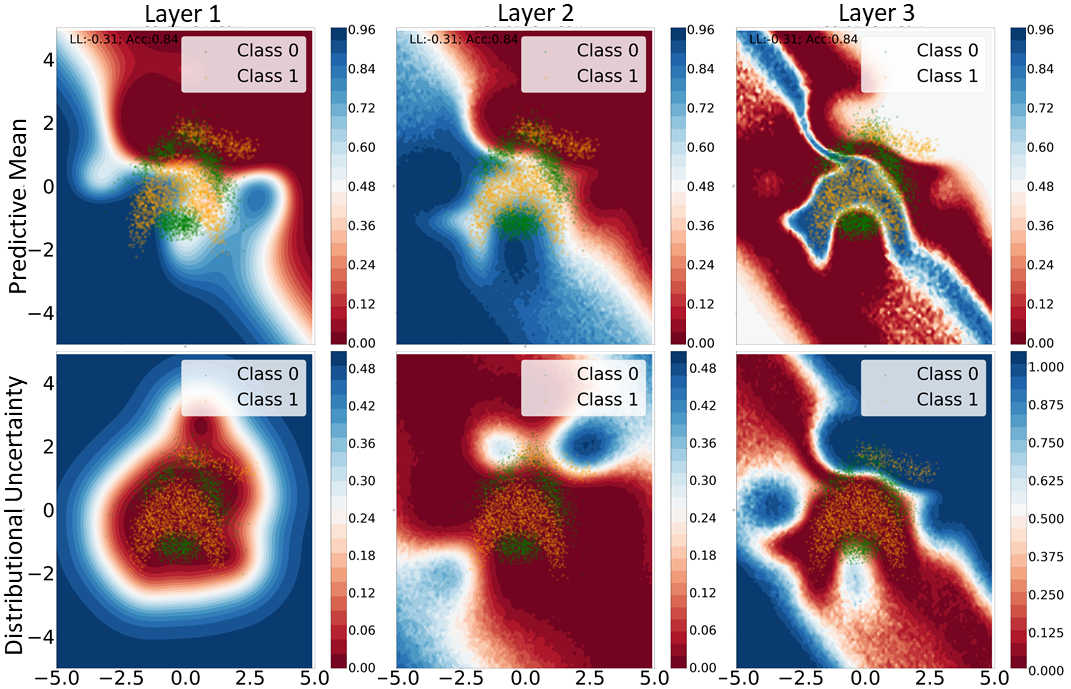}}
    \quad
    \subfigure[DDGP (PCA mean function)]{\includegraphics[width=0.48\textwidth]{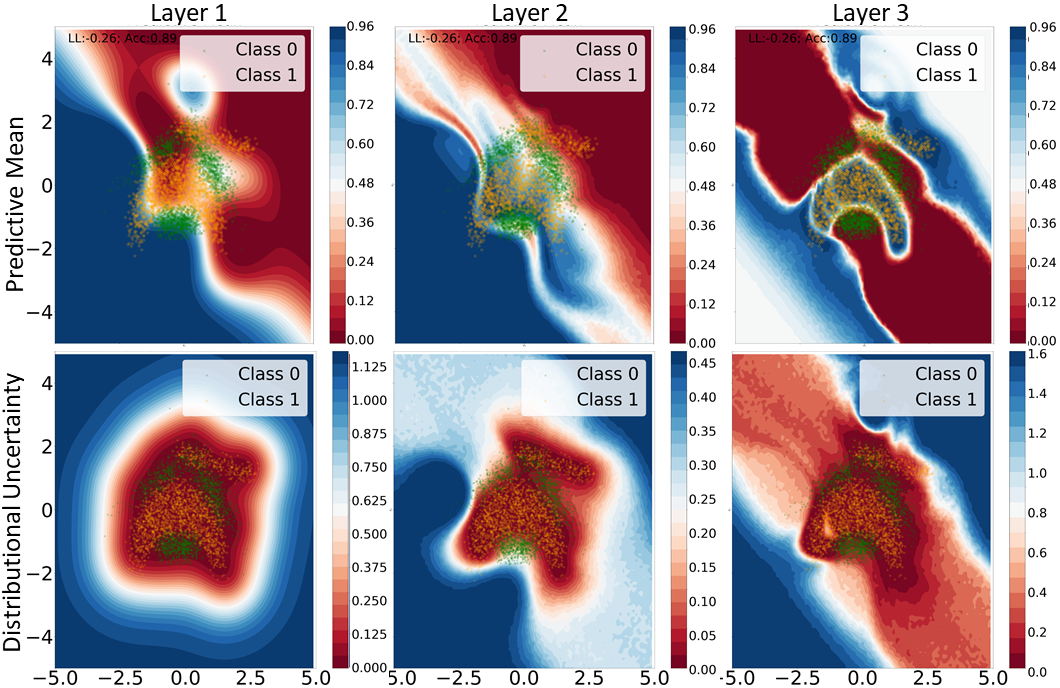}}
    \caption[PCA mean functions influence distributional uncertainty in DGPs.]{\textbf{PCA mean functions influence distributional uncertainty in DGPs.} Layer-wise moments of specified models trained on the ``banana'' dataset. Distributional uncertainty in hidden layers is lower on the data manifold compared to outside the manifold for DDGPs. Conversely, DGPs widely extrapolate low distributional uncertainty outside the data manifold.}
    \label{fig:banana_dataset}  
\end{figure}

\paragraph{Smoothness properties.}

From Proposition \ref{thm:posterior_function_dgp} and  \ref{thm:posterior_function_ddgp} we would expect that DDGPs exhibit smoother hidden layers features. Using our previous experiment on the ``banana'' dataset, we keep track of correlations between two focus points with respect to full input space (Figure \ref{fig:smoothness_banana}). The increased ``locality'' of features in layer 3 is striking in the case of DDGPs. Moreover, we notice even in layer 2 a slight difference in smoothness between the two models.

\begin{figure}[!htb]
  \centering
    \includegraphics[width=\linewidth]{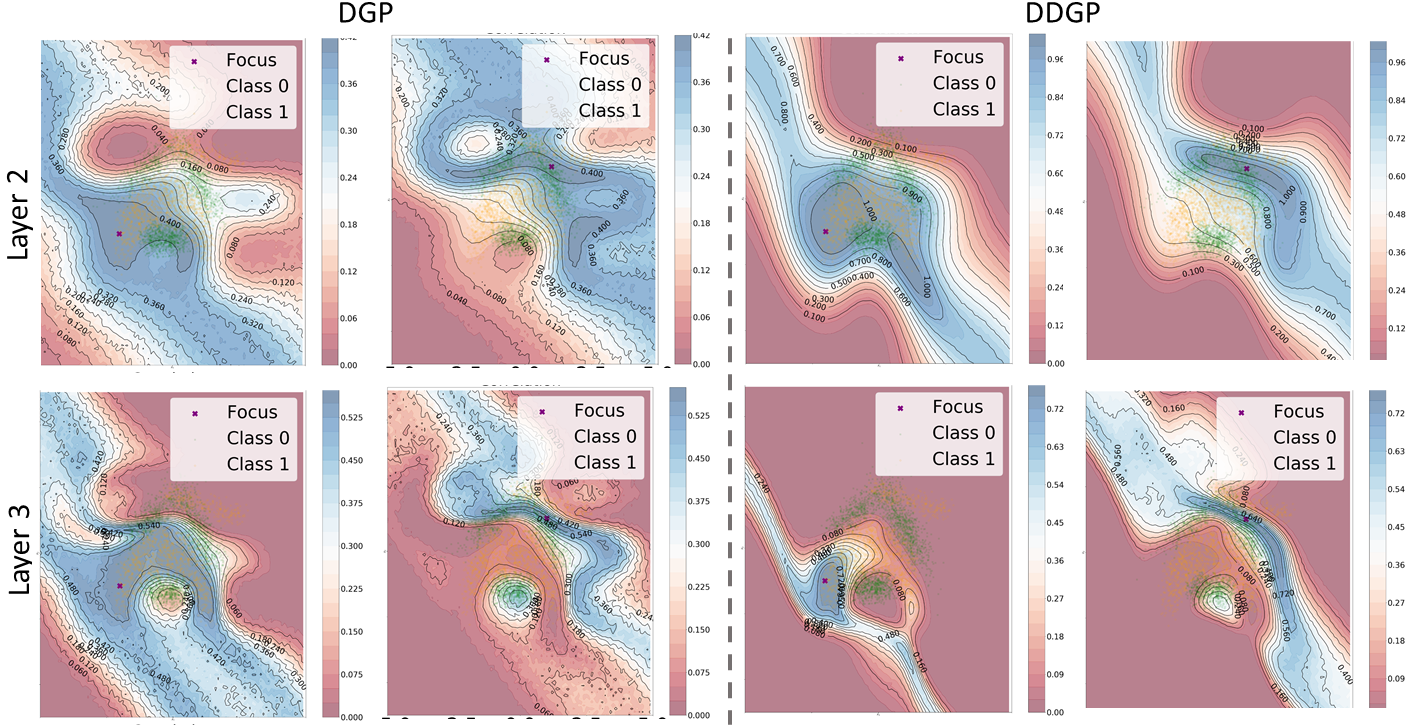}
    \caption[Smoothness comparison between DGPs and DDGPs]{\textbf{Smoothness comparison between DGPs and DDGPs.} Correlation of two focus data points (purple dots) with respect to full input space spectrum at hidden layers for DGPs (PCA mean function) and DDGPs (PCA mean function) trained on ``banana'' dataset. DDGPs (right panel) exhibit smoother and more ``local'' correlations.}
    \label{fig:smoothness_banana}  
\end{figure}

\paragraph{Performance on real-life data.} Besides testing whether DDGPs surpass DGPs in terms of OOD detection, it is also of vital importance to see whether DDGPs manage to obtain calibrated predictive distributions on medium and large scale datasets. For this we evaluate our models on datasets from the UCI machine learning dataset repository. All of the experiments were run with the same initializations of variational parameters and kernel hyperparameters, with the goal of comparing DDGPs with DGPs for 2,3, respectively 4 layers. We used 100 inducing points at each layer, with 5 dimensions per hidden layer and PCA mean functions. Further details regarding training and initialization are given in Appendix \ref{apd:uci_details}.

\begin{figure}[!htb]
    \centering
    \subfigure[Test log-likelihood regression datasets]{\includegraphics[width=0.48\textwidth]{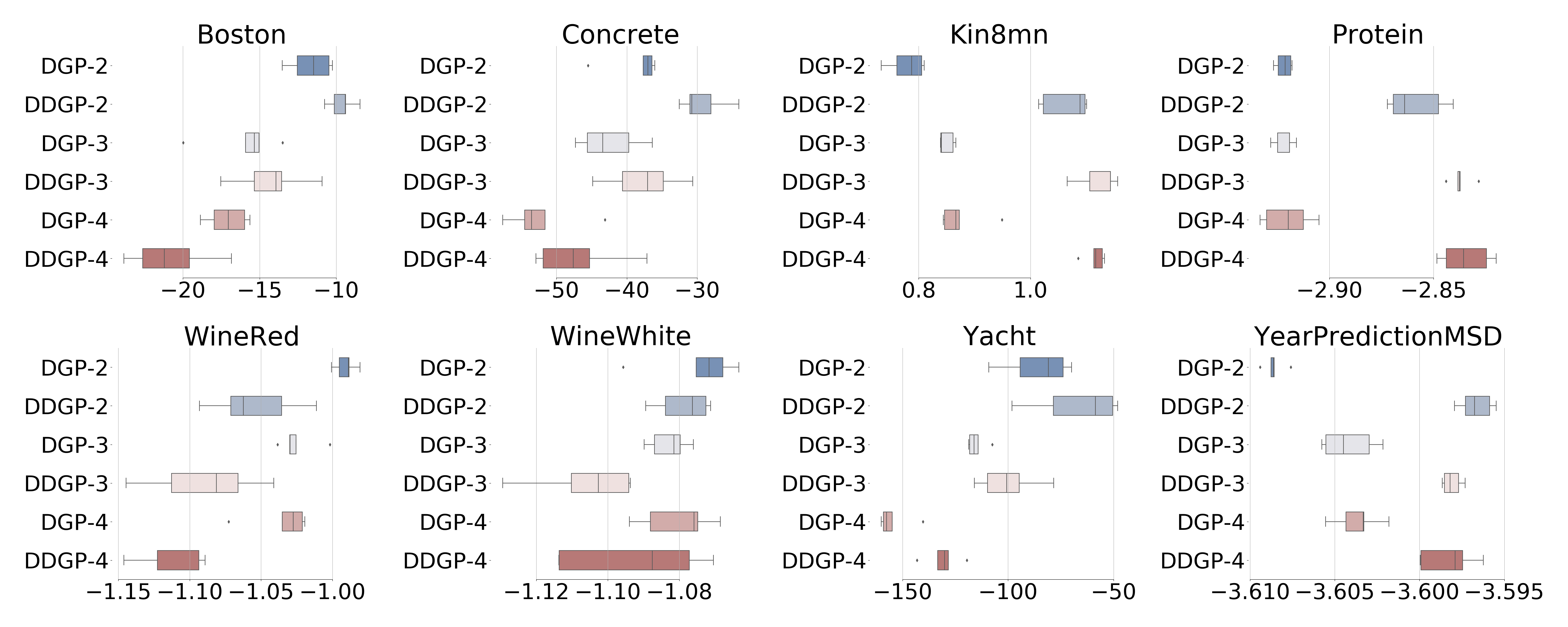}}
    \quad
    \subfigure[Test log-likelihood classification datasets]{\includegraphics[width=0.48\textwidth]{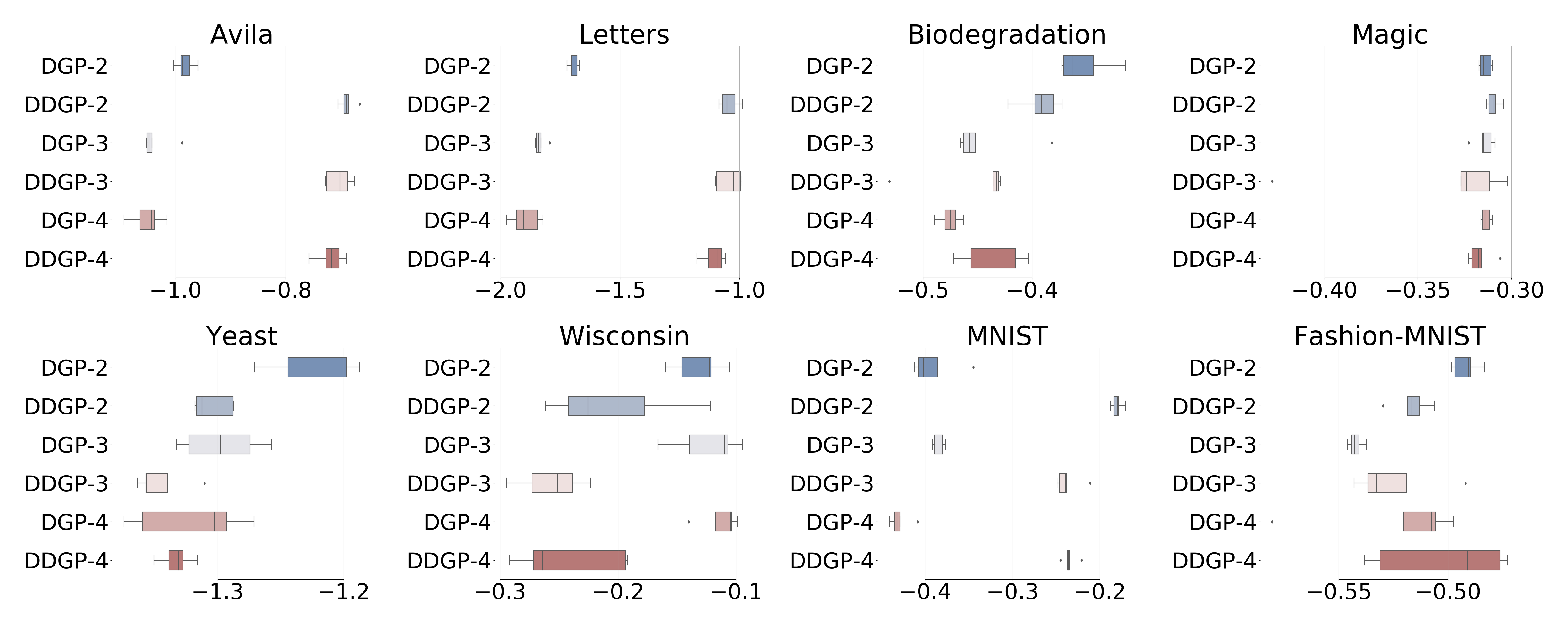}}
    \caption[Results on UCI datasets.]{\textbf{Results on UCI datasets.} \textbf{Top:} Test log-likelihood values on regression datasets. \textbf{Bottom:} Test log-likelihood values on classification datasets. \textbf{All subplots}: 10 different runs of each model with different initialization seeds are taken into the composition of each boxplot. Higher values (to the right) indicate better model fit.}
    \label{fig:results_uci_datasets}
\end{figure}

In terms of predictive log-likelihood values on test data, DDGPs generally surpass their Euclidean only counterpart on regression datasets, with diminished performance seen on ``WineRed'' and ``WineWhite''. In terms of predictive testing set results on classification datasets, the behaviour is roughly similar, with noticeable improved performance of DDGPs on ``Avila'',  ``Letters'' and ``MNIST''. Conversely, the same can be said for DGPs on ``Wisconsin''. On remaining classification datasets the behaviour is similar (Figure \ref{fig:results_uci_datasets}).

\paragraph{Out-of-distribution detection.}\label{sec:outlier_image}

In this subsection we adapt DDGP to the architecture introduced in Deep Convolutional GP (DCGP) \citep{blomqvist2018deep}, obtaining the Distributional Deep Convolutional GP (DistDCGP). The goal is to investigate their OOD detection capabilities on real data. To quantify the departure from normative data, we need a scalar measure encompassing the uncertainty present across all classes. For this we use the distributional differential entropy introduced in equation \eqref{eqn:differential_entropy_formula}. This has the effect to measure the overall distributional uncertainty in logit space, with higher values representing more uncertainty.

In these experiments we assess the capacity of our model to detect dataset shift by training it on MNIST and looking at the uncertainty measures computed on the testing set of MNIST and NotMNIST and SVHN. The hypothesis is that we ought to see higher differential entropy for distributional uncertainty for the digits stemming from a wide array of fonts present in NotMNIST and SVHN as none of the fonts are handwritten. We additionally compare with FashionMNIST, to discern if models are capable of flagging anomalous inputs when deployed in a user-facing system.

\begin{table}[htb]
\begin{center}
 \begin{tabular}{r r |c  c | c  c | c  c } 
 \toprule
 Model & \makecell{ Hidden \\ Layers } &  \multicolumn{2}{c}{\makecell{ MNIST vs. NotMNIST \\ AUC }}  & \multicolumn{2}{c}{\makecell{ MNIST vs. SVHN \\ AUC }} & \multicolumn{2}{c}{\makecell{ MNIST vs. FashionMNIST \\ AUC }} \\ [0.5ex] 
 \midrule
 \multicolumn{2}{c}{Inducing Points} & 50 & 100 & 50 & 100 & 50 & 100 \\
 \midrule
DCGP     & 1 & $0.71$          & $0.69$          & $\textbf{1.0}$ & $\textbf{1.0}$   & $\textbf{0.98}$ & $0.98$ \\ 
DistDCGP & 1 & $\textbf{0.87}$ & $\textbf{0.95}$ & $\textbf{1.0}$ & $\textbf{1.0}$   & $\textbf{0.98}$ & $\textbf{1.0}$  \\ 
 \midrule
DCGP     & 2 & $0.64$          & $0.59$          & $0.97$         & $0.99$           & $0.94$          & $0.92$ \\ 
DistDCGP & 2 & $\textbf{0.97}$ & $\textbf{0.84}$ & $\textbf{1.0}$ & $\textbf{1.0}$   & $\textbf{1.0}$  & $\textbf{0.98}$  \\ 
 \midrule
DCGP     & 3 & $0.64$          & $0.70$          & $0.97$         & $0.89$           & $0.94$          & $0.84$ \\ 
DistDCGP & 3 & $\textbf{0.95}$ & $\textbf{0.86}$ & $\textbf{1.0}$ & $\textbf{1.0}$   & $\textbf{0.96}$ & $\textbf{0.92}$ \\ 
\bottomrule
\end{tabular}
\caption[OOD detection results on MNIST using standard architectures.]{\textbf{OOD detection results.} Performance of OOD detection based on distributional uncertainty differential entropy. Models with different inducing point numbers in hidden layers are trained on MNIST (normative data).}
\label{tab:results_ood}
\end{center}
\end{table}

We notice an abrupt loss in AUC scores on SVHN and FashionMNIST for 3 layers DCGP when transitioning from 50 to 100 inducing points. This reinforces the observed pathology that increasing inducing points results in loss of OOD capabilities. On NotMNIST we notice a substantial discrepancy between the two in favor of DistDCGP (Table \ref{tab:results_ood}). Overall, DistDCGP manages to surpass DCGP in any given comparison regardless of network architecture. Further figures showcasing differences in their distributional differential entropy histograms are provided in Appendix \ref{apd:additional_ood_detection}.

\section{Discussion }


We have shown under a set of assumptions that total uncertainty can be correctly propagated under the DSVI framework introduced in \cite{salimbeni2017doubly} only in the case that the first order derivative of the predictive mean function attains maximum values at the point where previously identified OOD points get mapped to. Intuitively, total uncertainty for OOD points will be higher compared to in-distribution points, if the parametric component of the current layer's SVGP provides a diverse set of function samples when evaluated at points where the OOD point got mapped in the previous layer. 

The collapse of distributional uncertainty as we increase the number of inducing points in the hidden layers is caused by the progressive mapping of data points which in lower layers were flagged as OOD to regions close to inducing points in the next layer in the case that in-distribution data points are mapped in intervals centred around zero in hidden layer space. This leads to the deactivation of $h(\cdot)$ (\ref{eqn:distributional_uncertainty_svgp}) (non-parametric component in SVGP) in all layers, except the first. Hence, in these situations DGPs rely just on the variance stemming from $g(\cdot)$ (\ref{eqn:epistemic_uncertainty_svgp}) (parametric component of SVGP). 


The PCA based projections stemming from the training data as introduced in \cite{salimbeni2017doubly} help to solve these aforementioned issues. However, these also come with a hidden cost, as the PCA projections will wildly extrapolate the regions of low distributional uncertainty in directions perpendicular to the eigenvectors in the case of high-dimensional datasets that are projected down in hidden layer space (see experiments on ``banana" dataset). Our proposed framework solves this problem as the Wasserstein-2 part of our hybrid kernel explicitly takes into account differences between the variance of hidden layer SVGPs. Intuitively, even if the mean component of the SVGP is dominated by the PCA projection, which could coincide with the projection values of the training set, if the variance component is higher then the Wasserstein-2 distance will highlight this domain shift with higher distributional uncertainty. Besides this intuition, we have also provided theoretical guarantees that total uncertainty is higher for OOD points compared to in-distribution points for DDGPs in the case that the inducing locations' variance is close in distribution to total uncertainty of in-distribution points at that hidden layer. This is a scenario which happens in practice as the variance of inducing locations will be optimized to match that of the in-distribution points. Future work should explore methods to explicitly enforce this match in distribution between inducing locations and in-distribution predictive distributions. Recent work \citep{ustyuzhaninov2019compositional, ober2021global} has proposed to propagate the initial inducing locations in input space through the hierarchy, thus obtaining predictive distributions for these ``global" inducing points at each layer. Under this construction, the distribution match between inducing locations and in-distribution points will be guaranteed at each layer in DDGPs. 

Besides theoretical guarantees on correct propagation of distributional uncertainty in DDGPs, we have also shown that DDGPs have theoretical guarantees to better guard against \emph{feature-collapse}. Moreover, the Wasserstein-2 distance based on predictive moments will cause DDGPs to be smoother compared to DGPs (see Figure \ref{fig:smoothness_banana}). Due to all these attractive properties of DDGPs, it is perhaps unsurprising that on OOD detection tasks our proposed model has managed to surpass baseline models. Additionally, we analyze their sensitivity to input perturbations (see Appendix \ref{apd:input_perturbation}). 

Moreover, we discover another pathology pertaining to DGPs, respectively that when deploying very wide (i.e., high-dimensional hidden layers) DGPs, samples can collapse to a thin hyper-sphere, thus requiring increasing inducing locations to cover it. In Appendix \ref{apd:fail_high_dimensions} we offer more details on this pathology, also showing that hybrid kernels can guard against it.

In conclusion, we have shown that DDGPs are better suited for outlier detection on both toy and real datasets, while also showing slight improvements at testing time in medium and large scale datasets. Future work should consider adapting DDGPs to safety-critical domains such as biomedical image segmentation  \citep{czolbe2021segmentation} or semantic segmentation \citep{franchi2020one}.

\bibliographystyle{unsrtnat}
\bibliography{references}  






\end{document}


\title{}
\appendix

\section{Experiments Details} \label{apd:uci_details}

\paragraph{Standard architecture.} For all UCI datasets we randomly selected 20$\%$ as the testing set, with the remainder being used for training. All implementations use the RBF kernel with automatic relevance determination for  DGPs, whereas for DDGPs the standard RBF kernel on Euclidean space is used just for the first layer, with the remaining layers using the hybrid kernel. In terms of model architecture, we use 5 hidden units for each hidden level, with 100 inducing points. All models are optimized until convergence with a mini-batch of size 32. Results are provided for 2,3 and 4 layers. 

\paragraph{Convolutional architecture.} In terms of model architecture, we use 3 hidden layers with 5 hidden units for each hidden level, with 250 inducing inputs. All models are optimized until convergence with a mini-batch of size 32 using the training set of MNIST. At testing time we compare the distributional differential entropy between testing set of MNIST (i.e., normative data) and OOD datasets (i.e., NotMNIST, SVHN and Fashion-MNIST) using AUC scores to discern if models are able to separate OOD points from in-distribution points.

\paragraph{Euclidean Kernels.}

All squared exponential kernels used a lengthscale hyperparameter per input dimension, initialized to 1.351. We initialize the variance of the kernel with the same value.

\paragraph{Hybrid Kernels.}

All hybrid kernels used a lengthscale hyperparameter per input dimension, initialized to 1.351. We initialize the variance of the kernel with the same value.

\paragraph{Euclidean inducing points \& DGP approximate posterior.}

For all experiments we used 100 inducing points for the first layer, which is mapping from input space to first hidden layer space. We initialize the inducing point locations to the k-means of the training data. For the approximate posterior of the inducing point values we 
initialized the mean by uniformly sampling between -2.0 and 2.0. The Cholesky decomposition of the variance was initialized to be the diagonal matrix multiplied by $1e^{-5}$.

\paragraph{Distributional inducing points \& DDGP approximate posterior.}

For all experiments we used 100 inducing points in the hidden layers of the multi-layered GP architectures. Since these inducing point are operating in Wasserstein space, the inducing point locations are actually distributions in this case. We initialize the mean of the inducing points locations by uniformly sampling between -2.0 and 2.0. The initial variance of the inducing point locations is taken to be the diagonal matrix multiplied by $1e^{-1}$. This low initialization of the variance term was chosen so that the DDGP will implicitly learn to produce low variance terms at all hidden layers for data points inside the data manifold. For the approximate posterior of the inducing point values, we take the same initialization as in the case of DGPs. 

\paragraph{Optimization on UCI datasets.}

All parameters were optimized using the Adam optimizer with a learning rate of 0.001. We used a batch size of 32 and trained for 50,000 iterations or until convergence.

\paragraph{Likelihood.} For regression tasks the likelihood variance was initialized to 1.0

\section{Proof of Theorem \ref{thm:positive_definite}} \label{apd:pos_def_kernels}

\cite{thi2019distribution} provides a detailed proof that a generalized radial basis function kernel with Wasserstein distances is positive definite. We recall here the main results.

\begin{theorem}[Schoenberg's Theorem]
    Let $F:\mathbb{R^{+}} \Rightarrow \mathbb{R^{+}}$ be strictly
    monotonically decreasing, and $\mathbb{K}$ a negative definite kernel.  Then $(x,y) \Rightarrow F(\mathbb{K}(x,y))$ is a positive definite kernel.
\end{theorem}

\begin{proposition}
    The function $W_{2}^{2H}$ is a negative definite kernel if and only if $0 < H \leq 1$. 
\end{proposition}
A complete proof of this proposition can be found in \cite{kolouri2016sliced}, where they prove this result for absolutely continuous distributions in $W_{2}(\Omega)$, where $\Omega \subset \mathbb{R}$ is a compact subset.
Theorem \ref{thm:positive_definite} follows from Schoenberg's Theorem and the above proposition.

\section{Isotropic kernel properties in (Distributional) Deep Gaussian Processes}\label{apd:fail_high_dimensions}

\paragraph{Point-mass inducing points are doomed to fail in high-dimensional spaces.} 

This section is written from a purely theoretical perspective in the hypothetical scenario that DeepConvGP architectures or more generally DeepGP architectures are scalable to high dimensions. Operating in this scenario, we can take the case of DGP, with the following analysis being easily extendable to DeepConvGP architectures. 
We first introduce a general result from the concentration of measure in high-dimensional spaces.
\begin{definition}
     Multivariate spherical Gaussian distributions with a different mean $u_{k}$ and  variance $\sigma_{k}^{2}$ for each dimension has the following p.d.f:
    \begin{equation}
        p(x) = \frac{1}{\left(2\pi\right)^{d/2}\prod_{k=1}^{K}\sigma_{k}} \exp{ \left[ - \sum_{k=1}^{K}\left(\frac{(x_{k}-u_{k})^{2}}{2\sigma_{k}^{2}} \right) \right]}
    \end{equation} 
\end{definition}

\begin{definition}[Gaussian Annulus Theorem] 
For a d-dimensional spherical Gaussian with unit variance in each direction, we have that:
\begin{equation}
    Pr(|x-\sqrt{d}| \leq \beta)\leq 3\exp{-c\beta^{2}}    
\end{equation}
where c is a fixed constant. This results in samples being concentrated in a thin annulus of size 1 with a radius $\sqrt{d}$, centred at zero.
\end{definition}

\begin{figure}[htb]
    \centering
    \includegraphics[width=0.95\linewidth]{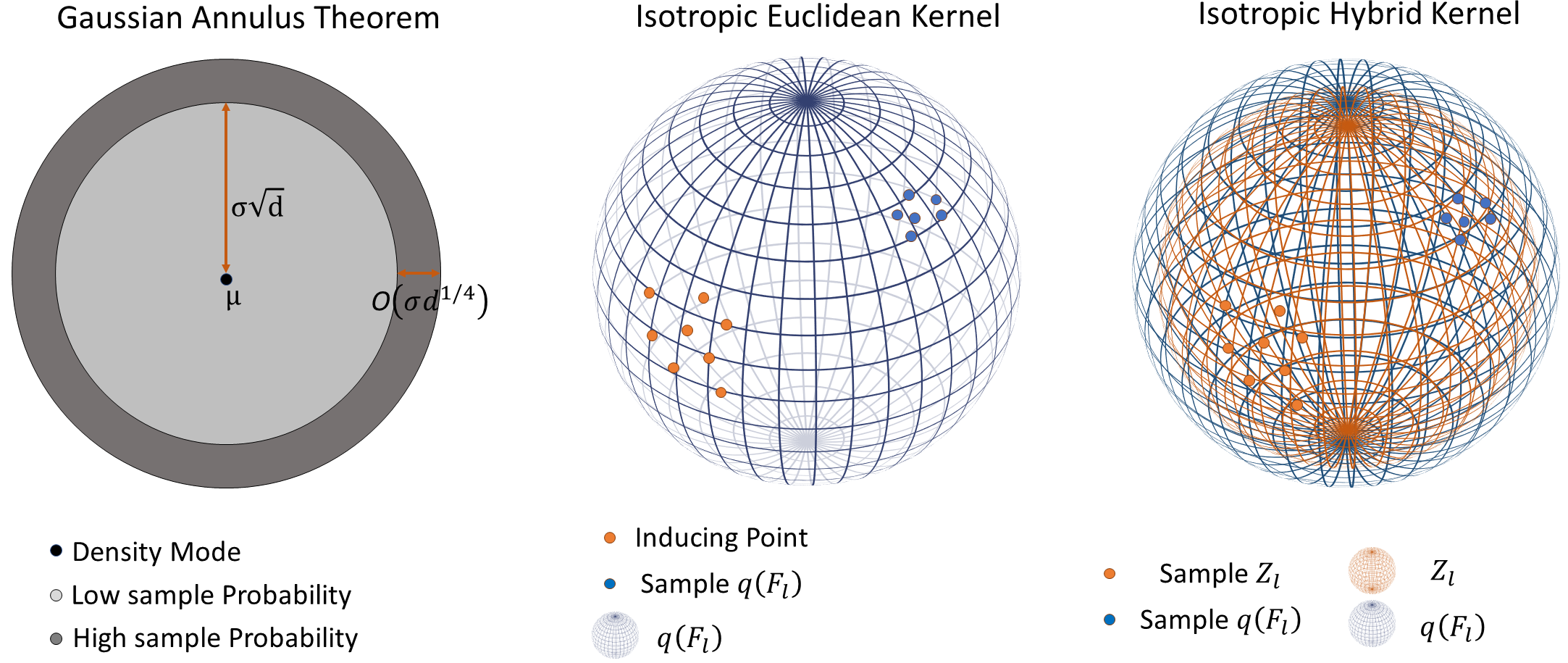}
    \caption[Gaussian Annulus Theorem and behaviour of (D)DGPs in high-dimensional spaces]{\textbf{Gaussian Annulus Theorem and behaviour of (D)DGPs in high-dimensional spaces.} Left: Visual depiction of Gaussian Annulus Theorem; Middle: Hidden layer samples in DGPs concentrate on a thin hypersphere, with inducing points having to cover the entire surface; Right: Hidden layer samples in DDGPs concentrate on a thin hypersphere, but due to distributional inducing points and Wasserstein-2 part of hybrid kernel this will not result in the necessity to cover entire hypersphere.}
    \label{fig:gaussian_annulus}  
\end{figure}

If we consider the posterior over the l-th layer of a DGP $q(F_{l}) \sim \mathcal{N}(\tilde{U_{l}}, \tilde{\Sigma_{l}})$ be of size $d_{l}$ where we consider the dimensionality to be sufficiently large so that the above theorem holds, with the added assumption that the d-dimensional $\tilde{\Sigma_{l}}$ values for data point $x$ are roughly equal. Then we can consider samples from $q(F_{l})$ as $\{F_{l}^{1} - \tilde(U_{l}), \cdots,  F_{l}^{S}- \tilde(U_{l}) \}$, which are individually spherical Gaussians of mean zero and $\tilde{\Sigma_{l}}$ variance. Thereby, by applying the Gaussian Annulus Theorem we can see that samples, which are used in the DSVI framework commonly used in DGPs \citep{salimbeni2017doubly}, concentrate around an annulus centred at $\tilde{U_{l}}$ with the radius being $\sqrt{d_{l}\Sigma_{l}}$. Hence, the Euclidean distance with respect to the centre increases for higher values of $d_{l}$.

\begin{remark}
    This will only constitute an issue with isotropic kernels, as the locations $Z_{l}$ will have to cover an increasing surface of the $d_{l}\text{-sphere}$ created by samples from $q(F_{l})$. This will constitute less of a problem for hybrid kernels (isotropic as well for Euclidean part) present in DDGPs, as the Wasserstein-2 distance between the multivariate normal stemming from $q(F_{l})$ and that of $Z_{l}$ will be reliable.
\end{remark}

\section{Additional Results on Out-of-Distribution Detection for DDGPs}

\paragraph{Deep Convolutional Gaussian Processes architecture.}

We use 250 inducing points at each layer. Every hidden layer is taken to have five channels. We use the Adam optimizer starting with a learning rate of 0.01 and decreasing by 0.1 every 20,000 iterations. We optimized all models trained on MNIST for 100,000 iterations.

\begin{figure}[!htb]
    \centering
    \includegraphics[width=\linewidth]{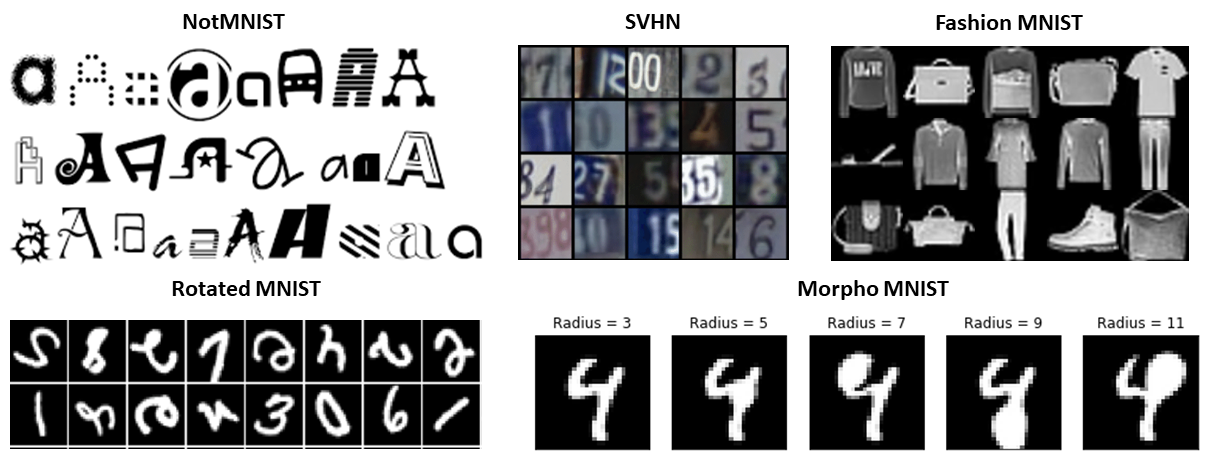}
    \caption[Examples of OOD datasets]{\textbf{Examples of OOD datasets.}}
    \label{fig:ood_examples}  
\end{figure}

Visual examples of OOD datasets used throughout thesis are given in Figure \ref{fig:ood_examples}.

\subsection{Sensitivity to input perturbation} \label{apd:input_perturbation}

\paragraph{Morpho-MNIST.}

For better understanding the morphological changes occurring with swelling of different radii we provide some examples for digit 4 (see Figure \ref{fig:ood_examples}). From Figure \ref{fig:morpho_mnist_ddgp} we can see that for 50 inducing points at each hidden layer, DGPs and DDGPs have roughly similar behaviour. However, for 100 inducing points we can notice that DGPs (i.e., for 2 and 4 layers) lose their ability to notice these morphological changes in the input by virtue of inspecting their distributional differential entropy. The same cannot be said for DDGPs, which regardless of inducing point numbers provide a steady upward trend in distributional differential entropy as swellings of higher intensity are applied on the original digit. This experiment goes hand in hand with our observation on toy datasets (see Figure \ref{fig:visual_collapse_of_distributional_variance_num_inducing}), where we have seen that by increasing the number of inducing points the distributional uncertainty in zero-mean function DGPs can collapse.

\begin{figure}[!htb]
    \centering
    \subfigure[50 inducing points]{\includegraphics[width=\textwidth]{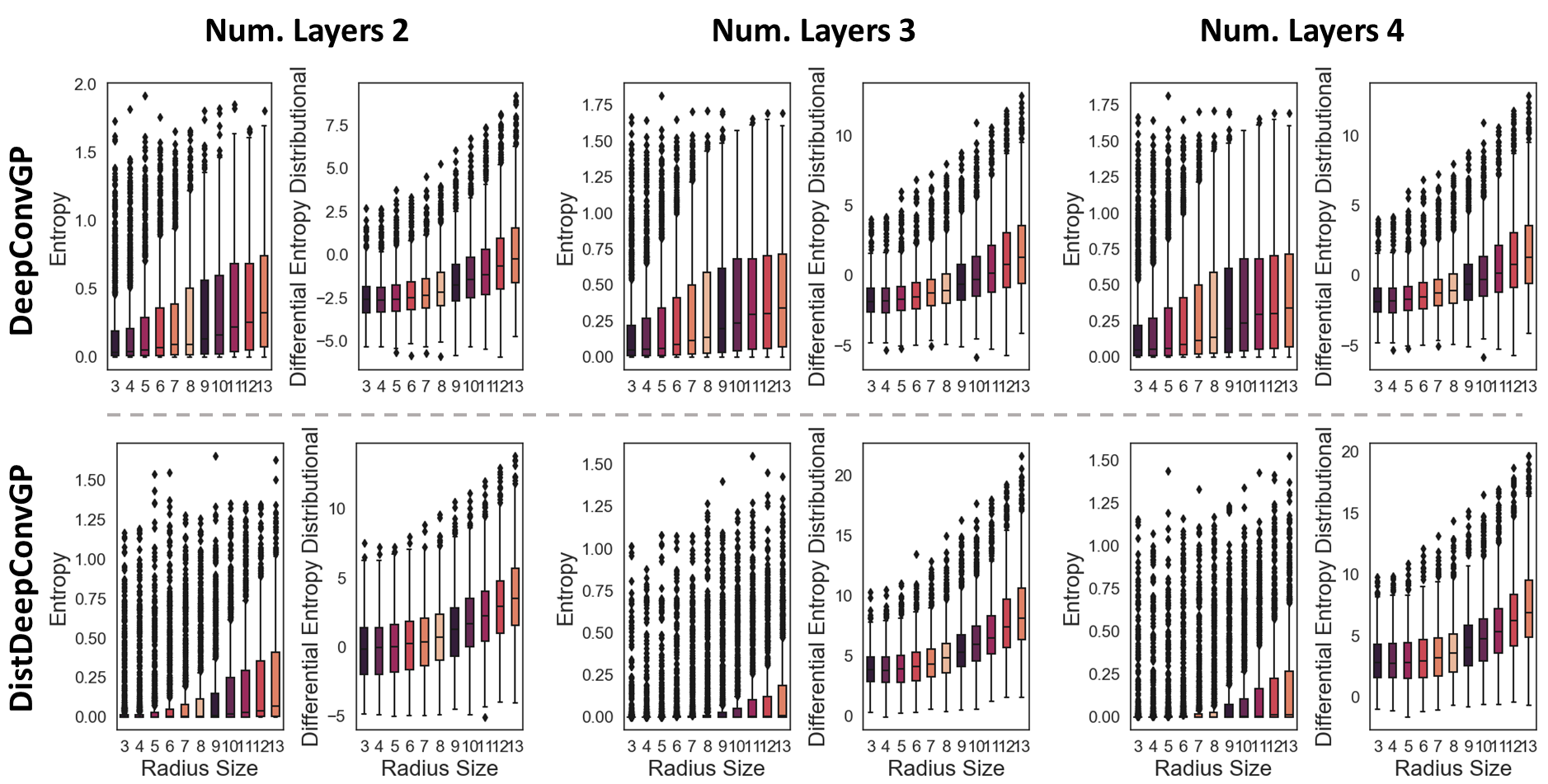}}
    \quad
    \subfigure[100 inducing points]{\includegraphics[width=\textwidth]{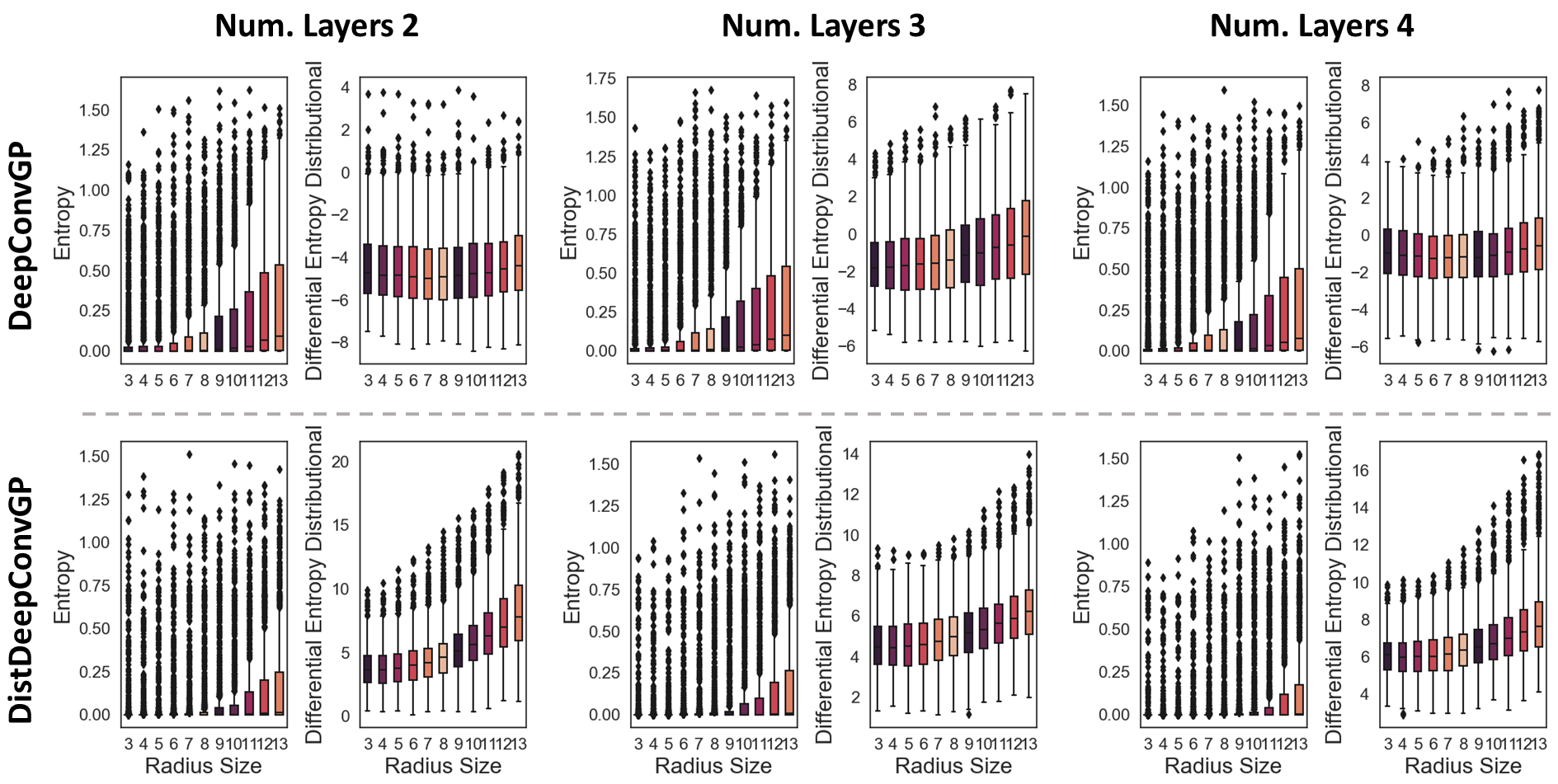}}
    \caption[Morpho-MNIST]{\textbf{Sensitivity to morphological deformations.} Histograms of predictive entropy and distributional differential entropy for varying degrees of morphological deformation applied on MNIST digits. Higher values indicated an increased uncertainty.}
    \label{fig:morpho_mnist_ddgp}  
\end{figure}

\begin{figure}[!htp]
    \centering
    \includegraphics[width=\linewidth]{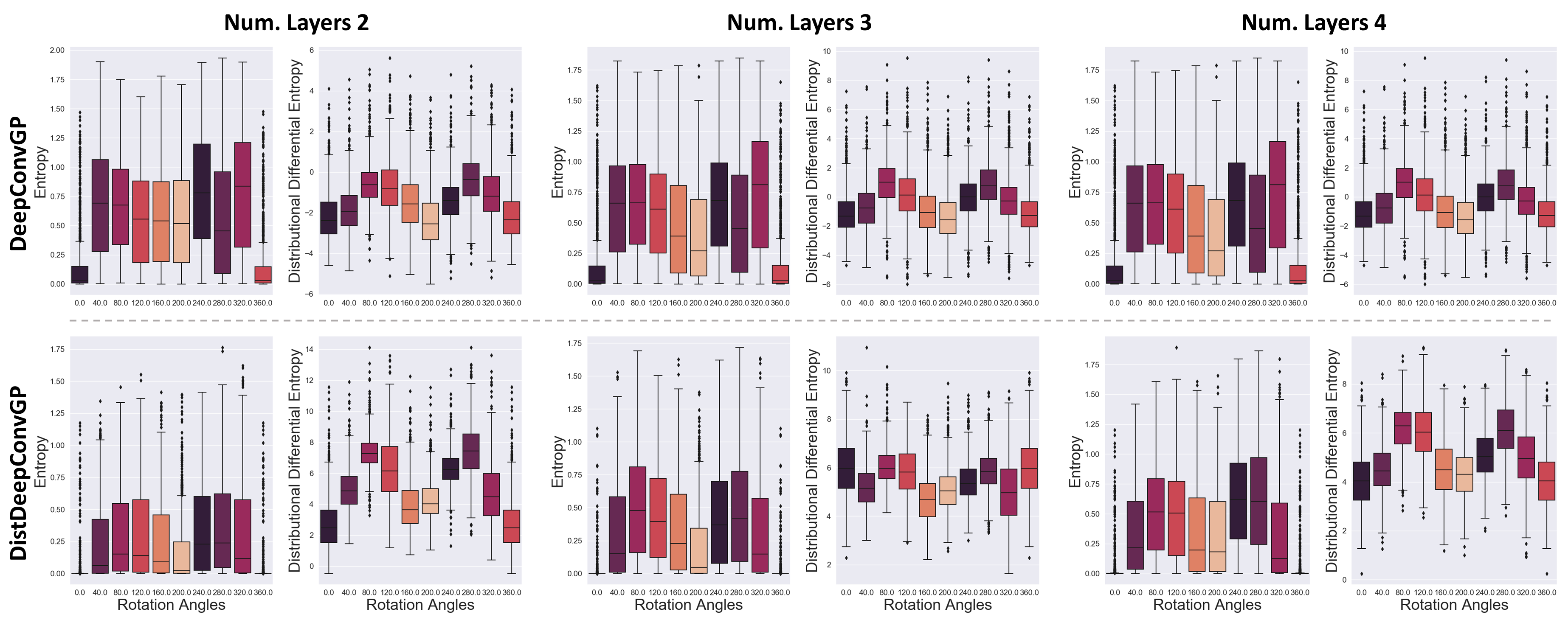}
    \caption[Rotated-MNIST]{\textbf{Sensitivity to image rotation.} Histograms of predictive entropy and distributional differential entropy for varying degrees of rotation applied to digit 8 images from MNIST. Higher values indicated an increased uncertainty. Models trained use 50 inducing points in the hidden layers.}
    \label{fig:rotated_mnist_num_inducing_50}  
\end{figure}

\paragraph{Rotated-MNIST.}

To further assess the sensitivity of our methods, we employ the experiments introduced in \cite{gal2016dropout} by successively rotating digits from MNIST. We expect to see an increase in both predictive entropy and distributional differential entropy as digits are rotated. For our experiment we rotate digit 6. When the digit is rotated by around 180 degrees the predictive entropy and distributional differential entropy should revert back closer to initial levels, as it will resembles digit 9. From Figure \ref{fig:rotated_mnist_num_inducing_50} we can notice that in terms of distributional differential entropy the same patterns are exhibited by both considered models, with slightly less overlapping distributions for DistDeepConvGP.

\subsection{Additional Figures for OOD detection} \label{apd:additional_ood_detection}

In this subsection we provide additional figures pertaining to the OOD detection experiments summarized in Table \ref{tab:results_ood}.

\begin{figure}[!htb]
    \centering
    \subfigure[50 inducing points]{\includegraphics[width=\textwidth]{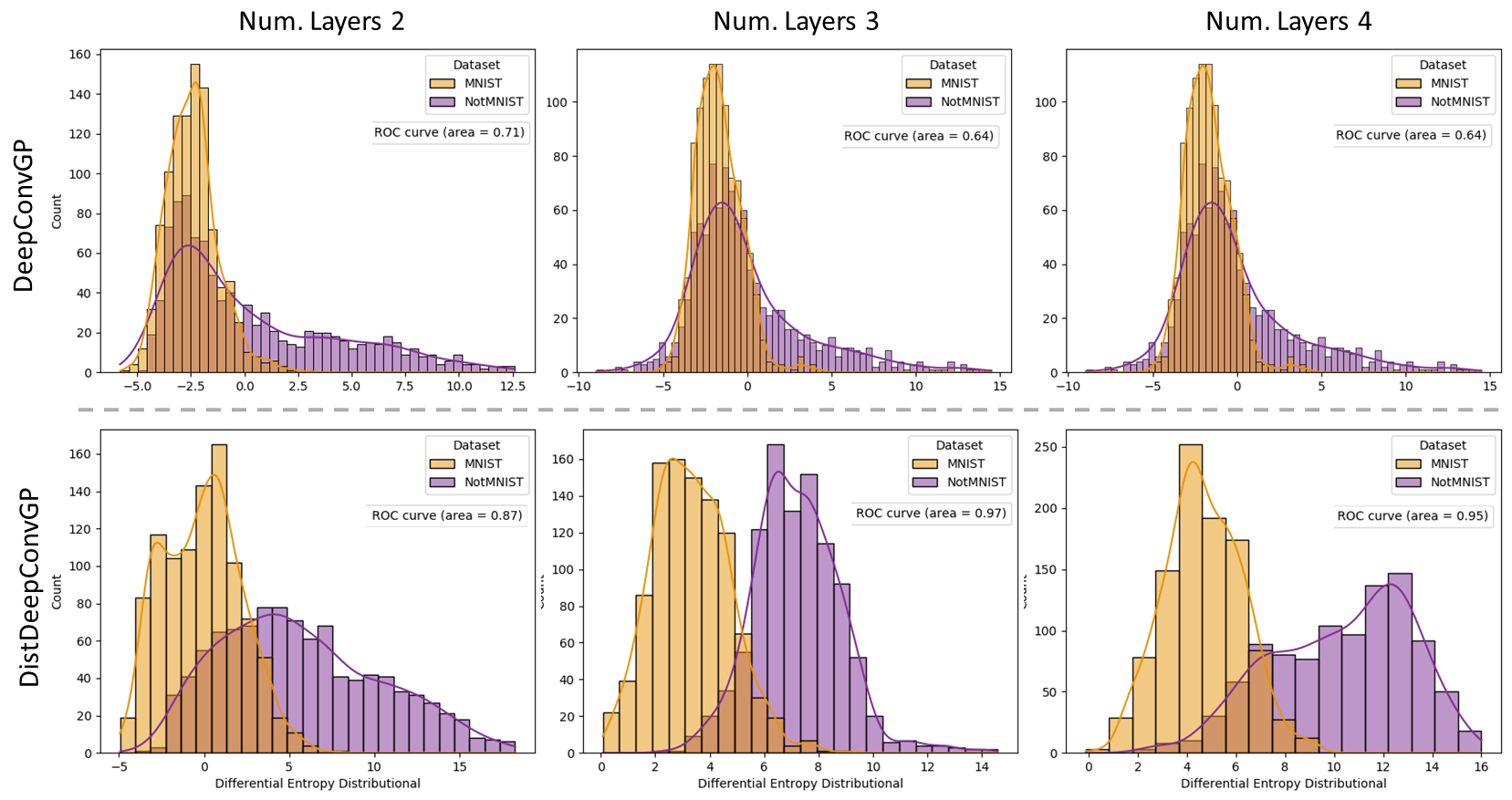}}
    \quad
    \subfigure[100 inducing points]{\includegraphics[width=\textwidth]{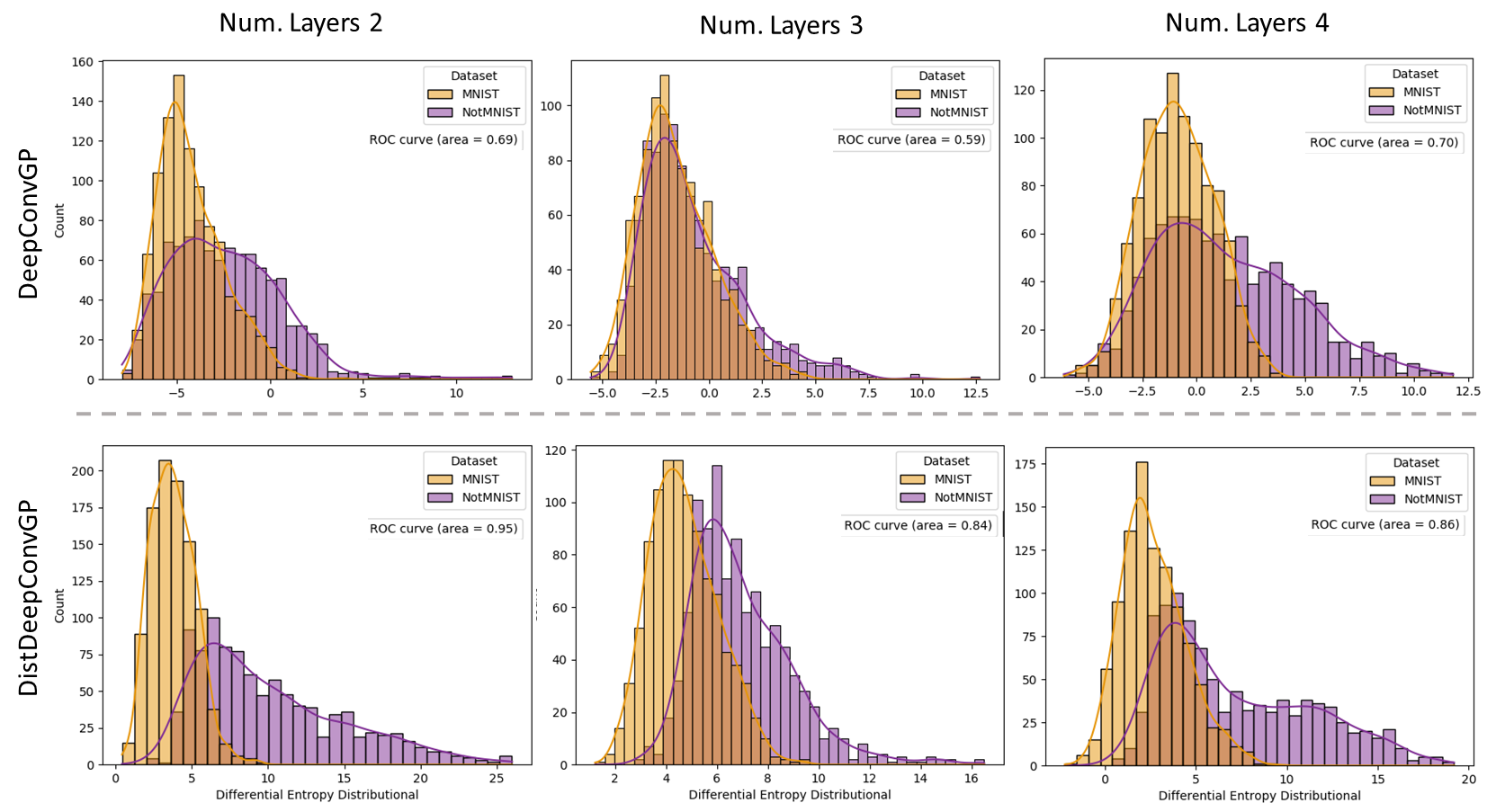}}
    \caption[OOD detection MNIST vs. NotMNIST]{\textbf{OOD detection MNIST vs. NotMNIST.} Histograms of distributional differential entropy computed on MNIST and NotMNIST. Higher values indicate an increased uncertainty.}
    \label{fig:not_mnist_appendix} 
\end{figure}

\begin{figure}[!htb]
    \centering
    \subfigure[50 inducing points]{\includegraphics[width=\textwidth]{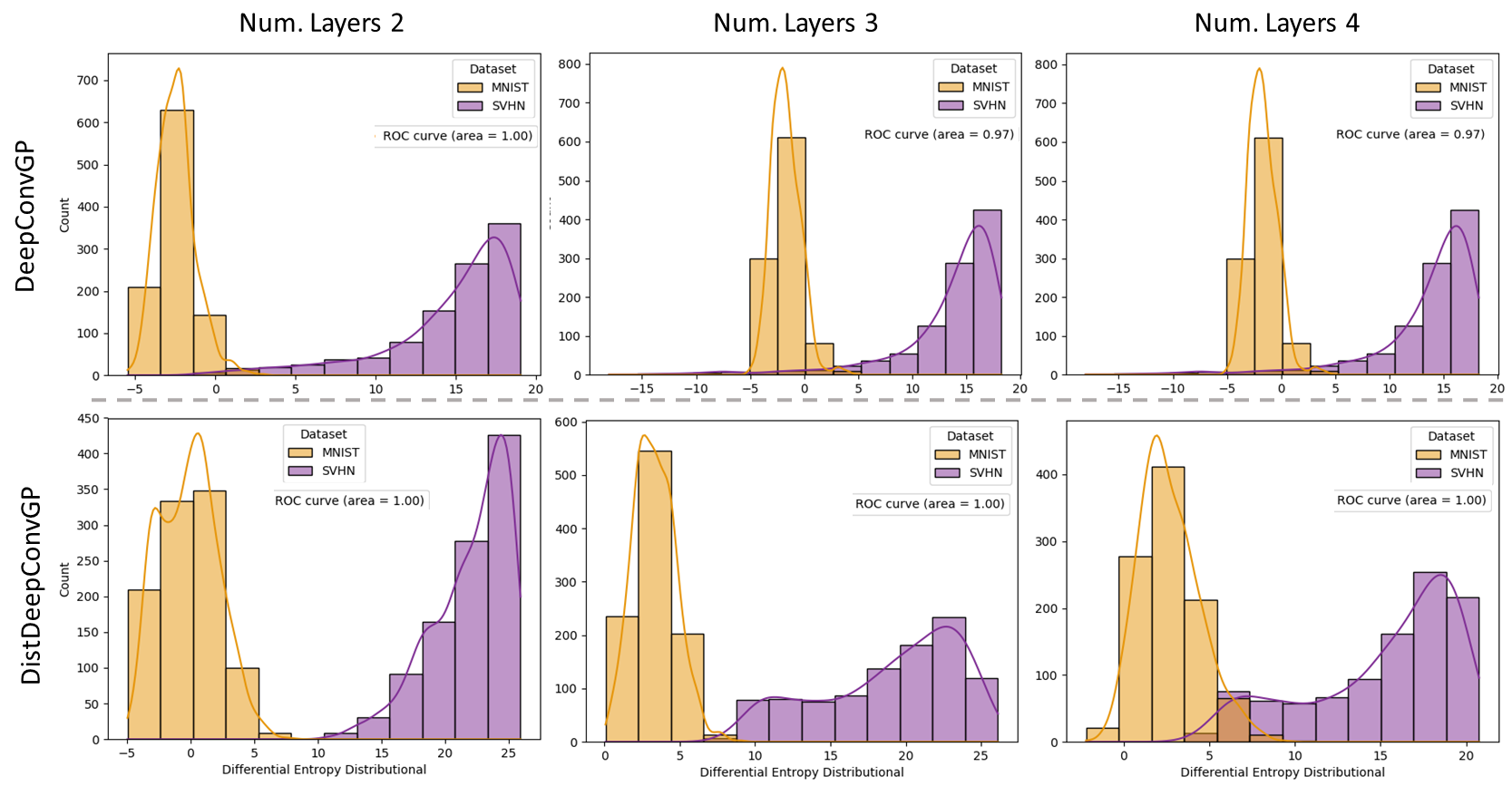}}
    \quad
    \subfigure[100 inducing points]{\includegraphics[width=\textwidth]{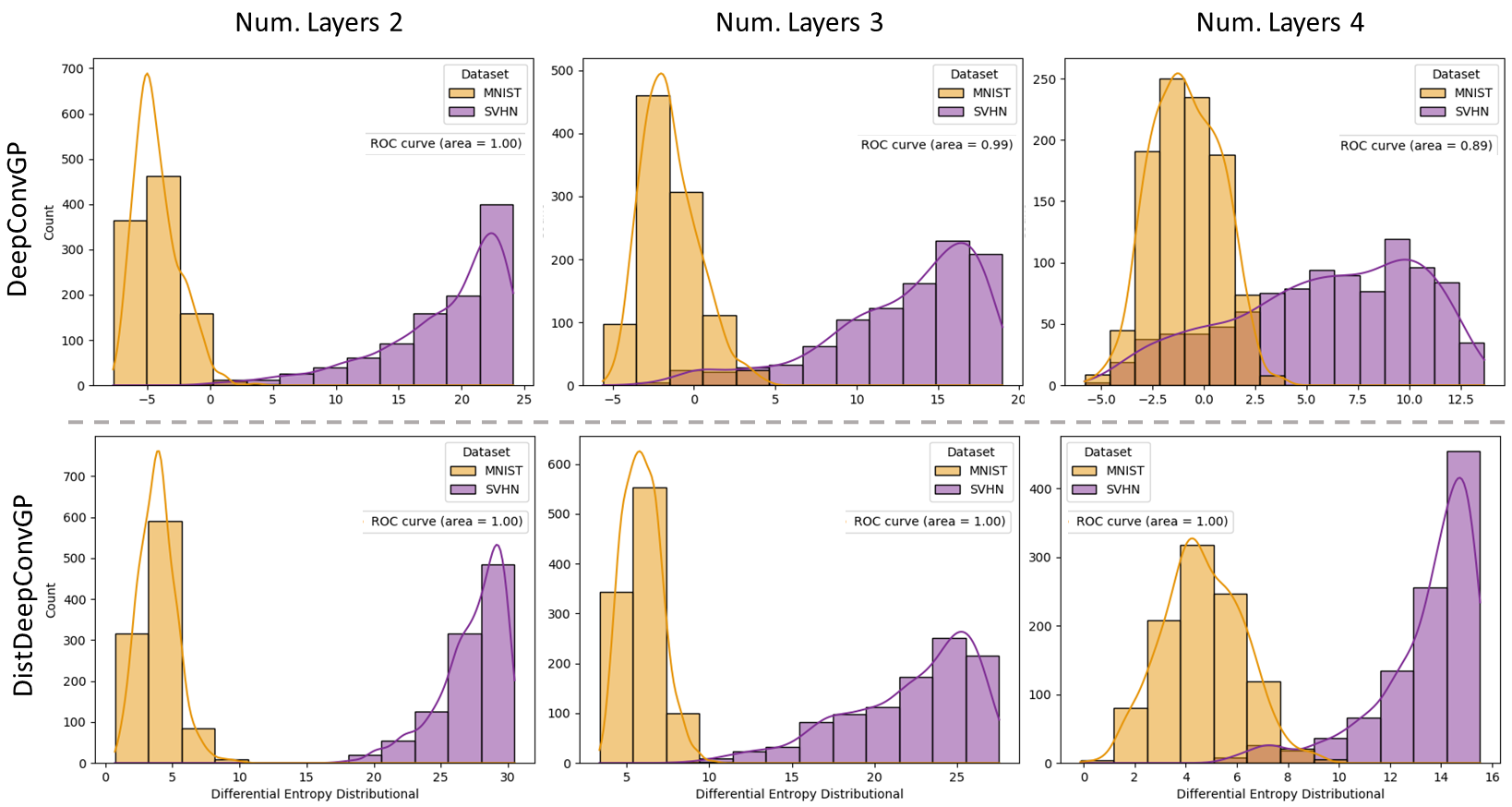}}
    \caption[OOD detection MNIST vs. SVHN]{\textbf{OOD detection MNIST vs. SVHN.} Histograms of distributional differential entropy computed on MNIST and SVHN. Higher values indicate an increased uncertainty.}
    \label{fig:svhn_appendix} 
\end{figure}

\begin{figure}[!htb]
    \centering
    \subfigure[50 inducing points]{\includegraphics[width=\textwidth]{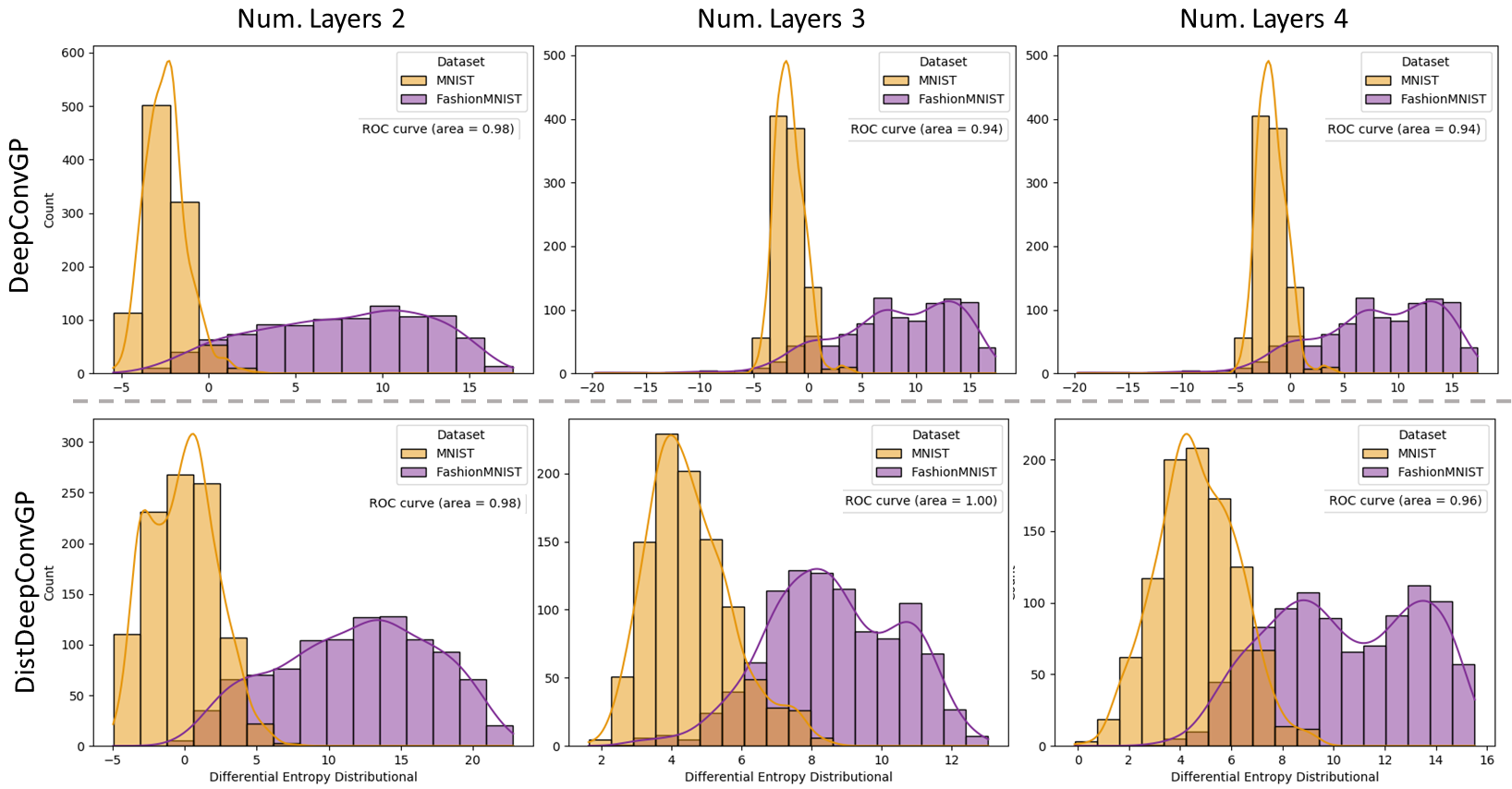}}
    \quad
    \subfigure[100 inducing points]{\includegraphics[width=\textwidth]{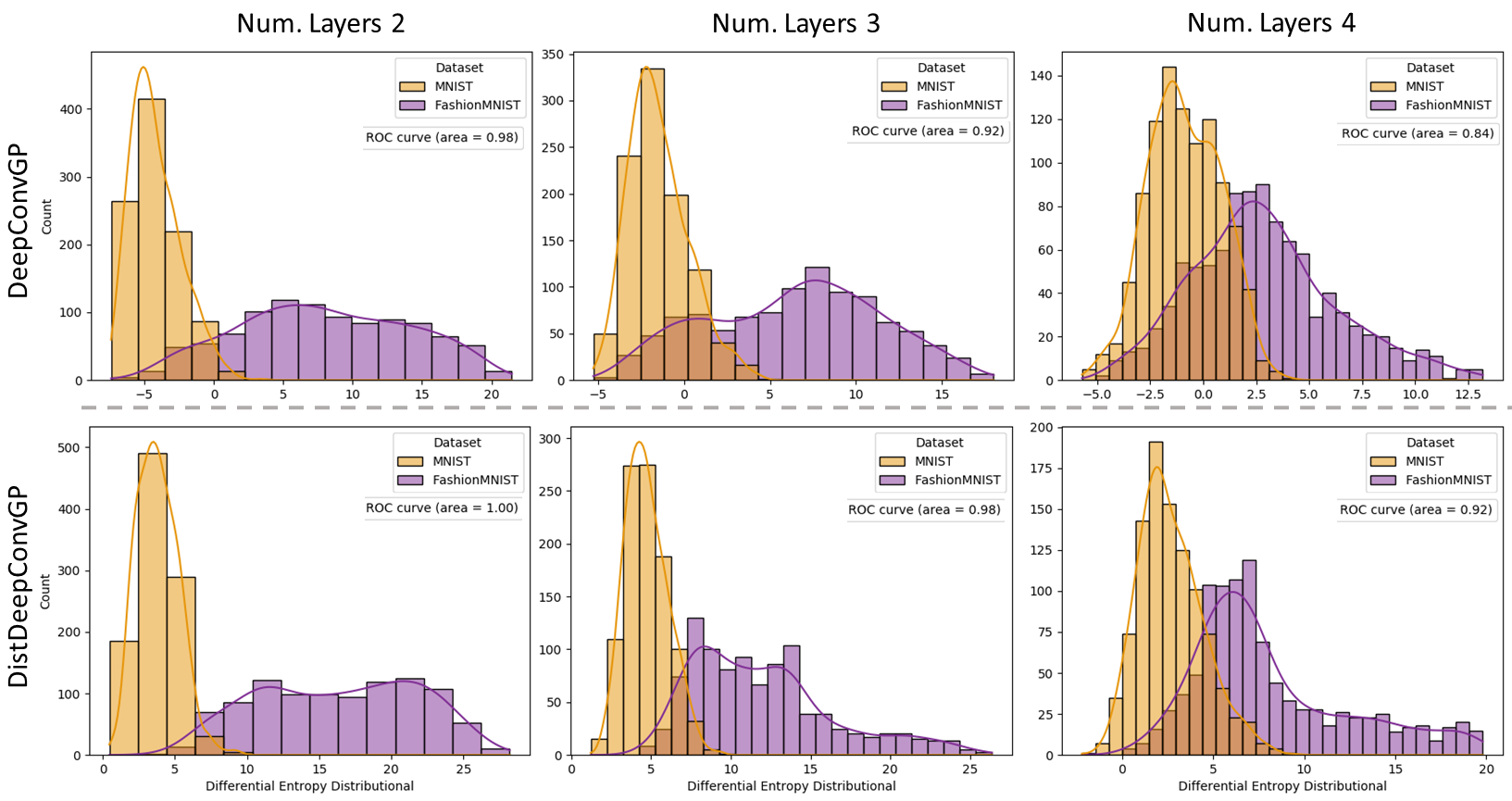}}
    \caption[OOD detection MNIST vs. FashionMNIST]{\textbf{OOD detection MNIST vs. FashionMNIST.} Histograms of distributional differential entropy computed on MNIST and FashionMNIST. Higher values indicate an increased uncertainty.}
    \label{fig:fashion_mnist_appendix} 
\end{figure}

